\documentclass{article}
\def\shownotes{1}

\usepackage[main,final,nonatbib]{neurips_2026}

\usepackage[utf8]{inputenc} %
\usepackage[T1]{fontenc}    %
\usepackage{hyperref}       %
\usepackage{xcolor}         %
\definecolor{darkred}{rgb}{0.55, 0.0, 0.0}
\hypersetup{colorlinks=true, linkcolor=darkred, citecolor=blue,
  urlcolor=blue}
\usepackage{url}            %
\usepackage{booktabs}       %
\usepackage{amsfonts}       %
\usepackage{nicefrac}       %
\usepackage{microtype}      %

\usepackage{tabularx}
\usepackage{booktabs}
\usepackage{pifont}

\usepackage{graphicx}
\usepackage{subcaption}
\usepackage{multirow, makecell}
\usepackage{algorithm,algorithmic}
\let\algoAND\AND
\let\AND\classAND
\AtBeginEnvironment{algorithmic}{\let\AND\algoAND}

\usepackage{amsmath,amsfonts,bm,amsthm,mathrsfs,amssymb}

\theoremstyle{plain}
\newtheorem{theorem}{Theorem}[section]

\newtheorem{lemma}[theorem]{Lemma}

\theoremstyle{definition}
\newtheorem{definition}[theorem]{Definition}
\newtheorem{assumption}[theorem]{Assumption}
\theoremstyle{remark}
\newtheorem{remark}[theorem]{Remark}

\theoremstyle{definition}

\def\eqref#1{equation~(\ref{#1})}

\def\1{\bm{1}}

\DeclareMathAlphabet{\mathsfit}{\encodingdefault}{\sfdefault}{m}{sl}
\SetMathAlphabet{\mathsfit}{bold}{\encodingdefault}{\sfdefault}{bx}{n}

\newcommand{\set}[1]{\{#1\}}

\newcommand{\exarg}[2]{\mathop{\mathbb{E}}_{#1}\left[ #2 \right]}

\newcommand{\norm}[1]{\|#1\|}

\newcommand{\bignorm}[1]{\left\| #1 \right\|}

\DeclareMathOperator{\id}{Id}
\DeclareMathOperator{\tr}{Tr}
\newcommand{\bigtr}[1]{\tr\left[#1\right]}

\newcommand{\normFro}[1]{\norm{#1}_{_{F}}}

\newcommand{\bignorms}[1]{\bignorm{#1}}

\newcommand{\real}{\mathbb{R}}

\usepackage{macro}
\usepackage[backend=biber,
            style=numeric,
            natbib=true,
            maxbibnames=99,
            minalphanames=3,
            uniquename=false,
            uniquelist=false,            
            maxcitenames=2,
            sorting=ynt,
            giveninits=true,                        
            labeldateparts=true,
            sortcites=true,
            backref=false]{biblatex}
\title{Quantifying the Stability of Multi-Step Reasoning via Error Amplification}

\author{%
  Dongyue Li\\
  Northeastern University\\
  Boston, MA\\
  \texttt{li.dongyu@northeastern.edu}
  \And
  Ziniu Zhang\\
  Northeastern University\\
  Boston, MA\\
  \texttt{zhang.zini@northeastern.edu}
  \AND
  Minxuan Duan\\
  Northeastern University\\
  Boston, MA\\
  \texttt{dminxuan@gmail.com}
  \And
  Hongyang R. Zhang\\
  Northeastern University\\
  Boston, MA\\
  \texttt{ho.zhang@northeastern.edu}
}

\begin{document}

\maketitle

\begin{abstract}
We consider the stability of multi-step reasoning processes, which have extensive applications in language models, including chain-of-thought and algorithmic reasoning. While longer sequences of reasoning can improve a model's generation capability at test time, the errors due to intermediate reasoning steps can accumulate in autoregressive generation, and thus grow substantially at the end. In this paper, we ask: \emph{What are the key factors determining the stability of multi-step reasoning?} First, we show an inference error bound governed by the product of \emph{spectral norms of the Jacobians taken through the input space} across generation steps. This product can be viewed as an \emph{error amplification} factor, which could scale exponentially with the number of reasoning steps, serving as a quantitative measure of reasoning stability. Second, we analyze this measure in transformer models trained to predict simple tasks like linear and quadratic functions. We theoretically prove that the transformer model converges to a solution where the stability measure decays, thus yielding nearly zero inference loss over (arbitrarily) long steps. Finally, the stability analysis leads to several algorithmic implications for controlling the stability, through (i) chain-of-thought length compression that reduces the sensitivity of each step, and (ii) quantization-aware training that regularizes the input Jacobian norms. We validate the proposed algorithms by fine-tuning language models on graph-algorithmic reasoning tasks and symbolic state-tracking tasks. Across seven evaluations, our algorithms improve over baseline comparisons by 3.5\% on average, and by 8.2\% for longer-length inputs. Ablation analysis validates that the stability measure is drastically reduced by 3-8$\times$, confirming the regularization effect on the spectral norms of the (input space) Jacobians.
\end{abstract}

\section{Introduction}\label{sec_intro}

Multi-step reasoning enhances language model capabilities by generating intermediate steps before producing a final answer. Specifically, for an input $\mathbf{x}$, a model $f_W$ generates a reasoning sequence $(\mathbf{x}, \hat{y}^{(1)}, \ldots, \hat{y}^{(T)})$ autoregressively. As the error in previous outputs propagates over the steps, this can destabilize long reasoning and cause severe final prediction errors \cite{zhang2024language,ton2025understanding}. However, standard training like supervised fine-tuning does not explicitly control this error accumulation. 
In this work, we theoretically analyze such error amplification across steps, aiming to identify the key factors governing reasoning stability and design training methods that control accumulated errors.

Understanding such error amplification is valuable in many practical problems, especially for scaling test-time computation \cite{geiping2025scaling,ehrlich2025codemonkeys}. One concrete setting is algorithmic reasoning, where models perform the step-by-step computation of classical algorithms like Bellman-Ford~\cite{li2025efficiently}. Models are expected to learn these underlying algorithms to generalize and extrapolate to larger, unseen inputs. However, a mistake in the early steps can corrupt the procedure. In our own experiments, we observe that the test error in a fine-tuned language model (e.g., Qwen-1.5B and Gemma-2B) can scale super-linearly with steps, as shown in Figure \ref{fig_test_loss_scaling_intro}. Empirical methods, such as training models to generate fewer steps \cite{deng2024explicit} or reason in the continuous feature space \cite{hao2025training}, have been proposed to reduce these errors. Yet, the reason for their empirical benefits remains underexplored. 

\begin{figure*}[t!]
    \centering
    \begin{minipage}[t!]{0.33\textwidth}
        \centering
        \includegraphics[width=0.95\linewidth]{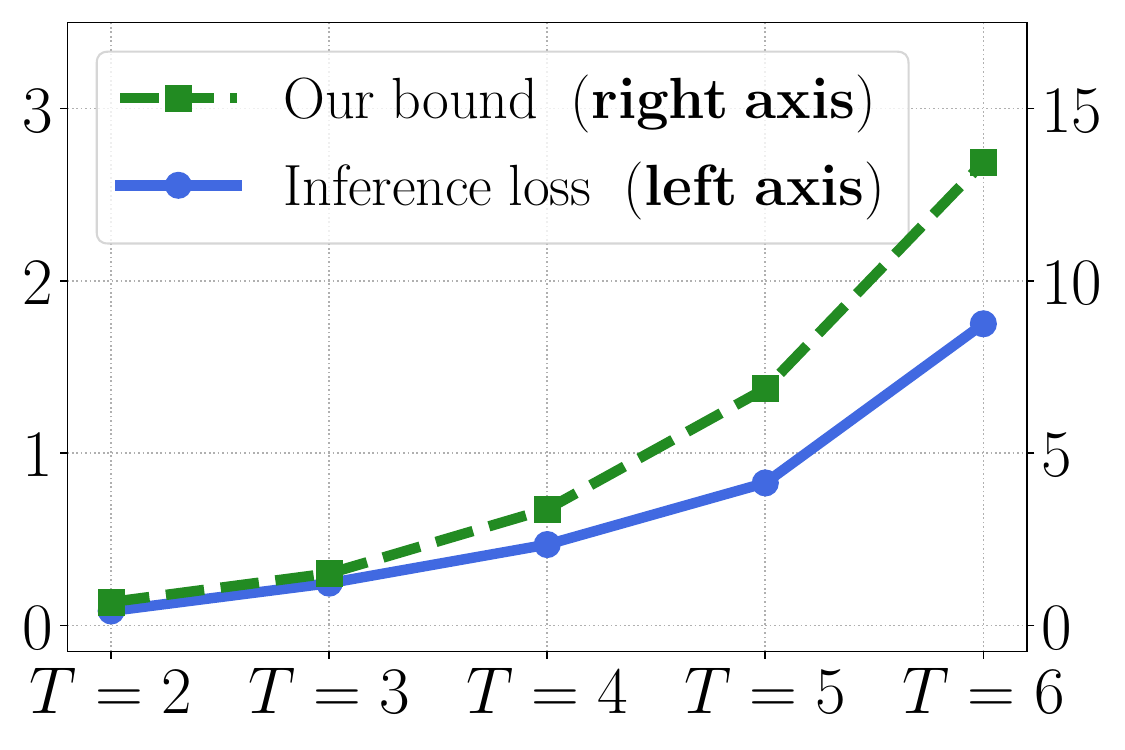}
        \subcaption{Evaluation of our bound values}\label{fig_test_loss_scaling_intro}
    \end{minipage}\hfill
    \begin{minipage}[t!]{0.33\textwidth}
        \centering
        \includegraphics[width=0.95\linewidth]{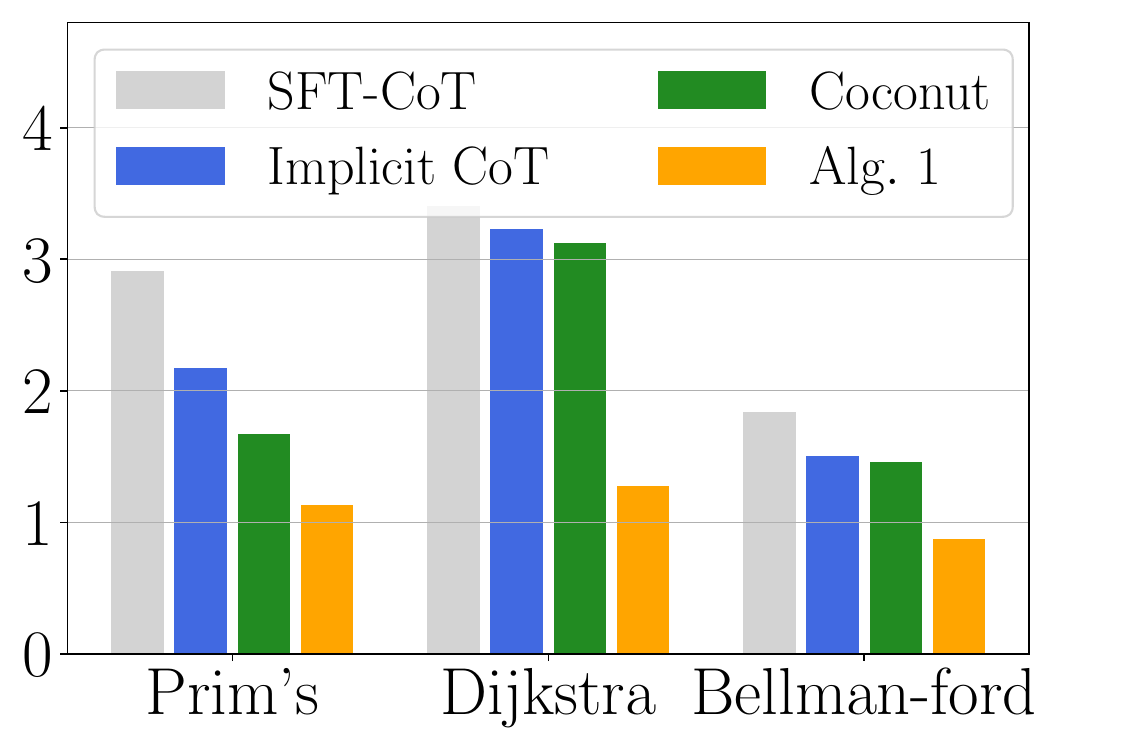}
        \subcaption{Inference losses}\label{fig_test_loss_intro}
    \end{minipage}\hfill
    \begin{minipage}[t!]{0.33\textwidth}
        \centering
        \includegraphics[width=0.95\linewidth]{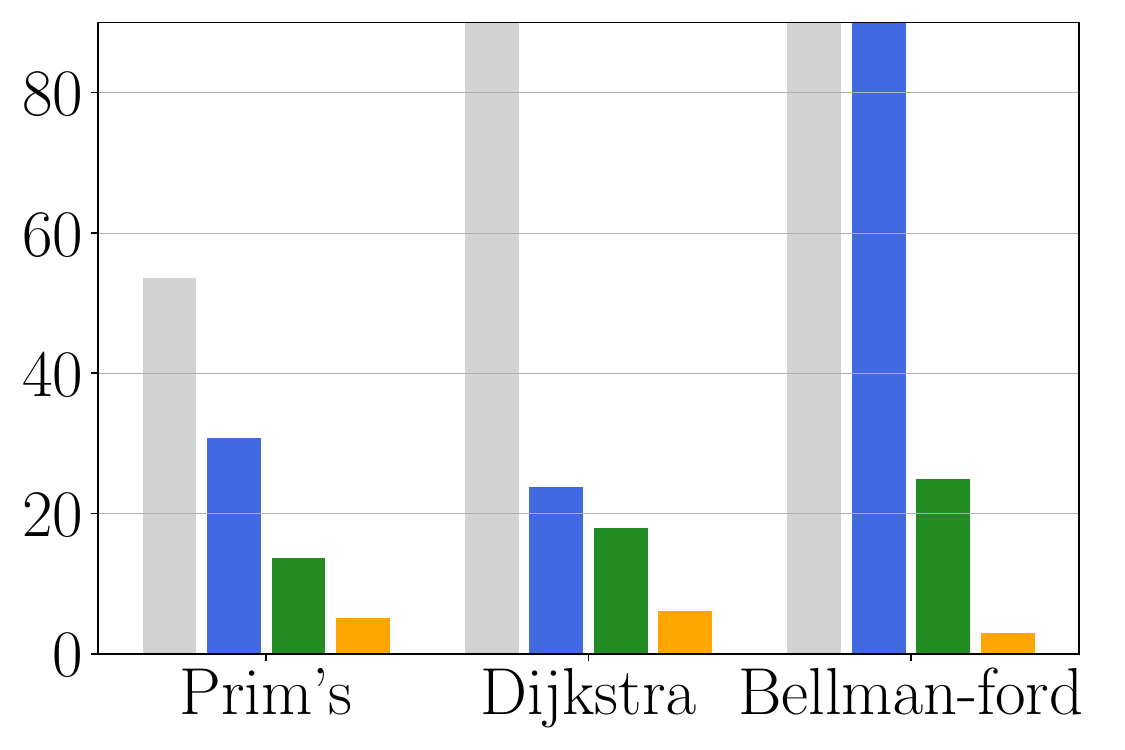}
        \subcaption{Spectral norm of Jacobians}\label{ref_error_amplification_factors_intro}
    \end{minipage}
    \caption{We present a bound on the inference loss in multi-step reasoning governed by the product of spectral norms of Jacobians taken through the input space across generation steps. Figure \ref{fig_test_loss_scaling_intro}: Our bound (\eqref{eq_main_bound}) tracks the inference loss on the Bellman-Ford algorithmic reasoning task, showing that both can scale exponentially with steps.  Figure \ref{fig_test_loss_intro}: We design a training method that regularizes the Jacobian spectral norms, which improves the test performance over baselines. Figure \ref{ref_error_amplification_factors_intro}: Models fine-tuned with our method exhibit drastically reduced Jacobian spectral norms.}\label{fig_main}
\end{figure*}

Recent theoretical work has presented convergence and generalization guarantees of transformers trained on specific reasoning tasks, including linear regression \cite{huang2025transformers,zhang2025linear}, state tracking \cite{huang2025transformers_length}, and path-finding in trees \cite{yang2025multi}. Their analyses focus on the task-specific transformer structures, leaving open the general cases. Studies based on PAC learning theory \cite{joshi2025theory,altabaa2025cot} yield VC-dimension generalization bounds dependent on the entire hypothesis class. This motivates us to conduct a general analysis of multi-step reasoning with computable measures on data and fine-tuned models.

We begin by analyzing the inference error amplification of multi-step reasoning. Our key finding is that the inference loss is governed by the products of the \emph{spectral norms of Jacobians taken through the input space} across generation steps, which control reasoning stability. The idea is to bound the gap between the $T$-step inference loss, $\ell(f_W([\mathbf{x}, \hat{y}^{(1)}, \ldots, \hat{y}^{(T-1)}]), {y}^{(T)})$, and the training loss, $\ell(f_W([\mathbf{x}, {y}^{(1)}, \ldots, {y}^{(T-1)}]), {y}^{(T)})$, conditioned on the correct steps. Using a first-order approximation in the input space, we show that the prediction errors at step $i$ are propagated forward by the Jacobian matrix $J_{i}^{(t)}$ (evaluated at the $t$-th step output to the input $y^{(i)}$). Let $\rho_{i}^{(T)}$ be the sum of products of $\norm{J_{i}^{(t)}}$ across all paths from step $i$ to $T$. Then, recursively decomposing the errors yields a bound for the inference loss governed by $\sum_{i=1}^{T-1} \rho_{i}^{(T)}$ (cf. Lemma \ref{thm_test_error_bound}), which we define as an error amplification factor. 
Empirically, this bound closely tracks the test losses on fine-tuned language models (including Qwen, Gemma, and Llama models), which can scale exponentially with $T$, as shown in Figure \ref{fig_test_loss_scaling_intro}. 

Next, we instantiate this bound by analyzing one-layer multi-head transformers trained on weight-prediction tasks for linear and quadratic functions. In both settings, we prove that the transformer converges to a solution where $\sum_{i=1}^{T-1}\rho_{i}^{(T)}$ decays with $T$, with the spectral norm of the Jacobians evaluated at each step input strictly less than one. Consequently, the trained transformer achieves a test loss of $O(1/\text{poly}(d))$ with input dimension $d$ and $T = \Theta(\log d)$ steps (cf. Theorem \ref{theorem_convergence_three_head}). Extending prior work on linear functions \cite{huang2025transformers}, our analysis for quadratic functions introduces a novel three-head transformer construction that exactly implements gradient descent (cf. Lemma \ref{theorem_construction_three_head}), a finding that may be of independent interest.

Motivated by this analysis, we propose a training method to control accumulated error by regularizing the Jacobian-based error amplification factor. Our method incorporates chain-of-thought length compression that downsamples the reasoning steps during training, and quantization-aware training \cite{kwun2025lotion, li2026winq} that trains weights with gradients computed on the quantized model. We show that our method reduces the Jacobian spectral norms by regularizing the Hessian trace of the model loss during quantized training. 
We evaluate this method on five algorithmic reasoning tasks \cite{li2025efficiently} and two state-tracking tasks from LEGO \cite{zhang2022unveiling}. 
First, across all seven tasks, our method yields an average test accuracy improvement of \textbf{3.5}$\%$ over baselines, including implicit chain-of-thought \cite{deng2024explicit} and Coconut \cite{hao2025training}, shown in Figure \ref{fig_test_loss_intro}. Second, on evaluation of length generalization with larger inputs, our method enhances the test performance by \textbf{8.2}$\%$ over baselines. Finally, ablation studies confirm that our method reduces the Jacobian-based error amplification factor by up to \textbf{8$\times$} compared to existing methods, as shown in Figure \ref{ref_error_amplification_factors_intro}. 
The code for
replicating these empirical findings is available at \href{https://github.com/VirtuosoResearch/Multi-step-reasoning-experiments}{https://github.com/VirtuosoResearch/Multi-step-reasoning-experiments}.

\textbf{Summary of contributions.} First, our analysis reveals that the inference error in multi-step reasoning scales with the product of spectral norms of Jacobians taken through the input space. Second, we theoretically show that transformers trained on linear and quadratic functions converge to a regime where these norms are strictly less than one, yielding near-zero inference error. Finally, we propose a training method that regularizes these input Jacobian norms, empirically enhancing test performance and length generalization across diverse reasoning tasks.

\section{Preliminaries}

We now describe the problem setup for multi-step reasoning. 
Let $\cX$ denote the feature space. Let $f_W: \cX^\ast \to \cX$ denote a next-token predictor with parameters $W \in \mathbb{R}^p$, such as a language model. $f_W$ takes a sequence of vectors in $\cX$ of arbitrary length as input and outputs a single vector in $\cX$, which is then appended to the sequence before the next prediction during inference.

For an input sequence $\mathbf{x} \in \cX^\ast$, we assume that there exists an underlying rule generating a ground-truth sequence $[\mathbf{x}, y^{(1)},\dots,y^{(T(\mathbf{x}))}]$, where $y^{(t)} \in \cX$ is the ground-truth output at step $t$, and $T(\mathbf{x})$ denotes the total number of steps, which depends on the input $\mathbf{x}$. We denote the maximum number of steps as $T$ for a particular task and bounded input lengths. As a concrete example, suppose $\mathbf{x}$ contains a graph and a source node; then each $y^{(i)}$ encodes an intermediate execution step of a graph traversal algorithm such as breadth-first search \cite{li2025efficiently}.

At inference time, the model $f_W$ generates $T(\mathbf{x})$ steps for an input $\mathbf{x}$ autoregressively. We denote the prediction at step $t$ as \[ \widehat{y}^{(t)} = f_W\big([\mathbf{x}, \widehat{y}^{(1)}, \dots, \widehat{y}^{(t-1)}]\big), \text{ with } ~\widehat{y}^{(1)} = f_W(\mathbf{x}). \]
For simplicity of notation, we write the generated sequence after $t$ steps as \[ \widehat{\mathbf z}^{(t)} = [\mathbf{x}, \widehat{y}^{(1)}, \dots, \widehat{y}^{(t)}], \text{ with } ~\widehat{\mathbf z}^{(0)} = [ \mathbf x ]. \] 
The corresponding ground-truth sequence with the correct intermediate steps is denoted as \[ \mathbf{z}^{(t)} = [\mathbf{x}, y^{(1)}, \dots, y^{(t)}], \text{ for } t = 1, \dots, T(\mathbf{x}), \text{ and } \mathbf{z}^{(0)} = \mathbf{x}. \] 
During training, the model $f_W$ predicts the next step conditioned on the preceding steps that encode the ground truth. Let $S = \big \{(\mathbf{x}_i, y_i^{(1)}, \dots, y_i^{(T(\mathbf{x}_i))}) \big \}_{i=1}^n$ denote a training dataset of $n$ examples, where every $\mathbf{x}_i$ is drawn independently from an unknown distribution $\cD$. Given a loss function $\ell(\cdot, \cdot)$, the training loss is defined as:
\begin{align}
    \hat{L}(f_W; S) = \frac{1}{n} \sum_{i=1}^n \sum_{t=0}^{T(\mathbf{x}_i)-1} \ell \bigl(f_{W}\big(\mathbf{z}_{i}^{(t)}\big), y_{i}^{(t+1)}\bigr).\label{eq_training_loss}
\end{align}
At test time, the model predicts the next step conditioned on previous predictions. The outcome is evaluated by the loss function at the final step. We thus define this test loss over a random sample $\mathbf{x}$ drawn from the data distribution $\cD$ as:
\begin{align}
  L(f_W; \cD) = \exarg{\mathbf{x} \sim \cD}{\ell\bigl(f_W(\widehat{\mathbf{z}}^{(T(\mathbf{x})-1)}), y^{(T(\mathbf{x}))}\bigr)}.
  \label{eq_test_loss}
\end{align} 
To be clear, we will call the above \eqref{eq_test_loss} the \emph{inference loss}.
At training time, minimizing \eqref{eq_training_loss} measures the loss relative to the ground truth. At inference time, the model assimilates prediction errors that occurred during previous steps back into the model. Thus, the prediction errors during inference can accumulate during long generation steps.

\textbf{Problem statement.} 
We quantify how inference loss accumulates throughout generation. We study conditions under which {multi-step reasoning is stable}, where the inference loss $L(f_W; \cD)$ remains bounded in long chains.
Additionally, we design training algorithms to mitigate error accumulation and improve the stability of multi-step reasoning.  

\textbf{Notations.}
For any matrix $X$, let $\norm{X}$ denote the largest singular value
(or spectral norm) of $X$. Let $\norm{X}_F$ denote the Frobenius norm of $X$.
$f(N) \lesssim g(N)$ indicates that there exists
a fixed constant $c$ that does not grow with $N$ such that
$f(N) \leq c \cdot g(N)$ for large enough integers of $N$.

\section{Theoretical Analysis}\label{sec_error_accumulation}

This section presents a theoretical analysis of multi-step reasoning in the above setup. First, we present a bound on the inference loss relative to the ground truth generation. We show that this bound depends on an \emph{error amplification factor} defined by the product of spectral norms of Jacobian matrices measured at the input space. Second, we instantiate this bound in in-context prediction tasks for linear and quadratic functions. We show that transformer models converge to a regime where the error amplification factor decays across steps, thus yielding nearly zero inference loss.

\subsection{Error Amplification in Multi-Step Reasoning}

We now quantify how the inference loss accumulates across generation steps. We write the number of steps as $T$ when the input is clear from the context. For an input $\mathbf{x}$, we compare the inference loss $\ell\bigl(f_W(\widehat{\mathbf{z}}^{(T-1)}), y^{(T)}\bigr)$ of \eqref{eq_test_loss} with the loss $\ell\bigl(f_W(\mathbf{z}^{(T-1)}), y^{(T)}\bigr)$ of the prediction conditioned on the correct steps, which is what training minimizes in \eqref{eq_training_loss}. 

To measure the difference, let $e^{(t)}(\mathbf{x}) = f_W(\mathbf{z}^{(t-1)}) - y^{(t)}$ and $\widehat{e}^{(t)}(\mathbf{x}) = f_W(\widehat{\mathbf{z}}^{(t-1)}) - y^{(t)}$ denote the prediction error at step $t$ conditioned on the correct steps and on the model's generations, respectively.
Consider the first-order Taylor expansion in the input space, for every $t$,
\[ f_W(\widehat{\mathbf{z}}^{(t-1)}) - f_W(\mathbf{z}^{(t-1)}) \; = \; \sum_{i=1}^{t-1} J^{(t)}_i(\mathbf{x}) \, \widehat{e}^{(i)}(\mathbf{x}) \; + \; \mathcal{R}^{(t)}, \text{ where } J^{(t)}_i(\mathbf{x}) = \frac{\partial f_W(\mathbf{z}^{(t-1)})}{\partial y^{(i)}} \]
is the input Jacobian evaluated at step $t$, and $\mathcal{R}^{(t)}$ is the second-order remainder error term.
By recursively unrolling this expansion, an error at step $i$ affects a later step $t$ by first propagating to an intermediate step $k$ and then from $k$ to $t$. To capture the propagation path, we define a term:
\begin{equation*}
    \rho^{(t)}_i = \sup_{\mathbf{x} \in \cX^\ast} \norm{J^{(t)}_i(\mathbf{x})} + \sum_{k=i+1}^{t-1} \left( \sup_{\mathbf{x} \in \cX^\ast} \norm{J^{(t)}_k(\mathbf{x})} \cdot \rho^{(k)}_i \right), \text{ with } \rho^{(t)}_t = 1,  
\end{equation*}
which sums, over all paths from $i$ to $t$, the product of the spectral norms of the input Jacobians along the path.  Therefore, an error at step $t$ reaches the final step amplified by a factor $\rho^{(T)}_t$. 

Our main result is to incorporate the spectral norm of Jacobians to quantify the accumulation of inference errors across generation steps. For every $\mathbf{x}$ in the support of $\cD$, we denote the maximum of the two errors, for $t$ from $1$ to $T-1$, by:
\begin{align}
    \sup_{\mathbf{x}\in\cX^\ast} \norm{e^{(t)}(\mathbf{x})} =  \varepsilon_t,~ \sup_{\mathbf{x}\in\cX^\ast} \norm{\widehat{e}^{(t)}(\mathbf{x})}  =  r_t.
    \label{eq_per_step_levels}
\end{align}
We present a bound on the (expected) inference loss as follows. 

\begin{lemma}
\label{thm_test_error_bound}
Suppose that $f_W$ is twice differentiable with respect to its input, the loss function $\ell(\cdot, \cdot)$ is $C_1$-Lipschitz continuous in its first argument, and the Jacobian of $f_W$ with respect to its input is $C_2$-Lipschitz continuous.
Let $L^{\star}(f_W; \cD) = \exarg{\mathbf{x} \sim \cD}{\ell\bigl(f_W(\mathbf{z}^{(T-1)}, y^{(T)})\bigr)}$ be the expected loss at the final step of the model prediction conditioned on correct steps.
Then, for any fixed $W \in \mathbb{R}^p$ that satisfies these conditions, and under \eqref{eq_per_step_levels}, the inference loss is bounded away from the expected loss as follows:
\begin{align}
\label{eq_main_bound}
    \big|L(f_W; \cD) - L^{\star}(f_W; \cD)\big| \leq  { C_1 \sum_{t=1}^{T-1} \rho^{(T)}_t \varepsilon_t +  \frac{C_1 C_2}{2} \sum_{t=2}^{T} \rho^{(T)}_t \left( \sum_{i=1}^{t-1} r_i \right)^2 }.
\end{align}
\end{lemma}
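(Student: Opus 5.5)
We reduce the loss gap to a gap between final-step predictions and then control that gap with a recursion on the per-step deviations. Since $\ell$ is $C_1$-Lipschitz in its first argument, for every $\mathbf{x}$ in the support of $\cD$
\[
\bigl|\ell(f_W(\widehat{\mathbf z}^{(T-1)}), y^{(T)}) - \ell(f_W(\mathbf z^{(T-1)}), y^{(T)})\bigr| \le C_1 \bignorm{f_W(\widehat{\mathbf z}^{(T-1)}) - f_W(\mathbf z^{(T-1)})}.
\]
Taking expectations and moving the absolute value inside reduces the lemma to a pointwise bound on $\delta^{(t)} := \norm{f_W(\widehat{\mathbf z}^{(t-1)}) - f_W(\mathbf z^{(t-1)})}$ at $t=T$. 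Note that $\delta^{(t)} = \norm{\widehat e^{(t)} - e^{(t)}}$.

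For the per-step inequality, we apply the first-order Taylor expansion from the text. Only the coordinates $y^{(1)},\dots,y^{(t-1)}$ differ between $\widehat{\mathbf z}^{(t-1)}$ and $\mathbf z^{(t-1)}$, and the difference in coordinate $i$ is $\widehat y^{(i)} - y^{(i)} = \widehat e^{(i)}$. Because the input Jacobian is $C_2$-Lipschitz, the integral form of the remainder along the segment joining the two inputs gives
\[
\norm{\mathcal R^{(t)}} \le \tfrac{C_2}{2}\Bigl\|\bigl(\widehat e^{(1)},\dots,\widehat e^{(t-1)}\bigr)\Bigr\|^2 \le \tfrac{C_2}{2}\Bigl(\sum_{i<t} r_i\Bigr)^2 =: \tfrac{C_2}{2} s_t^2 .
\]
Here we bound the stacked-vector norm by the sum of block norms. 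We then split $\widehat e^{(i)} = e^{(i)} + (\widehat e^{(i)} - e^{(i)})$ and take suprema of the Jacobian norms, writing $a^{(t)}_i := \sup_{\mathbf x}\norm{J^{(t)}_i}$. This yields the linear recursive inequality
\[
\delta^{(t)} \le \sum_{i=1}^{t-1} a^{(t)}_i\bigl(\varepsilon_i + \delta^{(i)}\bigr) + \tfrac{C_2}{2}s_t^2, \qquad \delta^{(1)} = 0 .
\]

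Next we unroll by strong induction. The claim is that for every $t$,
\[
\delta^{(t)} \le \sum_{i=1}^{t-1}\rho^{(t)}_i\varepsilon_i + \tfrac{C_2}{2}\sum_{j=2}^{t}\rho^{(t)}_j s_j^2 ,
\]
with $\rho^{(t)}_t = 1$ absorbing the local remainder term. Substituting the hypothesis for each $\delta^{(k)}$, $k<t$, into the recursion and collecting coefficients, the coefficient of $\varepsilon_i$ is $a^{(t)}_i + \sum_{k=i+1}^{t-1} a^{(t)}_k\rho^{(k)}_i$, which is exactly $\rho^{(t)}_i$ by definition. Likewise, the coefficient of $s_j^2$ for $j<t$ is $\sum_{k=j+1}^{t-1} a^{(t)}_k\rho^{(k)}_j + a^{(t)}_j\rho^{(j)}_j = \rho^{(t)}_j$. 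This is the same path-sum identity, using $\rho^{(j)}_j = 1$. Setting $t=T$ and multiplying by $C_1$ gives \eqref{eq_main_bound}.

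The main obstacle is the bookkeeping in this induction. The remainder of step $j$ enters the recursion both through its own $\delta^{(j)}$ and through the Jacobian term $a^{(t)}_j\delta^{(j)}$, and these two routes have to combine into exactly $\rho^{(T)}_j$. Two further points need care. First, the Jacobians $J^{(t)}_i$ are evaluated at $\mathbf z^{(t-1)}$, so the expansion must be centred at the ground-truth sequence, not at the generated one. Second, the Taylor remainder constant should be stated for the Lipschitz Jacobian in the norm on the concatenated input, with the stacked norm then bounded by $\sum_i r_i$ as above.
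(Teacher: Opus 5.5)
Your proposal is correct and follows essentially the same route as the paper. Both arguments reduce the loss gap to the final prediction gap by $C_1$-Lipschitzness, expand about the ground-truth prefix with a $\frac{C_2}{2}\bigl(\sum_{i<t} r_i\bigr)^2$ remainder, and unroll the resulting linear recursion through the path-sum definition of $\rho^{(t)}_i$. The only difference is bookkeeping: you induct directly on $\delta^{(t)}=\norm{\Delta^{(t)}}$ with $\varepsilon_i$ already substituted, whereas the paper inducts on $\norm{\widehat{e}^{(t)}}\le\sum_{i\le t}\rho^{(t)}_i s_i(\mathbf{x})$ and then transfers this to $\Delta^{(T)}$, a step your version spells out more explicitly.
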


The proof involves a decomposition of the error between predictions on the model-generated sequence and the correct step sequence by applying the first-order expansion recursively. The full proof is in Appendix~\ref{sec_proof_test_error_bound}. 
Lemma \ref{thm_test_error_bound} highlights that the inference loss is governed by $\sum_{i=1}^{T-1} \rho^{(T)}_i$, which we term an error amplification factor.  
Even if the training error is small, this factor can grow exponentially with $T$, resulting in large inference errors on long reasoning sequences.

\begin{figure*}[t!]
\centering
    \begin{minipage}[t!]{0.3\textwidth}
        \centering
        \includegraphics[width=\linewidth]{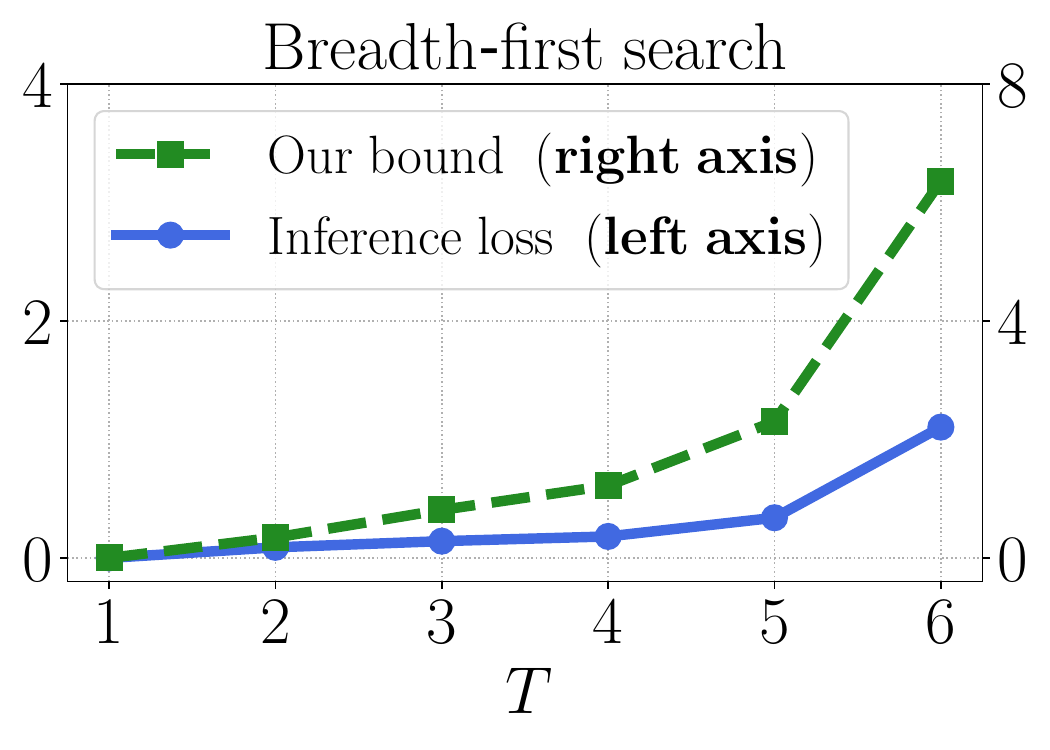}
    \end{minipage}\hfill
    \begin{minipage}[t!]{0.3\textwidth}
        \centering
        \includegraphics[width=\linewidth]{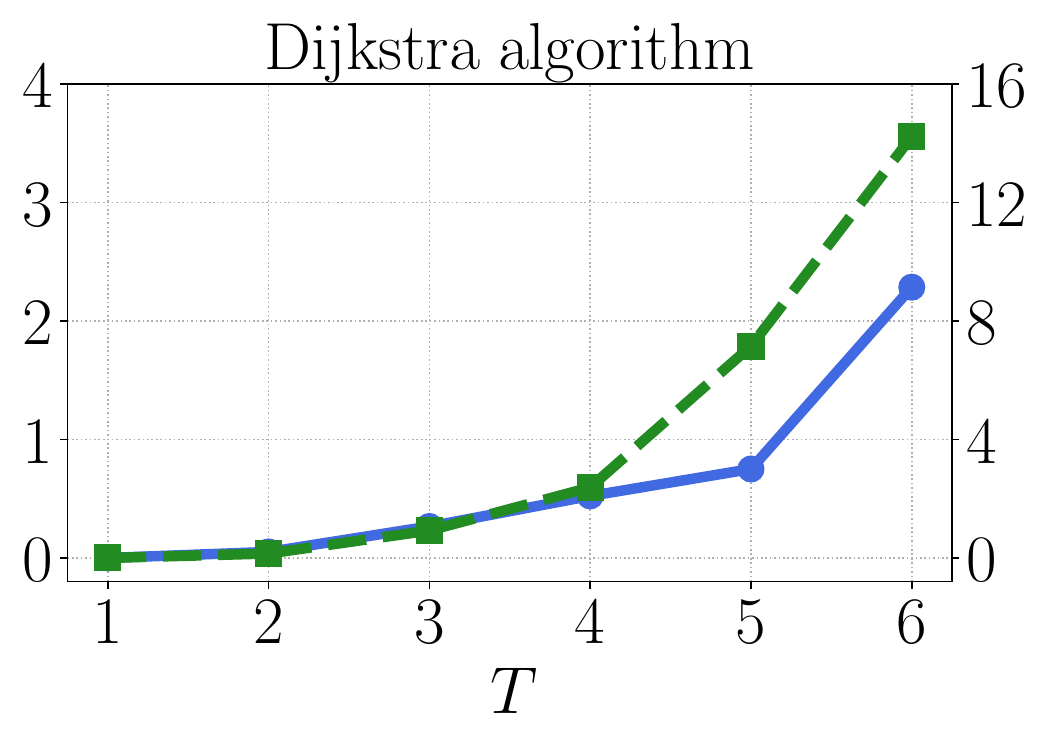}
    \end{minipage}\hfill
    \begin{minipage}[t!]{0.3\textwidth}
        \centering
        \includegraphics[width=\linewidth]{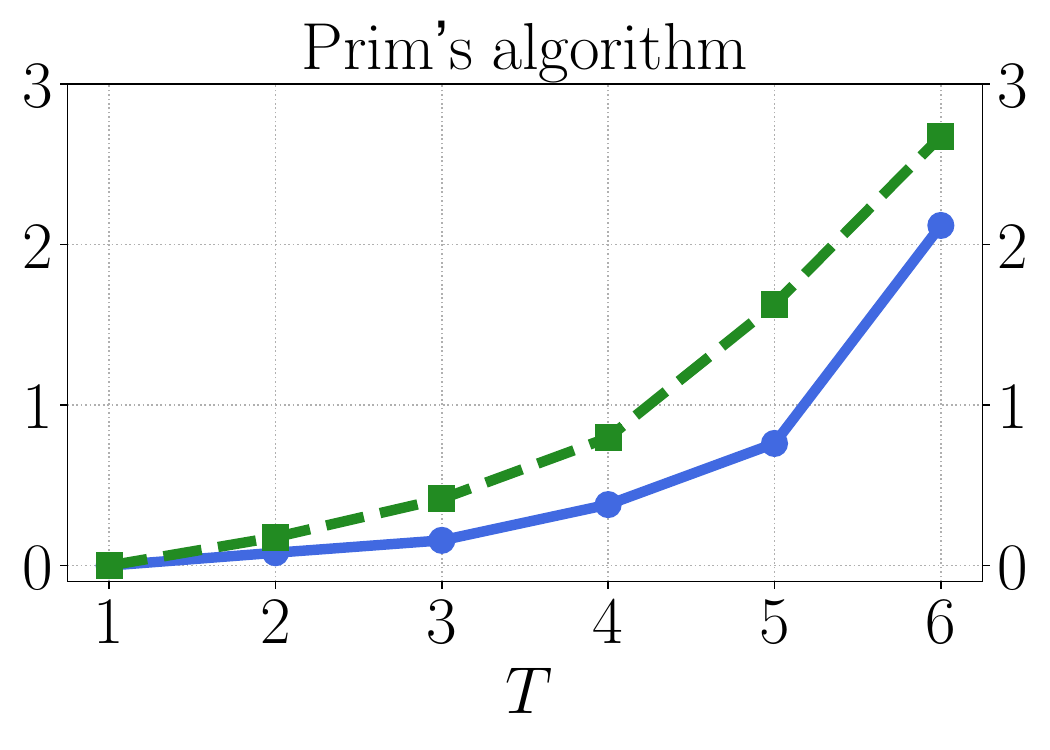}
    \end{minipage}
    \caption{We illustrate the loss gap and our bound in \eqref{eq_main_bound}, which can scale exponentially across generation steps, tracking the growth of the inference loss. Results are evaluated on breadth-first search, Dijkstra, and Prim's algorithm, using QWen-1.5B. The loss gap and the bound are plotted on separate axes; \eqref{eq_main_bound} is a worst-case bound and is numerically larger than the gap it bounds. Table \ref{tab_bound_scale} in Appendix \ref{sec_additional_experiments} reports the corresponding bound-to-gap ratios and constants.}\label{fig_scaling}
\end{figure*}

\smallskip
\textbf{Examples.} To empirically verify our analysis, we study the scaling of inference losses on graph algorithmic reasoning tasks \cite{li2025efficiently}. We fine-tune Qwen-1.5B, Llama-1B, and Gemma-2B models on tasks with graphs of $10$ nodes. Inputs are text-based graph descriptions, and intermediate steps are encoded as node labels, such as the nodes along the shortest path. As a reasoning step in text-based datasets typically involves several tokens, we compute the loss, the $\ell_2$ error, and the Jacobian matrices token-by-token within a step, and then aggregate them over the tokens within one step.

For text-based reasoning tasks, we view $\cX$ as the continuous space of input embeddings. The output of $f_W$ at step $t$ is then the expected embedding of the predicted next token, calculated as the weighted average of all token embeddings based on their predicted softmax probabilities. 
We compute the Jacobian of the output vector with respect to the input embeddings.
We estimate its spectral norm via power iteration using Jacobian-vector products. To approximate $\sup_{\mathbf{x}} \norm{J_{k}^{(t)}}$ in \eqref{eq_main_bound}, we take the maximum norm across the dataset.

After minimizing the training loss, we measure the loss gap and the empirical values of our bound in \eqref{eq_main_bound}, for $T$ ranging from 1 to 6. In Figure \ref{fig_scaling}, the inference loss increases significantly with the number of steps. Our bound,  which involves $\sum_{i=1}^{T-1} \rho^{(T)}_i$, scales exponentially with $T$, closely tracking the growth in inference loss. We also observe similar results on Llama-1B and Gemma-2B models, which are described in Appendix \ref{sec_additional_experiments}. 
These results demonstrate that controlling input Jacobian norms during training is crucial for reducing accumulated errors in multi-step generation. 

\smallskip
\begin{remark}
Lemma \ref{thm_test_error_bound} provides a worst-case upper bound that may not be numerically tight.  Empirically, while we observe a consistent correlation between the bound values and inference losses across models, the numerical scales of the error amplification factor $\sum_{i=1}^{T-1} \rho^{(T)}_i$ vary significantly. The bound-to-gap ratio is around $8$ for Qwen-1.5B and around $10^5$–$10^6$ for Llama-1B and Gemma-2B. 
We report the empirical loss gap, the bound, the bound-to-gap ratio, both terms of \eqref{eq_main_bound}, and the estimated constants $C_1$ and $C_2$, for every model in Appendix \ref{sec_additional_experiments}.
\end{remark}

\subsection{Learning Dynamics in Quadratic Models} 
\label{sec_training_dynamics}

Next, we explicitly instantiate the bound in Lemma \ref{thm_test_error_bound} in training transformers on an in-context weight prediction task. We show that if the spectral norm of the input Jacobian matrix is less than one, the inference loss can be reduced close to zero. 

Based on prior work on in-context learning \cite{zhang2025linear}, we study a task that predicts the weight vector $\theta^\star \sim \mathcal{N}(0, \id_d)$ of a function, given $n$ examples $(x_i, y_i)$ where $x_i \sim \mathcal{N}(0, \id_d)$ and $y_i = g_{\theta^\star}(x_i)$ for a known function $g$. We define the ground-truth steps as iterates by performing gradient descent on the regression examples
\[ \theta_{i+1} = \theta_{i} - \frac{\eta}{n}\sum_{k=1}^n \nabla_\theta g_{\theta_i}(x_k, y_k), \]
with a learning rate $\eta$ and an initialization $\theta_0$.

We consider a one-layer transformer with $h$ non-linear self-attention heads:
\begin{align}\label{eq_nonlinear_transformer}
f_W(Z) = Z_{[:, -1]}
+
\sum_{i=1}^h
W^{PV_i}Z \cdot \phi_i \Big(\tfrac{Z^\top W^{KQ_i} Z_{[:, -1]}}{n}\Big), \text{ and } Z  = 
\begin{bmatrix}
X & 0_d & \cdots & 0_d \\
y & 0 & \cdots & 0 \\
0_{d\times n}   & \theta_0 & \cdots & \theta_i \\
0   & 1    & \cdots & 1
\end{bmatrix},
\end{align}
where $Z \in \real^{(2d + 2) \times (n+i+1)}$ is the input embedding, $X = [x_1, \dots, x_n]$, and $y = [y_1,\dots,y_n]$. The ones in the bottom are appended for formatting convenience.
For step $T$, the last column contains the target vector $\theta^\star$.
Following prior work \cite{huang2025transformers,huang2025transformers_length,yang2025multi}, $W^{PV_i}$ and $W^{KQ_i}$ are the merged projection-value and key-query matrices, $\phi_i$ is a non-linear function, and $Z_{[:, -1]}$ denotes the last column.
We use the squared loss for this setting.

\textbf{Numerical example.} We validate that the transformer can achieve near-zero inference loss in this setting, for both linear and quadratic function classes.
The results are shown in Figure \ref{fig_simulation}. The inference loss converges to near-zero, with longer steps yielding lower inference loss. %

\begin{figure*}[t!]
\centering
    \begin{minipage}[t!]{0.245\textwidth}
        \centering
        \includegraphics[width=0.95\linewidth]{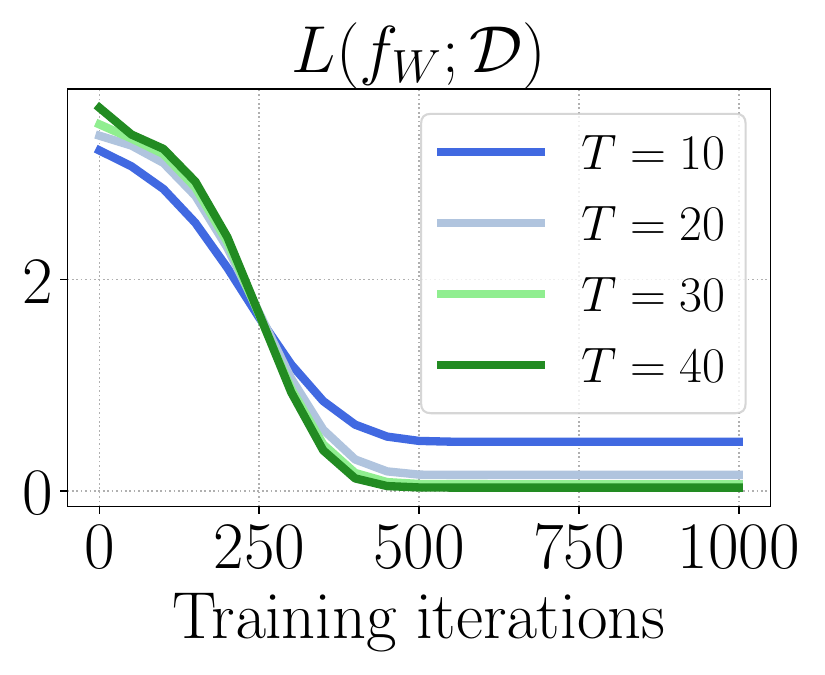}
        \subcaption{Linear functions}
    \end{minipage}
    \begin{minipage}[t!]{0.245\textwidth}
        \centering
        \includegraphics[width=0.95\linewidth]{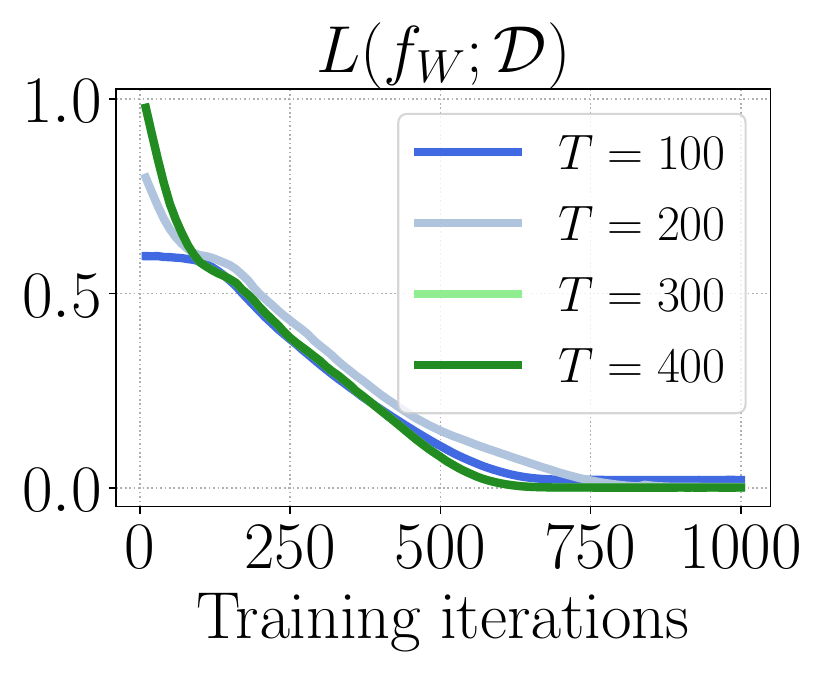}
        \subcaption{Quadratic functions}
    \end{minipage}
    \begin{minipage}[t!]{0.245\textwidth}
        \centering
        \includegraphics[width=0.95\linewidth]{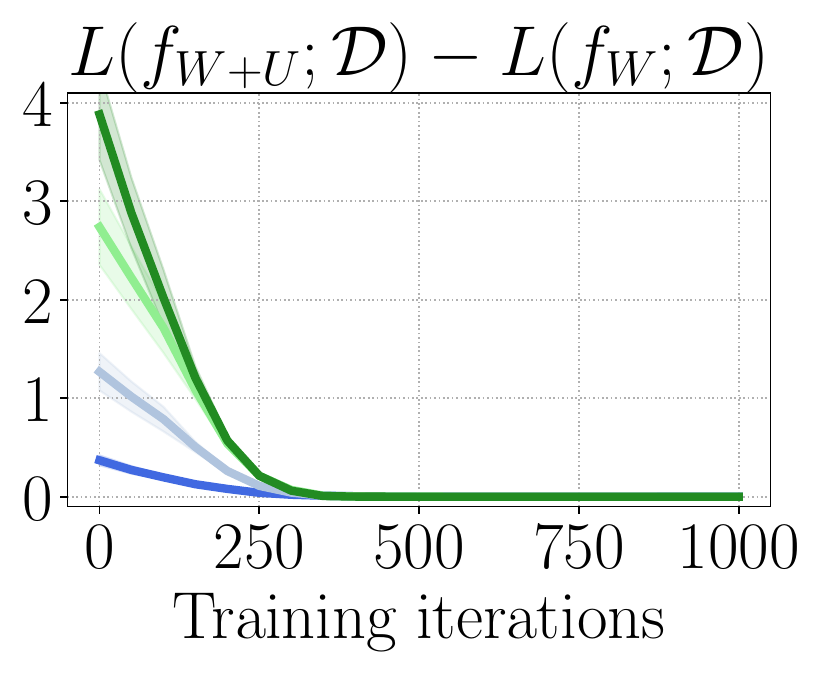}
        \subcaption{Linear functions}
    \end{minipage}    
    \begin{minipage}[t!]{0.245\textwidth}
        \centering
        \includegraphics[width=0.95\linewidth]{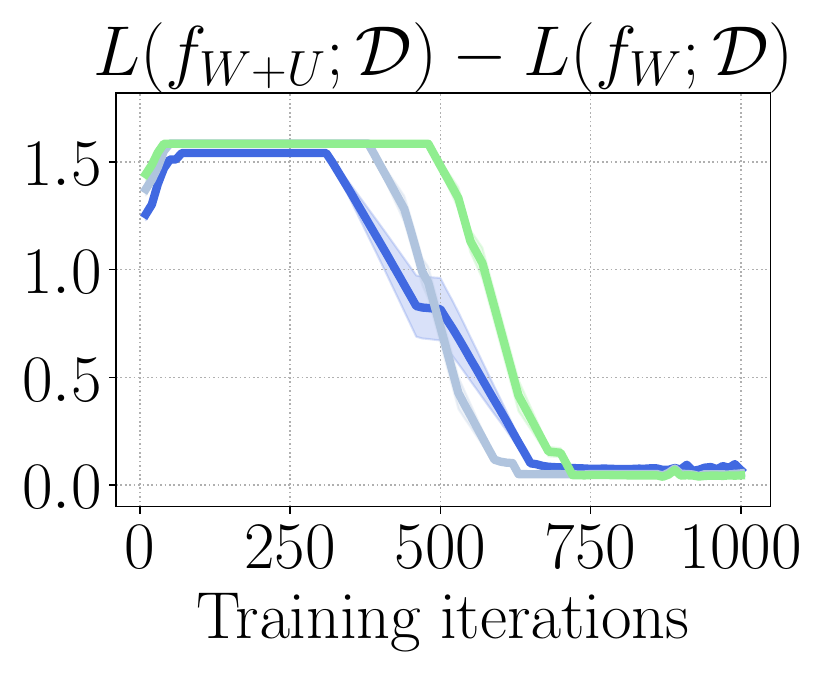}
        \subcaption{Quadratic functions}
    \end{minipage}
    \caption{We illustrate that, in training transformers on both linear and quadratic functions, the inference loss and the noise-perturbed loss converge close to zero, and become lower with longer steps. We set $d=10$ and train a one-head linear transformer on linear functions ($n=20$, $\eta=0.4$) and a three-head non-linear transformer on quadratic functions ($n=200$, $\eta=10^{-3}$).  For the random Gaussian noise, we set $\sigma = 0.002$. On the right, we consider a slightly different setting by first adding noise to the weights before computing the gradients.}\label{fig_simulation}
\end{figure*}

We describe our main results.
For linear functions $y = \theta^\top X$, we prove that the model exhibits inference loss of $O(1/\text{poly(d)})$ with $T = \Theta(\log d)$. This is because the transformer converges to a solution where $J_{i-1}^{(i)} \approx I - \frac{\eta}{n} X X^\top$ and $\rho_{i}^{(T)} \approx \norm{I - \frac{\eta}{n} X X^\top}^{T-i}$. 
The accumulated errors form a converging geometric series. We present the complete proof in Appendix~\ref{sec_proof_linear_transformer}.

Next, we present results on quadratic functions $y = (\theta^{\top} X)^{\odot 2}$, which represent a fundamental yet tractable setting for studying gradient descent dynamics \cite{li2018algorithmic,meterez2026defense}.
We first establish that a three-head transformer can exactly implement the gradient update step using cubic and squared activations. %

\begin{lemma}\label{theorem_construction_three_head} There exist weight matrices $\{W^{PV_h}, W^{KQ_h}\}_{h=1}^3$ and elementwise non-linear functions $\{\phi_h\}_{h=1}^3$, such that a three-head transformer $f_W(\widehat{Z}_{T-1})$ outputs the $T$-step gradient descent solution with a learning rate $\eta$ on the quadratic functions. 
\end{lemma}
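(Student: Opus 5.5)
We give an explicit construction: one cubic head for the cubic part of the gradient, and two squared heads whose difference produces the cross term. Take the per-example loss $\frac14\bigl((\theta^\top x_k)^2-y_k\bigr)^2$. Its gradient is $\bigl((\theta^\top x_k)^2-y_k\bigr)(\theta^\top x_k)\,x_k$, so one gradient step reads
\[
\theta_{T}=\theta_{T-1}-\frac{\eta}{n}\sum_{k=1}^n (\theta_{T-1}^\top x_k)^3 x_k+\frac{\eta}{n}\sum_{k=1}^n y_k(\theta_{T-1}^\top x_k)\,x_k .
\]
The residual term $Z_{[:,-1]}$ of \eqref{eq_nonlinear_transformer} already supplies $\theta_{T-1}$ in the third block and the constant $1$ in the last row. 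Each head should therefore write only into the $\theta$-block and reproduce one of the two sums. Since each head computes $\sum_j W^{PV}z_j\,\phi(s_j)$ over the columns $z_j$ of $Z$, we need a value that is linear in the column and a score that is a scalar function of it.

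\textbf{Head 1 (cubic term).} Choose $W^{KQ_1}$ so that $z_j^\top W^{KQ_1}Z_{[:,-1]}/n = x_j^\top\theta_{T-1}$. Concretely, $W^{KQ_1}$ has the block $n\,\id_d$ mapping the $\theta$-rows of the query to the $x$-rows of the key, and is zero elsewhere. Set $\phi_1(u)=u^3$. Let $W^{PV_1}$ copy the $x$-rows of a column into the $\theta$-rows of the output, scaled by $-\eta/n$. Every column $j>n$ (the $\theta_i$ columns) has zero $x$-block, so its value vanishes and it contributes nothing. The head then outputs exactly $-\frac{\eta}{n}\sum_k(\theta_{T-1}^\top x_k)^3x_k$ in the $\theta$-block.

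\textbf{Heads 2 and 3 (cross term).} The value is linear in $z_k=(x_k,y_k,0,0)$, so the product $y_kx_k$ cannot appear in the value. We instead move $y_k$ into the score and use polarization, $(a+b)^2-(a-b)^2=4ab$.

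For head 2, choose $W^{KQ_2}$ with the same $\theta$-to-$x$ block as head 1. Add an entry $n$ that pairs the query's last row (always equal to $1$) with the key's $y$-row. This gives score $s_k=\theta_{T-1}^\top x_k+y_k$. Set $\phi_2(u)=u^2$.

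Head 3 is identical but with entry $-n$ on the $y$-row, giving $s_k=\theta_{T-1}^\top x_k-y_k$, and $\phi_3(u)=u^2$.

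Take $W^{PV_2}=-W^{PV_3}$, both copying the $x$-block into the $\theta$-block with scale $\frac{\eta}{4n}$. The $(\theta^\top x_k)^2$ and $y_k^2$ contributions cancel between the two heads. What remains is $\frac{\eta}{n}\sum_k y_k(\theta_{T-1}^\top x_k)x_k$.

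\textbf{Check on the extra columns.} For the columns $j>n$, the key has zero $x$- and $y$-blocks, so all scores vanish, and the values vanish as well. Summing the residual term and the three heads, the output column is $(0_d,0,\theta_T,1)$, which is the $T$-step gradient descent iterate in the embedding format.

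The main obstacle is the cross term $y_k(\theta^\top x_k)x_k$. It is bilinear in the data column, so no single head with a linear value and an elementwise activation can produce it; the polarization trick with two squared heads resolves this. A second point to check is the $1/n$ inside $\phi_i$: it is absorbed by scaling $W^{KQ}$ by $n$, while the $1/n$ of the gradient average is placed in $W^{PV}$.
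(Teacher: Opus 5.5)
Your construction is essentially the paper's: a cubic head with score $x_j^\top\theta$, two squared heads with scores $x_j^\top\theta \pm y_j$ whose difference produces the cross term by polarization, and the residual supplying $\theta$ and the constant $1$. The paper differs only in placing the powers of $n$ in the value matrices ($-2\eta n^2\id_d$ and $\pm\tfrac12\eta n\id_d$, with $\id_d$ and $\pm 1$ in $W^{KQ_h}$) rather than in $W^{KQ_h}$, and in defining its iterates as gradient descent on $\frac{1}{2n}\sum_j((\theta^\top x_j)^2-y_j)^2$, i.e.\ $\theta_{i+1}=\theta_i-\frac{2\eta}{n}X\operatorname{diag}(e(\theta_i))X^\top\theta_i$, so with your $\frac14$ normalization you should double your value scalings to $-2\eta/n$ and $\pm\eta/(2n)$ to match its learning-rate convention.
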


We then show that the transformer model converges to this construction during training, yielding near-zero inference loss. 

\begin{theorem}\label{theorem_convergence_three_head}
Let $\kappa\in(0,1/24]$, $\mu\in(0, 4]$, and $\alpha_0\in[3/4,1)$ be the constants. Suppose $n = \Theta(d^2 \log^5 d)$, $\eta = \kappa/d$, $\theta_0 = \alpha_0\theta^\star$, and $T = \lceil C \log d \rceil$ for some $C > 0$ with $aC > 1$ for $a = \log\bigl(\tfrac{1}{1 - \mu \eta d}\bigr)$. There exists an initialization such that if we run gradient flow on the training loss, then after time $t = O\bigl(\log d + \log \tfrac{1}{\epsilon_0}\bigr)$, for any $\epsilon_0 \ge \Theta\bigl(\tfrac{\log d}{d^{2aC - 1}}\bigr)$, the trained transformer satisfies $L(f_W; \cD) \le \epsilon_1$ for any $\epsilon_1 \in \bigl(\Theta(d^{-2aC + 1} \log d),\, 1\bigr)$.
\end{theorem}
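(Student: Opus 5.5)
The plan has three stages. First, show that gradient flow converges to the construction of Lemma~\ref{theorem_construction_three_head}. Second, show that at (and near) that construction each input Jacobian $J^{(t)}_{t-1}$ is a contraction. Third, feed the resulting per-step errors and Jacobian norms into Lemma~\ref{thm_test_error_bound}.

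\emph{Dynamics.} I would restrict the parametrization to the structured family suggested by Lemma~\ref{theorem_construction_three_head}. In this family each $W^{PV_h}$ only writes into the $\theta$-block, and each $W^{KQ_h}$ only reads the $X$-block against the $\theta$-block of the last column. The weights are then described by a few scalars $(a_h,b_h)_{h=1}^3$, with the activations $\phi_h$ fixed to the cubic and squared maps of the construction. I would also choose an initialization that is symmetric and consistent with this family, so that gradient flow on \eqref{eq_training_loss} stays inside the family for all time; this needs a short invariance argument.

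On this family, the trajectory $\theta_0=\alpha_0\theta^\star,\theta_1,\dots$ is deterministic given $(X,\theta^\star)$. The training loss then reduces to a low-dimensional function of the scalars, whose population version is a polynomial in Gaussian moments. I would compute this population loss and show it is strongly convex, or at least satisfies a PL inequality, around the target scalars $(\eta,\dots)$. Gradient flow then converges linearly, which gives the $\log(1/\epsilon_0)$ term in the time bound, after an $O(\log d)$ burn-in phase from the initialization.

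\emph{Sample complexity.} The condition $n=\Theta(d^2\log^5 d)$ enters here. It is used to show that the empirical fourth- and sixth-order moments of $X$ concentrate, for example
\[
\tfrac1n\sum_k (\theta^\top x_k)^2 x_k x_k^\top \approx \|\theta\|^2 \id_d + 2\theta\theta^\top ,
\]
uniformly along the trajectory. The error in this approximation is $\tilde O(d/\sqrt n)$.

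\emph{Stability.} At the limit point, the output at step $t$ is one gradient step on the quadratic loss. Its Jacobian in $\theta_{t-1}$ is
\[
I-\tfrac{\eta}{n}\sum_k \bigl(\ldots\bigr)\, x_kx_k^\top ,
\]
an identity minus a Hessian-type term. Near $\theta^\star$ this is approximately $I-\eta\,(2\|\theta^\star\|^2 \id_d+4\theta^\star\theta^{\star\top})$ up to constants. The choices $\eta=\kappa/d$ with $\kappa\le 1/24$ and $\alpha_0\ge 3/4$ keep every iterate within a constant-factor neighbourhood of $\theta^\star$, where $\|\theta\|^2\approx d$. On this neighbourhood I would show
\[
\|J^{(t)}_{t-1}\|\le 1-\mu\eta d<1 ,
\]
with all other $J^{(t)}_i$ equal to zero, since the construction only reads the last column. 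Then $\rho^{(T)}_t\le(1-\mu\eta d)^{T-t}$. Because the true iterates contract at the same rate, $\|\theta_{T}-\theta^\star\|\lesssim(1-\mu\eta d)^T\sqrt d=d^{-aC+1/2}$. Squaring this gives the $d^{-2aC+1}$ floor in the statement.

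\emph{Final bound.} I would plug into \eqref{eq_main_bound} the per-step error $\varepsilon_t\lesssim\sqrt{\epsilon_0}$ coming from the training-loss gap, and $r_t$ of the same order. The geometric series $\sum_t\rho^{(T)}_t$ is $O(1/(\eta d))=O(1)$. The quadratic remainder term contributes $T^2\epsilon_0\sim\epsilon_0\log^2 d$. Choosing $\epsilon_0$ at the threshold $\Theta(\log d/d^{2aC-1})$ then gives $L\le\epsilon_1$.

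\emph{Main obstacle.} The hardest step is the convergence analysis. The population loss mixes the three heads' polynomial features, with moments of up to degree $12$ in Gaussian variables. I would first try to decouple the heads by orthogonality of their Hermite components. Failing that, I would apply a two-timescale argument in which the squared-activation head converges first, then the cubic heads.
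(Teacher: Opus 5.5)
\textbf{The gap is in your last step: Lemma~\ref{thm_test_error_bound} cannot be applied as you propose, and even heuristically it does not give the stated rate.} Your dynamics outline matches the paper's in shape: a structure-preserving initialization, reduction to a few scalars, then an $O(\log d)$ phase followed by a local PL phase. The problem comes afterwards.

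\begin{itemize}
\item \emph{Suprema and constants.} The lemma needs $\varepsilon_t,r_t$ to be suprema over all inputs, $\ell$ to be globally $C_1$-Lipschitz, and the input Jacobian to be globally $C_2$-Lipschitz. For the squared loss and a model cubic in $\theta$, $C_1$ and $C_2$ are infinite. Because $X,\theta^\star$ are Gaussian with unbounded support, $\varepsilon_t$ and $r_t$ are also infinite once the trained weights differ from the construction (and they do, since the last label is $\theta^\star$). The paper makes the same point in the linear case and avoids the lemma's worst-case constants there.
\item \emph{Uniform versus mean-square control.} $L^{\mathrm{CoT}}\le\epsilon_0$ only gives $\mathbb{E}\norm{e^{(t)}(\mathbf x)}^2\lesssim\epsilon_0$. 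It does not give $\varepsilon_t\lesssim\sqrt{\epsilon_0}$.
\item \emph{The arithmetic.} Grant your bounds anyway and take an $O(1)$ constant $C_1$. The first-order term of \eqref{eq_main_bound} is then of order $\sqrt{\epsilon_0}$. At $\epsilon_0=\Theta(\log d/d^{2aC-1})$ this is about $d^{-aC+1/2}\sqrt{\log d}$. That exceeds the claimed floor $d^{-2aC+1}\log d$ by a factor $d^{aC-1/2}$. A bound linear in the deviation loses a square root against a squared loss. You would recover the rate only with $C_1$ equal to the local Lipschitz constant $\approx\norm{f_W(\mathbf z^{(T-1)})-\theta^\star}\approx d^{-aC+1/2}$ on a good event, which is essentially the argument below.
\end{itemize}

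The paper's route is the linear-case one from Theorem~\ref{theorem_noise_stability_linear_self_attention_formal}: track squared errors in expectation on a high-probability event $\mathcal G$ where the empirical moment matrices concentrate. There one unrolls $\widehat\theta_T-\theta^\star=(\theta_T-\theta^\star)+\sum_i J^{T-i}\Delta_i$, plus a second-order remainder in the quadratic case. Cauchy--Schwarz then gives
\[
L(f_W;\cD)\le\tfrac{T+1}{2}\Bigl(\mathbb E\norm{\theta_T-\theta^\star}^2+\textstyle\sum_i\mathbb E\norm{J^{T-i}\Delta_i}^2\Bigr)+(\text{contribution of }\mathcal G^c).
\]
The per-step errors $\Delta_i$ are bounded by the distance of the trained weights to the construction (small $A,B$ in the linear case, small $s,\tau$ here), not by the training-loss value. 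The contraction $\norm{J}\le 1-\mu\eta d$ is only needed on $\mathcal G$. The first term is $O(d^{1-2aC})$ by Lemma~\ref{lem_gd_convergence_quadratic}, and the factor $T+1=O(\log d)$ produces the $\log d$ in the floor for $\epsilon_1$.

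Separately, your stated main obstacle is not where the difficulty lies. The paper ties the heads so that only $(\lambda_V,\lambda_K)$ remain. The loss is then exactly quadratic in $s=1-\lambda_V\lambda_K^3$ and $\tau=1-\lambda_V\lambda_K$. In your untied symmetric family it is likewise quadratic in $a_1b_1^3$ and $a_2b_2$, once the odd-in-$X$ parts of heads 2 and 3 cancel. So no Hermite decoupling or two-timescale argument is needed. Two points do need care:
\begin{itemize}
\item Rescaling each $(a_h,b_h)$ leaves the loss invariant, so you can get PL but not strong convexity.
\item When $\theta_i\approx\alpha_i\theta^\star$, both heads' mean features point along $\theta^\star$. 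The curvature separating the two coefficients therefore comes, at leading order, from the spread of $\beta_i=\alpha_i^2$ along the chain. This is where $\alpha_0<1$ is used.
\end{itemize}
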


The proof tracks an ordinary differential equation for the eigenvalues of the weight matrices. In the converged solution, we have
\[ J_{i-1}^{(i)} \approx I - \frac{2\eta}{n} X D(\theta_i) X^\top \text{ and } \rho_{i}^{(T)} \approx \norm{I - \frac{2\eta}{n} X D(\theta_i) X^\top}^{T-i}, \]
where $D(\theta_i) = \mathrm{diag}(3(X^\top \theta_i)^{\odot 2} - (X^\top \theta^\star)^{\odot 2})$. Then, the accumulated errors again form a converging geometric series. We present the complete proof in Appendix~\ref{sec_proof_nonlinear_transformer}. 

\textbf{Comparison with prior work.} Prior analyses of transformers trained on multi-step reasoning differ from ours in their attention mechanisms, tasks, and tracked quantities. Previous works have examined one-head softmax attention on state-tracking tasks to demonstrate implicit biases and length generalization \cite{huang2025howtransformers,huang2025transformers_length}. \citet{huang2025transformers} study one-head linear attention on in-context linear regression, where the trained transformer implements multi-step gradient descent. \citet{yang2025multi} study multi-head softmax attention on tree path-finding, where different heads specialize and coordinate across subtasks. In contrast, we study multi-head non-linear attention on quadratic regression. We identify the explicit form of Jacobians taken through the inputs at each step at convergence and connect it with the stability of the inference loss.

\section{Proposed Methods}\label{sec_algorithm_design}

Our analysis suggests two strategies to mitigate accumulated errors: reducing the sequence length to limit the error amplification factor $\sum_{t=1}^{T-1}\rho^{(T)}_t$ and regularizing the input Jacobian spectral norms $\norm{J^{(t)}_{i}}$. Motivated by these insights, we design a training method combining two techniques.

\textbf{(1) Chain-of-thought length compression.} First, we compress the reasoning sequence by subsampling intermediate steps. For a training example $(\mathbf{x}, y^{(1)}, \dots, y^{(T)})$ and a sampling factor $\lambda \in (0, 1]$, we uniformly sample $T^\prime = \lfloor \lambda \cdot T \rfloor$ intermediate steps to form a compressed sequence $(\mathbf{x}, y^{(1)}, \dots, y^{(T^\prime)})$, and minimize the training loss over the sampled sequence.

As shown in Figures \ref{fig_downsampled_stpes_jacobian_dijkstra} and \ref{fig_downsampled_stpes_jacobian_prim}, models trained on subsampled sequences consistently outperform those trained on full steps. 
Intermediate numbers of steps achieve the lowest inference loss on the Dijkstra and Prim tasks. Notably, subsampling steps reduces the error amplification factor by 92$\times$ and 181$\times$ compared to training with all steps.
Subsampling steps reduces the amplification factor but increases the per-step errors $\varepsilon_t$. Thus, the best $\lambda$ is the one that balances these two effects.

\textbf{(2) Regularization through quantization-aware training.}
Second, we incorporate quantization during supervised fine-tuning to explicitly regularize the Jacobian spectral norms. This is inspired by observations that quantized training smooths the loss landscape and provides a regularization effect \cite{kwun2025lotion,li2026winq}. During the forward pass at training step $t$, we apply a standard uniform quantization function to the full-precision weights $W^{(t)}$:
\begin{align*}
    Q(W^{(t)}) = \Delta \cdot \text{Round}\left( {\text{Clamp}(W^{(t)}-b, W_{min}, W_{max})}/{\Delta} \right) + b,
\end{align*}
where $\Delta$ and $b$ are bit-width-dependent scaling and bias parameters. We use the straight-through estimator \cite{bengio2013estimating} to approximate the gradient $\nabla_W \hat{L}(f_{Q(W)}) \approx \nabla_{Q(W)} \hat{L}(f_{Q({W}})$.

We hypothesize that quantization-aware training acts as a regularizer on the error amplification factor. 
Under the Pseudo-Quantization Noise model \cite{widrow1996statistical, gholami2022survey}, the quantized model loss can be viewed as the expected loss under mean-zero randomized-rounding noise.
Minimizing this loss penalizes the trace of the loss Hessian, which can be decomposed into the Jacobians taken through the input space. 

Figures \ref{fig_low_precision_stpes_jacobian_dijkstra} and \ref{fig_low_precision_stpes_jacobian_prim} demonstrate this regularization effect. Training with lower bit-widths yields a lower error amplification factor compared to full precision. When combining 1-bit quantization, $\sum_{t=1}^{T-1}\rho_{t}^{(T)}$ is further reduced by $3.6\times$ and $5.8\times$, compared to full precision, on the Dijkstra and Prim algorithmic tasks, respectively.
We also observe that, while quantization increases the prediction error at each step, it reduces the error amplification factor much more significantly, thus leading to improved performance. We report evaluations under various bit-widths in Appendix \ref{sec_additional_experiments}.
Taken together, we summarize our approach in Algorithm \ref{alg_quant_cot}.

\begin{figure*}[t!]
\centering
    \begin{minipage}[t!]{0.245\textwidth}
        \centering
        \includegraphics[width=\linewidth]{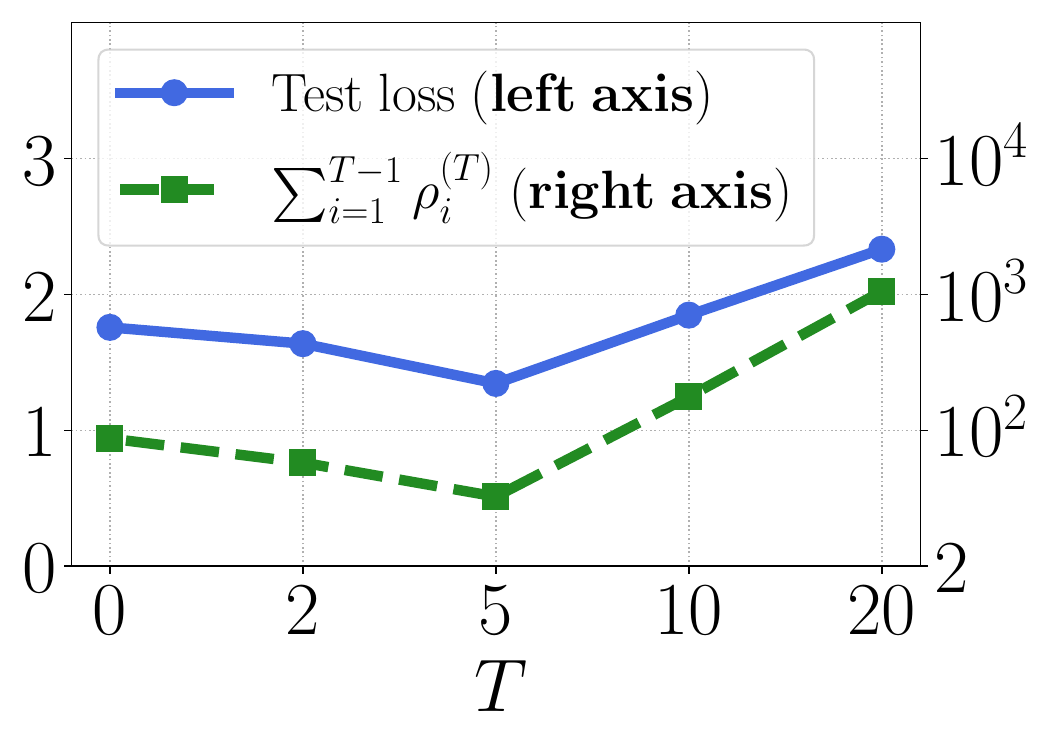}
    \subcaption{Dijkstra}\label{fig_downsampled_stpes_jacobian_dijkstra}
    \end{minipage}
    \begin{minipage}[t!]{0.245\textwidth}
        \centering
        \includegraphics[width=\linewidth]{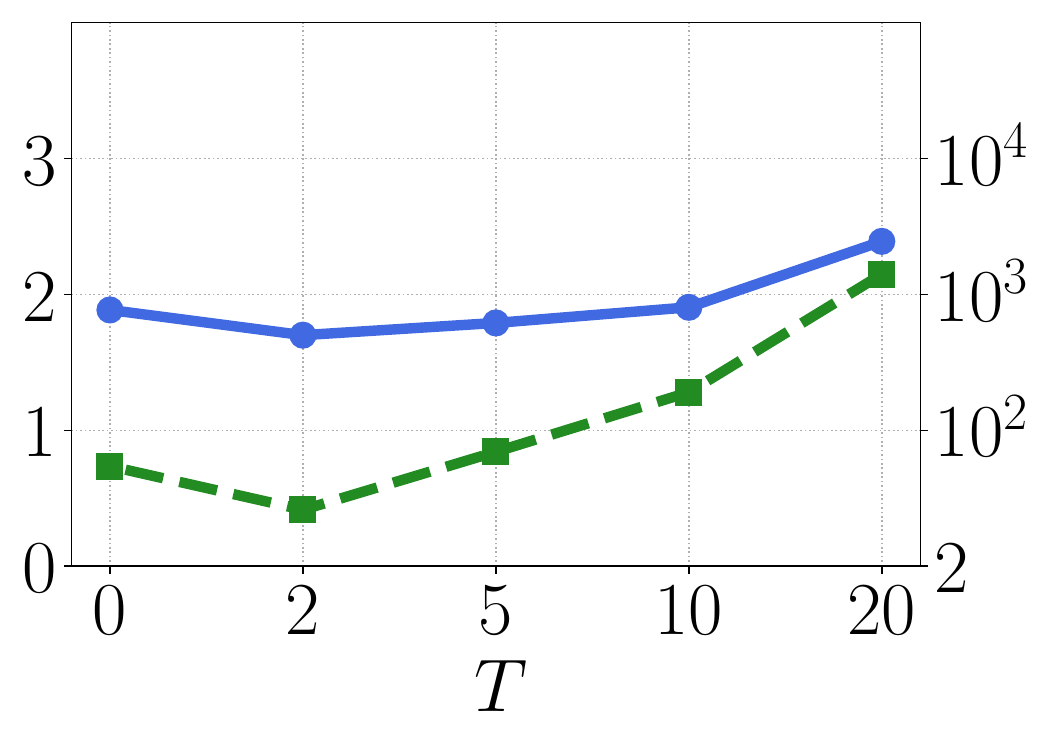}
    \subcaption{Prim's algorithm}\label{fig_downsampled_stpes_jacobian_prim}
    \end{minipage}
    \begin{minipage}[t!]{0.245\textwidth}
        \centering
        \includegraphics[width=\linewidth]{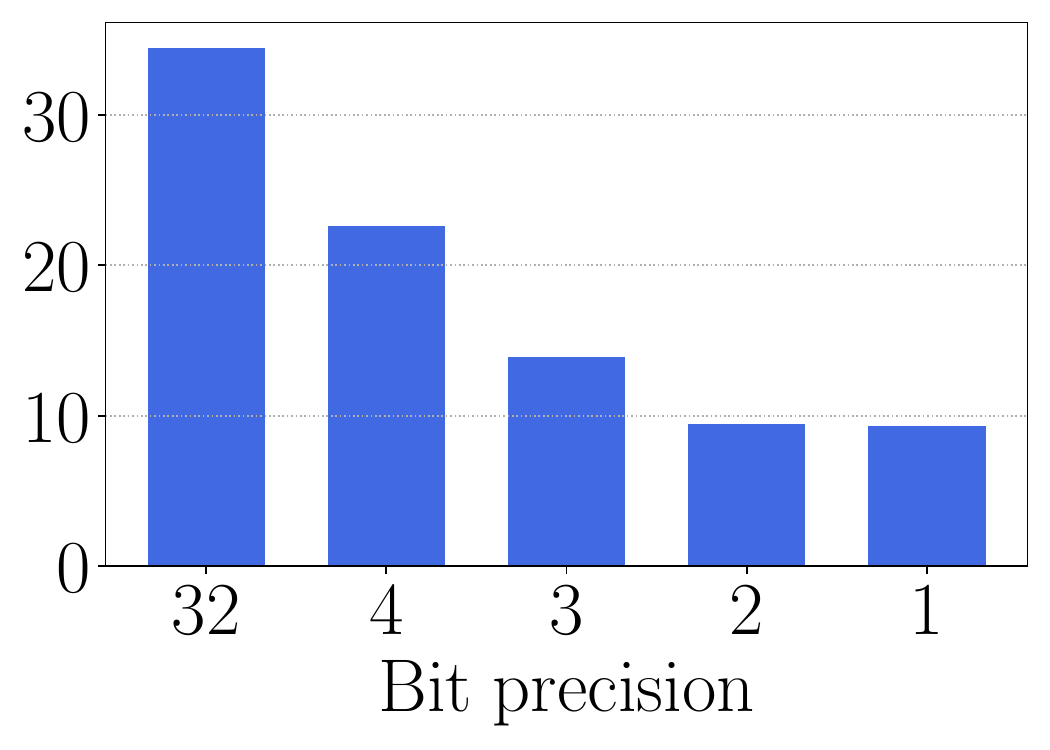}
    \subcaption{Dijkstra}\label{fig_low_precision_stpes_jacobian_dijkstra}
    \end{minipage}
    \begin{minipage}[t!]{0.245\textwidth}
        \centering
        \includegraphics[width=\linewidth]{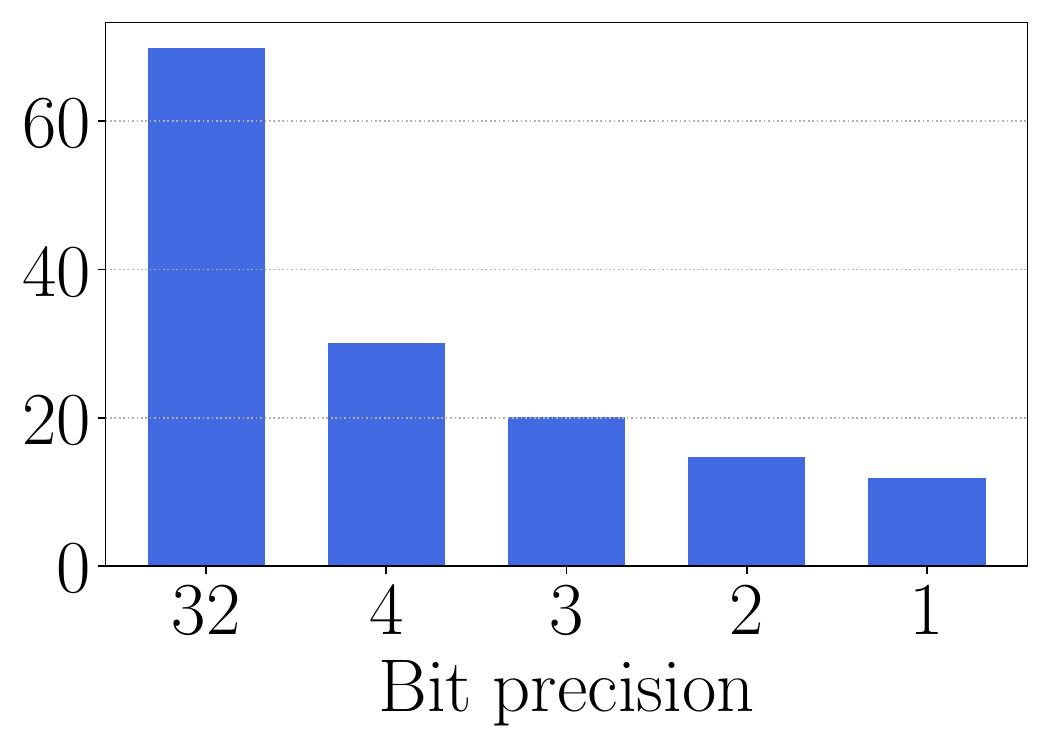}
    \subcaption{Prim's algorithm}\label{fig_low_precision_stpes_jacobian_prim}
    \end{minipage}
    \caption{Figure \ref{fig_downsampled_stpes_jacobian_dijkstra} and \ref{fig_downsampled_stpes_jacobian_prim}: We show that the models trained with reduced intermediate steps yield lower inference losses than the model trained with longer steps. Correspondingly, the models exhibit lower Jacobian-based error amplification factors. 
    Figure \ref{fig_low_precision_stpes_jacobian_dijkstra} and \ref{fig_low_precision_stpes_jacobian_prim}: We show that quantization-aware training with lower bit-widths further reduces the Jacobian-based error amplification factors, yielding lower inference losses.}
\end{figure*}

\begin{remark}[Length generalization]
For out-of-distribution settings, the test error bound from Lemma~\ref{thm_test_error_bound} can increase in two aspects. First, the per-step prediction errors grow due to covariate shift. Second, the error amplification factor accumulates over the additional steps. If the step-to-step Jacobian norm exceeds one, the extra steps multiply the error geometrically, making our regularization approach crucial. Designing training algorithms for length generalization is an interesting direction for future work, and our analysis lays a concrete direction for this. 
\end{remark}

\begin{algorithm}[!t]
	\caption{Jacobian-Regularized Training through Length Compression and Quantization}\label{alg_quant_cot}
    \textbf{Input}: {$n$ training examples with correct sequences $(\mathbf{x}_i, y_i^{(1)}, \dots, y_i^{(T)})$ of length $T$} \\
	\textbf{Require}: Pretrained language model $f_{W^{(0)}}$, number of epochs $S$, learning rate $\eta$, a sampling factor $\lambda$, quantization function $Q(\cdot)$ \\
	\textbf{Output}: Fine-tuned quantized model $f_{Q(W^{(S)})}$
		\begin{algorithmic}[1]
			\FOR{$t=0, 1, \dots, S-1$}
            \STATE {$(y_{i}^{(1)}, \dots, y_{i}^{(T^\prime)}) \leftarrow $ Subsample $T^\prime = \lfloor \lambda \cdot T \rfloor$ steps from example $i$, for $i = 1, \dots, n$}
			\STATE {$\hat{L}(f_{Q(W^{(t)})}) \leftarrow$ Compute the training loss of $f_{Q(W^{(t)})}$} on the sampled sequences
			\STATE {$W^{(t+1)} \leftarrow W^{(t)} - \eta \cdot \nabla_{Q(W^{(t)})} \hat{L}(f_{Q(W^{(t)})})$} \hfill 
			\ENDFOR
            \STATE \textbf{return} $f_{Q(W^{(S)})}$
		\end{algorithmic}
\end{algorithm}

\section{Experiments}
\label{sec:experiments}

We evaluate our proposed method in a range of algorithmic reasoning tasks. 
First, our method outperforms baselines by an average of \textbf{3.5}\% across five graph algorithmic and two symbolic reasoning tasks. Second, our method significantly enhances out-of-distribution length generalization, yielding an average \textbf{8.2}\% improvement. Third, empirical analysis confirms that our approach reduces the error amplification factor (measured via Jacobian spectral norms) by up to \textbf{8}$\times$. Lastly, we also conduct ablation studies for various design choices of our algorithm. 

\subsection{Experimental Setup}

\textbf{Datasets and models.} First, we evaluate our algorithm on graph algorithmic tasks from the CLRS-Text benchmark \cite{li2025efficiently}, which encodes graphs as flattened adjacency lists and represents the
intermediate steps as lists of node labels. We use five classic graph algorithms, including Dijkstra, Bellman-Ford, Prim's algorithm, breadth-first search, and depth-first search. For each task, we use 2,000 training examples generated on random graphs of 10 nodes and 200 examples for validation and testing. 
Second, we evaluate on symbolic state-tracking reasoning tasks, namely LEGO \cite{zhang2022unveiling}. 
We use two task variants: transitive (cyclic) and symmetry group actions.
For each task, we use a dataset of 2,000 training examples with $k=8$ variables, and 200 examples each for validation and testing. We evaluate our method in fine-tuning the QWen-1.5B models.

\textbf{Baselines.} We compare our method with three categories of baselines. First, we consider standard fine-tuning baselines that train without or with the provided chain-of-thought sequences, namely No CoT and SFT-CoT. Second, we consider existing chain-of-thought methods. This includes implicit CoT \cite{deng2024explicit} that trains a model to directly predict the final output by gradually removing intermediate steps during training, and the chain of continuous thought (Coconut) \cite{hao2025training} that trains a model to reason in a continuous latent space by using the hidden state as the next input embedding. 
Third, we consider noise stability optimization (NSO)~\cite{zhang2024noise} that regularizes the Hessian trace through random Gaussian noise injection. 
For each baseline, we evaluate the test accuracy on the final answer. 

\textbf{Implementations.} Our algorithm includes key hyperparameters: the sampling factor $\lambda$ and the quantization bit-width. We vary $\lambda$ across 0.1, 0.2, 0.5, and 1. The bit-width is varied across 1, 2, 3, and 4 bits. We use the quantization methods from existing methods \cite{li2026winq}.  
For the base training method, we use LoRA fine-tuning with a rank between 16 and 64. The training uses the AdamW optimizer with a learning rate in the range of $1 \times 10^{-5}$ to $4 \times 10^{-5}$. We report the hyperparameters used for each result in Appendix \ref{sec_additional_experiments}.

For the length generalization experiment, we additionally evaluate at $20\%$ longer inputs on all five CLRS-Text tasks. Absolute accuracy degrades sharply for every method, falling below $25\%$. While our method still leads, at this accuracy level, the input-length bottleneck dominates. We therefore report the $10\%$ setting as our main result. Further improving the length generalization on longer sequences is left for future work.

\begin{table}[t!]
\centering
\caption{We report the test accuracy (\%) of our method, as compared to supervised fine-tuning, implicit reasoning, and random noise injection. We evaluate five algorithmic reasoning tasks and two symbolic state-tracking reasoning tasks. For length generalization, models are evaluated on inputs 10\% longer than the training sequences. All results are averaged over three random seeds.
}\label{tab_results}
\resizebox{1.0\columnwidth}{!}
{
\begin{tabular}{@{}lccccccc@{}}
\toprule
 & Dijkstra & Bellman-Ford & Prim & BFS & DFS & Cyclic & Symmetry \\ \midrule
No CoT & 53.3 $\pm$ 2.0 & 69.9 $\pm$ 0.3 & 56.2 $\pm$ 0.1 & 85.9 $\pm$ 0.1 & 71.4 $\pm$ 1.4 & 66.8 $\pm$ 0.9 & 48.5 $\pm$ 3.1 \\
SFT-CoT & 73.1 $\pm$ 1.4 & 75.3 $\pm$ 1.5 & 62.3 $\pm$ 0.1 & 86.0 $\pm$ 0.4 & 54.6 $\pm$ 2.0 & 89.2 $\pm$ 1.2 & 55.8 $\pm$ 1.5\\
Implicit CoT & 71.6 $\pm$ 0.5 & 72.3 $\pm$ 0.6 & 64.1 $\pm$ 0.3 & 85.5 $\pm$ 1.7 & 72.4 $\pm$ 0.4 & 75.4 $\pm$ 1.0 & 60.8 $\pm$ 3.6\\
Coconut & 74.1 $\pm$ 0.3 & 75.6 $\pm$ 0.5 & 63.5 $\pm$ 1.2 & 85.7 $\pm$ 0.3 & 72.6 $\pm$ 0.7 & 79.5 $\pm$ 0.1 & 53.9 $\pm$ 0.5 \\
NSO & 74.5 $\pm$ 0.8 & 75.3 $\pm$ 0.0 & 63.2 $\pm$ 0.3 & 86.5 $\pm$ 0.8 & 55.8 $\pm$ 0.1 & 89.3 $\pm$ 0.8 & 57.2 $\pm$ 0.9\\ 
{Algorithm \ref{alg_quant_cot}} & \textbf{76.2} $\pm$ 1.7 & \textbf{80.3} $\pm$ 0.1 & \textbf{66.2} $\pm$ 0.3 & \textbf{88.3} $\pm$ 1.3 & \textbf{74.4} $\pm$ 1.5 & \textbf{94.1} $\pm$ 1.3 & \textbf{62.2} $\pm$ 0.8\\
\midrule\midrule
Length generalization & Dijkstra & Bellman-Ford & Prim & BFS & DFS & Cyclic & Symmetry \\ \midrule
No CoT & 29.8 $\pm$ 0.4 & 42.9 $\pm$ 0.6 & 37.0 $\pm$ 0.5 & 48.5 $\pm$ 0.2 & 23.0 $\pm$ 0.5 & 55.4 $\pm$ 0.5 & 31.2 $\pm$ 0.7\\
SFT-CoT & 42.1 $\pm$ 0.4 & 47.2 $\pm$ 0.8 & 34.1 $\pm$ 0.6 & 56.2 $\pm$ 0.6 & 21.5 $\pm$ 0.2 & 80.0 $\pm$ 0.4 & 48.8 $\pm$ 0.3\\
Implicit CoT & 33.7 $\pm$ 0.3 & 46.5 $\pm$ 0.5 & 39.3 $\pm$ 0.4 & 54.6 $\pm$ 0.5 & 30.0 $\pm$ 0.3 & 82.5 $\pm$ 0.8 & 53.6 $\pm$ 1.0\\
Coconut & 38.2 $\pm$ 0.6 & 38.9 $\pm$ 0.4 & 37.6 $\pm$ 0.9 & 53.9 $\pm$ 0.9 & 32.6 $\pm$ 0.4 & 80.4 $\pm$ 0.5 & 49.4 $\pm$ 0.2\\
NSO & 36.4 $\pm$ 0.5 & 44.1 $\pm$ 0.8 & 38.4 $\pm$ 0.3 & 54.8 $\pm$ 0.2 & 28.2 $\pm$ 0.2 & 82.4 $\pm$ 0.4 & 51.2 $\pm$ 0.9\\
{Algorithm \ref{alg_quant_cot}} & \textbf{44.3} $\pm$ 0.8 & \textbf{50.1} $\pm$ 0.1 & \textbf{42.5} $\pm$ 0.6 & \textbf{59.6} $\pm$ 0.6 & \textbf{35.4} $\pm$ 0.8 & \textbf{87.2} $\pm$ 0.2 & \textbf{59.1} $\pm$ 0.7\\
\bottomrule
\end{tabular}
}
\end{table}

\subsection{Experimental Results}

\textbf{Improved multi-step reasoning}. Table \ref{tab_results} details the test accuracy of our fine-tuned models across seven reasoning datasets. Our method consistently outperforms all baselines, achieving an average improvement of \textbf{3.5}\% across five graph algorithmic and two symbolic state-tracking tasks. Specifically, by combining sequence compression with quantization-aware training, our algorithm surpasses latent-reasoning baselines (implicit reasoning and Coconut) by up to 6.5\%.

\textbf{Enhanced length generalization.} Next, we evaluate the trained models on examples with 10\% longer input length than training examples in the lower half of Table \ref{tab_results}. Our approach consistently exceeds baseline performance by an average of \textbf{8.2}\%, demonstrating its effectiveness in suppressing error amplification during extended generation.

\textbf{Reduction of the error amplification factor.} Finally, to explain these performance gains, we compare the empirical error amplification factors across methods. Our algorithm drastically reduces this factor by \textbf{3}--\textbf{8}$\times$ compared to baselines, validating our method's regularization on Jacobian spectral norms. Quantitatively, on Qwen-1.5B across the three tasks in Figure \ref{ref_error_amplification_factors_intro}, our method reduces $\sum_{i=1}^{T-1}\rho^{(T)}_{i}$ by $43\%$ relative to baselines, with a $72\%$ average reduction in the inference loss.

We further fine-tune Llama-1B and Gemma-2B on Bellman-Ford with SFT-CoT, Coconut, and Algorithm \ref{alg_quant_cot}, and measure the full decomposition of \eqref{eq_main_bound} for each. We also find that the ordering of methods by bound value matches their ordering by empirical loss gap on both models. Relative to Coconut, our method reduces the bound by $77\%$ and the empirical test loss gap by $58\%$ on average. The absolute scale of the bound differs sharply between architectures. We report the full results in Table \ref{tab_cross_model} of Appendix \ref{sec_additional_experiments}.

\textbf{Ablation studies.} We evaluate the individual components of our method through ablation studies. First, removing either length compression or quantization-aware training degrades performance by 4.3\% and 3.2\%, confirming both are essential. Second, we compare intermediate step sampling strategies for length compression. We consider random sampling, removing the first, last, or equally spaced steps. Among them, random sampling yields slightly better performance and is thus adopted. Finally, we find that injecting quantization noise during training noticeably outperforms standard random Gaussian noise by 9\%.

\section{Related Work}
\label{sec:related_work}

\textbf{Theoretical understanding of multi-step reasoning.} 
Studying the training dynamics of transformers is key to understanding how multi-step reasoning improves generalization. 
Prior work analyzes single-head transformers trained on parity-check tasks \cite{wen2024sparse, huang2025howtransformers}, %
and state tracking tasks \cite{huang2025transformers_length}, %
while \cite{yang2025multi} studies multi-head specialization on path-finding in binary trees. %
For the in-context weight prediction task, \cite{huang2025transformers} establishes convergence for linear targets using single-head linear transformers. We extend the analysis to nonlinear multi-head transformers and quadratic functions. %
Complementary theoretical analyses include entropy-calibration analyses of error accumulation in autoregressive generation~\cite{cao2025entropy} and information-theoretic frameworks that quantify per-step information gain in multi-step reasoning~\cite{ton2025understanding}. %

\citet{joshi2025theory} study learning general function classes of next-token generators with CoTs in the realizable setting, showing that the sample complexity scales with the VC dimension of the hypothesis class and $\log(T)$ of the CoT length $T$. \citet{altabaa2025cot} further improve the bound with a notion of CoT-information, which achieves faster rates than applying uniform convergence to training loss on chain-of-thought sequences. %

Prior work has also examined the expressiveness of transformers with chain-of-thought. \citet{li2024chain} show that CoT enables transformers to perform computations beyond the expressiveness limits of models without CoT. Expressiveness has been further characterized through circuit complexity under logarithmic, linear, and polynomial generation lengths \cite{merrill2023expressive}. Increasing CoT length has been shown to outperform parallel scaling of multiple chains on graph connectivity tasks \citep{mirtaheri2025let}.

\textbf{Generalization of language models.} Recent work derives generalization bounds for LLMs via compression and subspace training \cite{lotfi2024non, lotfi2022pac}, %
with tighter martingale-based extensions that explain empirical pretraining scaling laws~\citep{finzi2025compute}. %
A separate line of work derives PAC-Bayes bounds for fine-tuning that depend on the loss Hessian~\citep{ju2022robust, ju2023generalization}. %

\textbf{Optimization methods for multi-step reasoning.} Several recent works aim to improve multi-step reasoning beyond supervised fine-tuning. \citet{yao2025optimizing} frame this process as a latent-variable problem, minimizing gradient variance to efficiently allocate sampling budgets. A complementary line of work explores implicit reasoning. Implicit chain-of-thought \citep{deng2024explicit} gradually removes intermediate steps to internalize computation. Continuous chain-of-thought \citep{hao2025training} shifts reasoning into a continuous latent space. Semantic chain-of-thought \citep{he2025semcot} aligns implicit and explicit steps via contrastive learning. Empirically, \citet{yeo2025demystifying} investigate how long reasoning chains emerge in large language models, highlighting training computation and reward shaping as key factors. More broadly, multi-step reasoning traces also appear in experimental design settings \cite{xiong2024data,xiong2025automated}, where trajectory-level data is revealed over time.

\section{Conclusion}

This work provides an analysis of multi-step reasoning by bounding its inference errors via the product of input Jacobian spectral norms across generation steps. We prove that transformers trained for in-context weight prediction converge to a multi-head construction where this error amplification factor decays, achieving polynomially small test loss. We introduce a training method that explicitly controls this factor using reasoning length compression and quantization-aware training. Across seven algorithmic tasks, our approach consistently improves test accuracy and length generalization, with reduced empirical amplification factors in fine-tuned models.

\section*{Acknowledgement}

This work received financial support from NSF award IIS-2412008 and a startup fund from Northeastern University.
D. Li is supported in part by a JPMorgan Chase PhD Fellowship.
Z. Zhang and M. Duan received additional financial support from Khoury PhD fellowships.

{}

\begin{refcontext}[sorting=nyt]
\printbibliography
\end{refcontext}

\newpage
\appendix
\section{Complete Proofs}
This appendix provides complete proofs for all theoretical results presented in the main paper. We organize the material in the following structure:
\begin{itemize}
    \item   \textbf{Appendix~\ref{sec_proof_test_error_bound}} provides the full proof for Lemma~\ref{thm_test_error_bound}, establishing the error bound between the inference loss and the training loss in $T$-step generation. The proof is via a recursive expansion of the prediction error using first-order Taylor expansion. Then, we derive the bound via spectral norms of Jacobians taken through the input space.
    
    \item \textbf{Appendix~\ref{sec_proof_linear_transformer}} establishes the error bounds and noise stability analysis for linear self-attention transformers trained on the weight prediction task for linear functions. We begin with preliminaries that reformulate the training and inference losses, then present the complete proof of Theorem~\ref{theorem_noise_stability_linear_self_attention_formal}  (Section~\ref{sec_linear_main_proof}), where standard Gaussian concentration bounds are cited and applied directly.
    
    \item \textbf{Appendix~\ref{sec_proof_nonlinear_transformer}} extends the analysis to nonlinear self-attention transformers trained on quadratic functions. We first describe the problem setup, then provide the explicit construction of a three-head transformer that implements gradient descent on quadratic losses (Section~\ref{sec_construction_transformers_quadratic_function}). We derive the gradient dynamics under chain-of-thought training (Section~\ref{sec_nonlinear_gradient}) and prove the convergence in Theorem~\ref{theorem_convergence_three_head} (Section~\ref{sec_nonlinear_convergence}).
\end{itemize}

\paragraph{Notations.} For any vector $v$, let $\norm{v}$ be the Euclidean norm of $v$. For any matrix $X\in\real^{m\times n}$, let $\normFro{X}$ be the Frobenius norm and $\bignorms{X}_2$ be the spectral norm of $X$. If $X$ is a square matrix, then the trace of $X$, $\bigtr{X}$, equals the sum of $X$'s diagonal entries.

For two functions $f(n)$ and $g(n)$, we write $g(n) = O(f(n))$ if there exists a fixed value $C$ that does not grow with $n$ such that $g(n) \le C \cdot f(n)$ when $n$ is large enough. 
We write $g(n) = \Theta(f(n))$ if there exists a fixed value $C_1$ and $C_2$ that do not grow with $n$ such that $C_1 f(n) \leq g(n) \le C_2 \cdot f(n)$ when $n$ is large enough. 

\subsection{Proof of Lemma~\ref{thm_test_error_bound}}\label{sec_proof_test_error_bound}

In this section, we provide the full proof for Lemma \ref{thm_test_error_bound}, which shows a bound on the gap between the inference loss and the true prediction loss conditioned on the ground truth.

For a given input $\mathbf{x} \sim \mathcal{D}$, let $y^{(t)}$ denote the ground-truth label at step $t$. We denote the model's predictions under two regimes:
\begin{enumerate}
    \item The prediction conditioned on the ground-truth preceding steps, as in the training loss, is denoted as $f_W(\mathbf{z}^{(t-1)})$.
    \item the prediction based on $t$-step autoregressive generation in the inference loss is denoted as $f_W(\widehat{\mathbf{z}}^{(t-1)})$.
\end{enumerate}
We define their deviations from the label $y^{(t)}$ at step $t$ as $e^{(t)}(\mathbf{x}) = f_W(\mathbf{z}^{(t-1)}) - y^{(t)}$, and $\widehat{e}^{(t)}(\mathbf{x}) = f_W(\widehat{\mathbf{z}}^{(t-1)}) - y^{(t)}$. The prediction gap caused by autoregressive accumulation is $$\Delta^{(t)}(\mathbf{x}) = f_W(\widehat{\mathbf{z}}^{(t-1)}(\mathbf{x})) - f_W(\mathbf{z}^{(t-1)}(\mathbf{x})).$$ By definition, we have the relation:
\begin{align*}
    \widehat{e}^{(t)}(\mathbf{x}) = e^{(t)}(\mathbf{x}) + \Delta^{(t)}(\mathbf{x}).
\end{align*}
For the Jacobian norms, we write $\norm{J^{(t)}_i} := \sup_{\mathbf{x} \in \cX^\ast}\norm{J^{(t)}_i(\mathbf{x})}$ for the norm with the supremum taken, so that $\rho^{(t)}_i = \norm{J^{(t)}_i} + \sum_{k=i+1}^{t-1}\norm{J^{(t)}_k}\,\rho^{(k)}_i$ with $\rho^{(t)}_t = 1$.
Next, we present the complete proof of Lemma \ref{thm_test_error_bound}, based on a recursive decomposition for $\Delta^{(t)}(\mathbf{x})$. 

\begin{proof}[Proof of Lemma \ref{thm_test_error_bound}]
First, we bound the loss gap via Lipschitz continuity. We evaluate the two losses at step $T$. By the assumption that the loss function $\ell(\cdot, \cdot)$ is $C_1$-Lipschitz continuous with respect to its first argument, the absolute gap between the inference loss and the reference loss for a specific input $\mathbf{x}$ is bounded by:
\begin{equation}
    \bigl| \ell\bigl(f_W(\widehat{\mathbf{z}}^{(T-1)}(\mathbf{x})), y^{(T)}\bigr) - \ell\bigl(f_W(\mathbf{z}^{(T-1)}(\mathbf{x})), y^{(T)}\bigr) \bigr| \; \leq \; C_1 \norm{\Delta^{(T)}(\mathbf{x})}.
    \label{eq_loss_bound}
\end{equation}
To bound the loss gap, we bound the norm of the final prediction gap $\norm{\Delta^{(T)}(\mathbf{x})}$.
We conduct a recursive expansion of the prediction gap $\norm{\Delta^{(T)}(\mathbf{x})}$. 
At the base case of $t=1$, both predictions rely solely on $\mathbf{x}$. Thus, $\Delta^{(1)}(\mathbf{x}) = 0$ and $\norm{\widehat{e}^{(1)}(\mathbf{x})} = \norm{e^{(1)}(\mathbf{x})}$.

For an arbitrary step $t > 1$, we apply first-order Taylor's expansion to expand the prediction on the generated prefix around the correct step sequence. Subtracting $f_W(\mathbf{z}^{(t-1)}(\mathbf{x}))$ from $f_W(\widehat{\mathbf{z}}^{(t-1)})$, we obtain the prediction gap:
\begin{equation}
    \Delta^{(t)}(\mathbf{x}) = \sum_{i=1}^{t-1} J^{(t)}_i(\mathbf{x}) \, \widehat{e}^{(i)}(\mathbf{x}) + \mathcal{R}^{(t)}(\mathbf{x}), \label{eq_deltat}
\end{equation}
where $J^{(t)}_i(\mathbf{x})$ is the step-to-step Jacobian evaluated at the ground-truth context, and $\mathcal{R}^{(t)}(\mathbf{x})$ is the higher-order Taylor remainder.

Under the assumption that the Jacobian of $f_W$ is $C_2$-Lipschitz continuous, the norm of the remainder $\mathcal{R}^{(t)}(\mathbf{x})$ is bounded by a quadratic function of the input deviation, based on the mean value theorem:
\begin{equation}\label{eq_sum_R}
    \norm{\mathcal{R}^{(t)}(\mathbf{x})} \; \leq \; \frac{C_2}{2} \Bigl( \sum_{i=1}^{t-1} \norm{\widehat{e}^{(i)}(\mathbf{x})} \Bigr)^2 \le \frac {C_2} 2 \left( \sum_{i=1}^{t-1} r_i \right)^2,
\end{equation}
based on \eqref{eq_per_step_levels} (with $\mathcal{R}^{(1)}(\mathbf{x}) = 0$).
Applying the triangle inequality to \eqref{eq_deltat}, we have:
\begin{equation}
    \norm{\Delta^{(t)}(\mathbf{x})} \; \leq \; \sum_{i=1}^{t-1} \norm{J^{(t)}_i} \norm{\widehat{e}^{(i)}(\mathbf{x})} + \norm{\mathcal{R}^{(t)}(\mathbf{x})}.
    \label{eq_delta_z_recursive}
\end{equation}
Using $\widehat{e}^{(t)}(\mathbf{x}) = e^{(t)}(\mathbf{x}) + \Delta^{(t)}(\mathbf{x})$ and applying the triangle inequality, the inference error norm is bounded by:
\begin{equation*}
    \norm{\widehat{e}^{(t)}(\mathbf{x})} \leq \norm{e^{(t)}(\mathbf{x})} + \norm{\Delta^{(t)}(\mathbf{x})}.
\end{equation*}
Substituting  \eqref{eq_delta_z_recursive} above yields the full recurrence relation for the inference error:
\begin{equation}
    \norm{\widehat{e}^{(t)}(\mathbf{x})} \; \leq \; s_t(\mathbf{x}) + \sum_{i=1}^{t-1} \norm{J^{(t)}_i} \norm{\widehat{e}^{(i)}(\mathbf{x})}, \text{ where } s_t(\mathbf{x}) := \norm{e^{(t)}(\mathbf{x})} + \norm{\mathcal{R}^{(t)}(\mathbf{x})}. \label{eq_delta_recursion}
\end{equation}
To solve this recurrence, we recursively expand $\norm{\widehat{e}^{(i)}(\mathbf{x})}$ in $s_i(\mathbf{x})$. The first-order components become a sum of all possible directed step-to-step paths. The product of the Jacobian spectral norms along these paths forms the cumulative amplification factor $\rho^{(t)}_i$, defined recursively as:
\begin{equation*}
    \rho^{(t)}_i = \sup_{\mathbf{x} \in \cX^\ast} \norm{J^{(t)}_i(\mathbf{x})} + \sum_{k=i+1}^{t-1} \left( \sup_{\mathbf{x} \in \cX^\ast} \norm{J^{(t)}_k(\mathbf{x})} \cdot \rho^{(k)}_i \right), \text{ with } \rho^{(t)}_t = 1.
\end{equation*}
Formally, we show that
\begin{align}
    \norm{\widehat{e}^{(t)}(\mathbf{x})} \; \leq \; \sum_{i=1}^{t} \rho^{(t)}_i \, s_i(\mathbf{x}), \text{ for every } t \geq 1 ,
    \label{eq_unrolled}
\end{align}
which holds at $t = 1$ by the base case. Assuming \eqref{eq_unrolled} up to step $t-1$, substituting it into \eqref{eq_delta_recursion}, we have
\begin{align}
    \norm{\widehat{e}^{(t)}(\mathbf{x})} & \leq s_t(\mathbf{x}) + \sum_{k=1}^{t-1}\bigl(\sum_{i=k}^{t-1}\norm{J^{(t)}_i}\rho^{(i)}_k\bigr)s_k(\mathbf{x}) \notag \\
    & \leq s_t(\mathbf{x}) + \sum_{k=1}^{t-1}\bigl(\norm{J^{(t)}_k} \, \rho^{(k)}_k + \sum_{i=k+1}^{t-1} \norm{J^{(t)}_i} \, \rho^{(i)}_k \bigr)s_k(\mathbf{x}) \notag  \\
    & \leq  s_t(\mathbf{x}) + \sum_{k=1}^{t-1} \rho^{(t)}_k s_k(\mathbf{x}), 
    \label{eq_rho_telescope}
\end{align}
which completes the induction.
By replacing $\rho^{(T)}_i$ and carrying each remainder to step $T$ by its own factor $\rho^{(T)}_t$, the final prediction gap at step $T$ is bounded by:
\begin{equation}
    \norm{\Delta^{(T)}(\mathbf{x})} \; \leq \; \sum_{i=1}^{T-1} \rho^{(T)}_i \norm{e^{(i)}(\mathbf{x})} \; + \; \sum_{t=2}^{T} \rho^{(T)}_t \norm{\mathcal{R}^{(t)}(\mathbf{x})}.
    \label{eq_final_delta_z}
\end{equation}
Substituting \eqref{eq_final_delta_z} into \eqref{eq_loss_bound}, we obtain:
\begin{align}
    & \bigl| \ell\bigl(f_W(\widehat{\mathbf{z}}^{(T-1)}(\mathbf{x})), y^{(T)}\bigr) - \ell\bigl(f_W(\mathbf{z}^{(T-1)}(\mathbf{x})), y^{(T)}\bigr) \bigr| \notag  \\
    \leq  &  C_1 \sum_{i=1}^{T-1} \rho^{(T)}_i \norm{e^{(i)}(\mathbf{x})} \; + \; C_1 \sum_{t=2}^{T} \rho^{(T)}_t \norm{\mathcal{R}^{(t)}(\mathbf{x})}. 
\end{align}
For every $\mathbf{x}$ in the support of $\cD$, recall from \eqref{eq_per_step_levels} that $\norm{e^{(i)}(\mathbf{x})} \le \varepsilon_i$ and $\norm{\mathcal{R}^{(t)}(\mathbf{x})} \le \frac{C_2}{2}\bigl(\sum_{i<t} r_i\bigr)^2$ from \eqref{eq_sum_R}, we have
\begin{align}
    & \bigl| \ell\bigl(f_W(\widehat{\mathbf{z}}^{(T-1)}(\mathbf{x})), y^{(T)}\bigr) - \ell\bigl(f_W(\mathbf{z}^{(T-1)}(\mathbf{x})), y^{(T)}\bigr) \bigr| \notag \\
    \leq  & C_1 \sum_{i=1}^{T-1} \rho^{(T)}_i \varepsilon_i \; + \; \frac{C_1 C_2}{2} \sum_{t=2}^{T} \rho^{(T)}_t \Bigl( \sum_{i=1}^{t-1} r_i \Bigr)^2. 
    \label{eq_main_bound_radius}
\end{align}
Taking the expectation over the data distribution $\mathbf{x} \sim \mathcal{D}$, we have
\begin{align}
    \bigl|L(f_W; \cD) - L^{\star}(f_W; \cD) \bigr|
    & =  \Bigl| \exarg{\mathbf{x} \sim \cD}{\ell\bigl(f_W(\widehat{\mathbf{z}}^{(T-1)}(\mathbf{x})), y^{(T)}\bigr) - \ell\bigl(f_W(\mathbf{z}^{(T-1)}(\mathbf{x})), y^{(T)}\bigr)} \Bigr| \notag \\
    & \leq  \exarg{\mathbf{x} \sim \cD}{\bigl| \ell\bigl(f_W(\widehat{\mathbf{z}}^{(T-1)}(\mathbf{x})), y^{(T)}\bigr) - \ell\bigl(f_W(\mathbf{z}^{(T-1)}(\mathbf{x})), y^{(T)}\bigr) \bigr|} \notag  \\
    & \leq  C_1 \sum_{i=1}^{T-1} \rho^{(T)}_i \varepsilon_i \; + \; \frac{C_1 C_2}{2} \sum_{t=2}^{T} \rho^{(T)}_t \Bigl( \sum_{i=1}^{t-1} r_i \Bigr)^2,
    \label{eq_expectation_chain}
\end{align}
which completes the proof of \eqref{eq_main_bound}.
\end{proof}

\subsection{Proof for the Linear Case}\label{sec_proof_linear_transformer}

We study the inference loss of a one-layer linear self-attention transformer trained to predict weight vectors of linear functions. 
Following the setup in Section~\ref{sec_training_dynamics}, we describe the in-context weight prediction task with linear functions. The weight vector is $\theta^\star\sim\mathcal{N}(0, \id_d)$, and the $n$ in-context examples are $x_j\sim\mathcal{N}(0, \id_d)$ with labels $y_j = \langle\theta^\star, x_j\rangle$ for $j\in[n]$, all independent. We write $X = [x_1,\dots,x_n]\in\real^{d\times n}$, $y = [y_1,\dots,y_n] = \theta^{\star\top}X$, and $S := \frac1n XX^\top$. The chain of thought consists of the gradient descent iterates on the squared loss on average over $n$ examples with learning rate $\eta$, started at $\theta_0 = 0_d$:
\begin{align}
\theta_{i+1} = \theta_i - \eta S(\theta_i - \theta^\star), \quad\text{so that}\quad \theta_i = \big(I - (I-\eta S)^i\big)\theta^\star \quad\text{for } i\ge 0 . \label{eq_linear_gd_iterates}
\end{align}
With $d_e = 2d+2$, the input that contains the correct steps $\theta_0,\dots,\theta_i$ is
\[
Z_i =
\begin{bmatrix}
X & 0_d & 0_d & \cdots & 0_d \\
y & 0 & 0 & \cdots & 0 \\
0_{d\times n} & \theta_0 & \theta_1 & \cdots & \theta_i \\
0_{1\times n} & 1 & 1 & \cdots & 1
\end{bmatrix}
\in \real^{d_e\times(n+i+1)} .
\]
The label of step $i+1$ is $\bar\theta_{i+1} := \theta_{i+1}$ for $i\le T-2$, and the label of the final step is $\bar\theta_T := \theta^\star$. 

We first rewrite the training and inference losses. During training, the network is trained to predict the next iterate given the preceding ground-truth sequences. We consider the gradient flow dynamics over the population loss: 
\begin{align*}
L^{\mathrm{CoT}}(f_W)
= \exarg{X, \theta^\star}
{\frac{1}{2}
\sum_{i=0}^{T-1}
\norm{f_{W}(Z_i) - (0_d, 0, \bar\theta_{i+1}, 1)}^2}.
\end{align*}

At inference time, we apply the model to generate $T$ steps before it outputs the final prediction. Starting from $Z_0$, we define the model prediction at each step as $\widehat Z_{i+1} = \big[ \widehat Z_i,  f_{W}(\widehat Z_i) \big]$, for $i = 0,1,\dots, T-1$, where the last column of $\widehat{Z}_i$ encodes the estimate $\widehat{\theta}_i$, and $\widehat\theta_0 = \theta_0$. Then, the inference loss, in \eqref{eq_test_loss}, with the squared loss, can be written as:
\begin{align}
L(f_W;\cD)  = \exarg{X, \theta^\star }
{\frac{1}{2}
\norm{f_{W}(\widehat{Z}_{T-1}) - (0_d, 0, \theta^{\star}, 1)}_2^2}. \label{eq_eval_loss_linear}
\end{align}

For linear functions, we consider a one-layer linear self-attention transformer with a residual connection. 
Given an input embedding matrix $Z \in \mathbb{R}^{d_e \times m}$ of length $m$, we denote
\begin{align}
f_W(Z) =  Z_{[:, -1]} + W^{PV} Z \cdot \frac{Z^\top W^{KQ} Z_{[:, -1]}}{n} ,
\end{align}
We denote $W^{PV} \in \mathbb{R}^{d_e \times d_e}$ as the weight matrix that merges the projection and value matrix, and $W^{KQ} \in \mathbb{R}^{d_e \times d_e}$ is the weight matrix that merges the key and query matrix. ${[:, -1]}$ indicates taking the last column of the matrix. $n$ is the number of in-context examples. 

Next, we state results from the previous paper~\cite{huang2025transformers}, which we will reuse in our statements. Under Assumption \ref{ass_init}, prior work~\cite{huang2025transformers} proves that, if we initialize the weight matrices with the block structure in Assumption \ref{ass_init}, the blocks that are initialized with $0$ will remain $0$ along the gradient flow trajectory. The initialization method is specified as follows to zero out the irrelevant blocks.

\begin{assumption}[Initialization]
\label{ass_init}
Let $\sigma_0 > 0$ be a constant. We assume the parameters are initialized as
\begin{align*}
W^{PV}(0) =
\begin{bmatrix}
0_{d\times d} & 0_{d\times 1} & 0_{d\times d} & 0_{d\times 1} \\
0_{1\times d} & 0_{1\times 1} & 0_{1\times d} & 0_{1\times 1} \\
W^{PV}_{31}(0) & 0_{d\times 1} & 0_{d\times d} & 0_{d\times 1} \\
0_{1\times d} & 0_{1\times 1} & 0_{1\times d} & 0_{1\times 1}
\end{bmatrix}, 
W^{KQ}(0) =
\begin{bmatrix}
0_{d\times d} & 0_{d\times 1} & W^{KQ}_{13}(0) & 0_{d\times 1} \\
0_{1\times d} & 0_{1\times 1} & 0_{1\times d} & W^{KQ}_{24} \\
0_{d\times d} & 0_{d\times 1} & 0_{d\times d} & 0_{d\times 1} \\
0_{1\times d} & 0_{1\times 1} & 0_{1\times d} & 0_{1\times 1}
\end{bmatrix}.
\end{align*}
Here, $W_{31}^{PV}(0) = \sum_{i=1}^d \lambda_i^{PV} u_i u_i^\top$ and $W_{13}^{KQ}(0) = \sum_{i=1}^d \lambda_i^{KQ} u_i u_i^\top$, where $\{u_i\}_{i=1}^d$ is an orthonormal basis. The matrices $W_{31}^{PV}(0)$ and $W_{13}^{KQ}(0)$ are symmetric and simultaneously diagonalizable, with eigenvalues satisfying $\lambda_i^{PV} \le -\sigma_0$, $\lambda_i^{KQ} \in [\sigma_0, 1/2]$, and for all $i \in [d]$. Finally, we fix $W^{KQ}_{24} = -1$ for all $t \geq 0$.
\end{assumption}

\begin{lemma}[Block-wise invariance under gradient flow~\cite{huang2025transformers}]
\label{lem_block_invariance}
Under Assumption~\ref{ass_init}, consider the gradient flow on the population CoT loss as
\[
\frac{d W}{dt} = - \nabla L^{\mathrm{CoT}}( W).
\]
Then, for all $t \ge 0$, the parameters $W^{PV}(t), W^{KQ}(t)$ take the following form:
\[
W^{PV}(t) =
\begin{bmatrix}
0_{d\times d} & 0_{d\times 1} & 0_{d\times d} & 0_{d\times 1} \\
0_{1\times d} & 0 & 0_{1\times d} & 0 \\
W^{PV}_{31}(t) & 0_{d\times 1} & 0_{d\times d} & 0_{d\times 1} \\
0_{1\times d} & 0 & 0_{1\times d} & 0
\end{bmatrix},
W^{KQ}(t) =
\begin{bmatrix}
0_{d\times d} & 0_{d\times 1} & W^{KQ}_{13}(t) & 0_{d\times 1} \\
0_{1\times d} & 0 & 0_{1\times d} & W^{KQ}_{24} \\
0_{d\times d} & 0_{d\times 1} & 0_{d\times d} & 0_{d\times 1} \\
0_{1\times d} & 0 & 0_{1\times d} & 0
\end{bmatrix}.
\]
\end{lemma}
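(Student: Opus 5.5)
The plan is to show that the set $\mathcal{M}$ of parameter pairs $(W^{PV}, W^{KQ})$ with the block sparsity pattern of Assumption~\ref{ass_init} is invariant under the gradient flow. The argument is the standard symmetry and uniqueness one. Since $L^{\mathrm{CoT}}$ is a polynomial in the weights (hence locally Lipschitz gradient), the ODE has a unique solution. It therefore suffices to show that for every $W\in\mathcal{M}$, the gradient $\nabla L^{\mathrm{CoT}}(W)$ has zero components on all blocks outside the allowed pattern. We exclude the fixed entry $W^{KQ}_{24}$ from the dynamics. Given this, the flow restricted to $\mathcal{M}$ (the projected ODE on the free blocks $W^{PV}_{31}, W^{KQ}_{13}$) is a genuine solution of the full ODE, and uniqueness identifies it with the trajectory.

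To compute the gradient at $W\in\mathcal{M}$, I first evaluate the forward pass on $Z_i$. Write $\theta_i$ for the last-column weight block. Then
\[
W^{KQ} Z_{i[:,-1]} = (W^{KQ}_{13}\theta_i,\,-1,\,0_d,\,0),
\]
so the attention score vector is $\frac1n Z_i^\top(\cdot)$. This gives $x_j^\top W^{KQ}_{13}\theta_i - y_j$ on the $n$ example columns and $0$ on the $\theta$ columns. The value product $W^{PV}Z_i$ only keeps $W^{PV}_{31}X$ in the third block row, so the output is
\[
f_W(Z_i)=\bigl(0_d,0,\;\theta_i+\tfrac1n W^{PV}_{31}X(X^\top W^{KQ}_{13}\theta_i - X^\top\theta^\star),\;1\bigr).
\]
Hence the residual $r_i = f_W(Z_i)-(0_d,0,\bar\theta_{i+1},1)$ is supported only on block $3$.

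The gradients are then
\[
\nabla_{W^{PV}}=\mathbb{E}\sum_i \tfrac1n\, r_i\,(Z_i a_i)^\top, \qquad
\nabla_{W^{KQ}}=\mathbb{E}\sum_i \tfrac1n\, Z_i\,\mathrm{diag}\text{-free }(W^{PV})^\top r_i\;Z_{i[:,-1]}^\top,
\]
where $a_i$ is the score vector. For $W^{PV}$, the row support is block $3$ because $r_i$ is. The column support comes from $Z_i a_i$, whose blocks 2, 3, 4 are $\frac1n y(X^\top W^{KQ}_{13}\theta_i-y)$, $0$ and $0$ respectively. So one must show that the expectation of the block-$(3,2)$ entry vanishes. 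Here I plan to use the sign symmetry $(X,\theta^\star)\mapsto(X,-\theta^\star)$. Under it, $y$, $\theta_i$ and $r_i$ all flip sign, since the GD iterates are linear in $\theta^\star$. The $(3,2)$ term is therefore odd in $\theta^\star$ and has zero expectation. The $(3,1)$ term is even, which is consistent with it surviving. For $W^{KQ}$, the vector $(W^{PV})^\top r_i$ lies in block 1 (equal to $W^{PV\top}_{31} r_{i,3}$). Then $Z_i$ applied to the $a_i$-weighted form yields rows in blocks 1, 2 (from $X$, $y$ on the example columns), and the column factor $Z_{i[:,-1]}^\top$ has blocks 3 and 4 (namely $\theta_i^\top$ and $1$). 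This produces candidate nonzero blocks $(1,3)$, $(1,4)$, $(2,3)$, $(2,4)$. Block $(2,4)$ is fixed and so is irrelevant. For blocks $(1,4)$ and $(2,3)$ I check the parity in $\theta^\star$: each is a product of an odd number of factors linear in $\theta^\star$ ($r_i$, $a_i$, $y$, $\theta_i$), hence odd, hence zero in expectation.

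The main obstacle is bookkeeping this parity argument cleanly for every off-pattern block, including the careful form of the $W^{KQ}$ gradient. The score enters both the $(1,\cdot)$ and $(2,\cdot)$ rows through $Z_i$, and one must track how many $\theta^\star$-linear factors appear in each. If a block were even in $\theta^\star$, I would fall back on the rotational symmetry $X\mapsto UX$, $\theta^\star\mapsto U\theta^\star$ with $U$ orthogonal. I would use this as well to argue that $W^{PV}_{31}$ and $W^{KQ}_{13}$ stay simultaneously diagonalizable in the basis $\{u_i\}$, although the lemma only requires the block form.
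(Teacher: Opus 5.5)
Your argument is correct. The paper does not prove this lemma; it imports it from \cite{huang2025transformers}. The closest in-paper argument is the proof of Lemma~\ref{lemma_gradient_structure_quadratic} for the quadratic case, and you follow the same template. You compute the forward pass at a block-sparse $W$, note that the residual lives only in the $\theta$-block, read off the block supports of both gradients, and remove the remaining off-pattern blocks with a Gaussian reflection.

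The one substantive difference is the reflection, and your choice is the one that works here. The paper uses $(X,\theta^\star)\mapsto(-X,\theta^\star)$, which suits the quadratic case because $y=(X^\top\theta^\star)^{\odot 2}$ and the gradient descent iterates are invariant under it. In the linear case, with $y=\theta^{\star\top}X$, that reflection gives nothing. The $(3,2)$ block $r_{i,3}\cdot\frac1n y(X^\top W^{KQ}_{13}\theta_i-y^\top)$ of $\nabla_{W^{PV}}$ and the $(1,4)$ and $(2,3)$ blocks of $\nabla_{W^{KQ}}$ are all invariant under $X\mapsto -X$. Your map $\theta^\star\mapsto-\theta^\star$ makes $y$, $\theta_i$, $\bar\theta_{i+1}$ and $r_i$ odd, because everything is linear in $\theta^\star$ given $X$ and $\theta_0=0$. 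You also make explicit the invariance-plus-uniqueness step (polynomial loss, so a locally Lipschitz vector field), which the paper's quadratic proof leaves implicit.

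One correction. Your displayed formula for $\nabla_{W^{KQ}}$ is garbled. Because the attention is linear, it is
\[
\nabla_{W^{KQ}} L^{\mathrm{CoT}} = \mathbb{E}\sum_{i} \tfrac{1}{n}\, Z_i Z_i^\top (W^{PV})^\top r_i\, Z_{i[:,-1]}^\top ,
\]
with no score factor, so the \emph{$a_i$-weighted form} in your text does not arise. Your parity count is right only for this formula:
\begin{itemize}
\item The $(1,4)$ block $\frac1n XX^\top W^{PV\top}_{31} r_{i,3}$ contains the single odd factor $r_i$.
\item The $(2,3)$ block $\frac1n yX^\top W^{PV\top}_{31} r_{i,3}\theta_i^\top$ contains the three odd factors $y$, $r_i$, $\theta_i$.
\end{itemize}
An extra factor of $a_i$ would make both blocks even, and the argument would fail. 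The $(2,4)$ block $\frac1n yX^\top W^{PV\top}_{31} r_{i,3}$ is even and in general nonzero. So, as you note, it is essential that $W^{KQ}_{24}$ is held fixed.
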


Then, we can rewrite the linear self-attention model only in terms of $W^{PV}_{31}$ and $W^{KQ}_{13}$. 

\begin{definition}[Reduced transformer]\label{def_reformulation}
Define $S := \frac{1}{n}XX^\top$. Under Assumption \ref{ass_init}, for a given input $Z_i$, we can rewrite the linear self-attention model by
\[
f_W(Z_i) = \Big [0_d, 0,   \big( \theta_i + W^{PV}_{31} S\bigl(W^{KQ}_{13} \theta_i - \theta^\star\bigr) \big),   1 \Big]^\top.
\]
where $\theta_i$ is the last column of ${Z_i}_{[d+1: 2d+1, -1]}$ and $\theta^\star$ is the ground-truth weights of the linear function. Then, we can denote a reduced model by: 
\[
\tilde{f}_W(Z_i) =  \theta_i + W^{PV}_{31} S\bigl(W^{KQ}_{13} \theta_i - \theta^\star\bigr). 
\]
\end{definition}
The right-hand side depends on $Z_i$ only through its last column. The same formula holds for the inputs during inference $\widehat Z_i$. Below, we write $\tilde f_W(\theta) := \theta + W^{PV}_{31} S(W^{KQ}_{13} \theta - \theta^\star)$ for $\theta\in\real^d$, with $X$ and $\theta^\star$ fixed. This map is affine, and its Jacobian is given by
\begin{align}
J := \frac{\partial \tilde f_W(\theta)}{\partial\theta} = I + W^{PV}_{31} S W^{KQ}_{13}, \label{eq_linear_jacobian}
\end{align}
which does not depend on $\theta$.

\begin{theorem}[Global convergence~\cite{huang2025transformers}]\label{theorem_global_convergence}
Suppose $n = \Theta(d\log^5d)$, $\eta \in (0.1, 0.9)$, $T = \lceil c\log d \rceil$, and $c\log(\frac{1}{1-\eta}) > 2$. Under Assumption \ref{ass_init} with $\sigma_0 > \frac{3(1-\eta)}{(2-\eta)T}$, if we run gradient flow
on the population loss $L^{\mathrm{CoT}}(f_W)$, then after time $t = O(\log d + \log \frac 1 \epsilon)$, for any $\epsilon \geq \Theta(\frac{\log d}{d^{c \log(\frac{1}{1-\eta}) - 2}})$, we have
$L^{\mathrm{CoT}}(f_{W(t)}) \le \varepsilon$. 
Moreover, there exist matrices $A$ and $B$, with $\norm{A}_{\mathrm{op}}, \norm{B}_{\mathrm{op}} \leq \Theta(d^{-\frac{1}{2}c\log (\frac{1}{1-\eta})+\frac{1}{2}})$, such that $W^{PV}_{31}(t) = (-\eta I + A)$ and $W^{KQ}_{13}(t) = (I + B)$. 
\end{theorem}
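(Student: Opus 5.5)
The plan is to reduce the population CoT loss to a finite-dimensional ODE on the eigenvalues of $W^{PV}_{31}$ and $W^{KQ}_{13}$, show that this ODE converges to the point $(-\eta, 1)$ up to a small perturbation, and read off both the loss bound and the structure $W^{PV}_{31} = -\eta I + A$, $W^{KQ}_{13} = I + B$.

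\textbf{Step 1: reduce the loss.} By Lemma~\ref{lem_block_invariance} and Definition~\ref{def_reformulation}, only $P := W^{PV}_{31}$ and $Q := W^{KQ}_{13}$ evolve. For $i\le T-2$, the step-$(i+1)$ residual is
\[
\tilde f_W(\theta_i)-\theta_{i+1} = (PSQ+\eta S)\theta_i - (P+\eta I)S\theta^\star ,
\]
and the final-step residual is $\theta_{T-1} + PS(Q\theta_{T-1}-\theta^\star) - \theta^\star$. I will substitute $\theta_i=(I-(I-\eta S)^i)\theta^\star$ from \eqref{eq_linear_gd_iterates} and take the expectation over $\theta^\star\sim\mathcal N(0,\id_d)$. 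This turns $L^{\mathrm{CoT}}$ into $\mathbb E_X$ of a sum of squared Frobenius norms of matrix polynomials in $S$, $P$, and $Q$.

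\textbf{Step 2: invariance and the eigenvalue ODE.} The law of $X$ is rotation invariant. I will use this to show that if $P$ and $Q$ are simultaneously diagonalizable in $\{u_j\}$ at time $0$, the gradient flow preserves this. To do so, I check that $\nabla_P L$ and $\nabla_Q L$ are spectral functions of $(P,Q)$ after averaging: conjugation by reflections $u_j\mapsto -u_j$ kills the off-diagonal entries. The dynamics then collapse to coupled ODEs for $(\lambda^{PV}_j,\lambda^{KQ}_j)$. Their coefficients are moments $\mathbb E[S^a\otimes S^b]$, which I will control using $\|S-I\|\lesssim\sqrt{d/n}=O(\log^{-5/2}d)$ with high probability for $n=\Theta(d\log^5 d)$. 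Up to $o(1)$ errors, the per-coordinate ODE therefore behaves like the scalar one with $S=I$.

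In the scalar model, the first $T-1$ terms force $\lambda^{PV}\lambda^{KQ}\approx-\eta$ and $\lambda^{PV}\approx-\eta$. Both constraints vanish exactly at $(-\eta,1)$. I will use the approximately conserved quantity $(\lambda^{PV})^2-(\lambda^{KQ})^2$, inherited from the bilinear structure $PSQ$, to keep the trajectory in a compact region. I will also show that $\lambda^{KQ}$ stays bounded away from $0$.

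\textbf{Step 3: convergence rate and the loss floor.} Once the trajectory stays in a region where the reduced loss satisfies a PL-type inequality, gradient flow converges linearly. This gives time $O(\log d+\log\frac1\epsilon)$, where the $\log d$ pays for the initial distance, which is polynomial in $d$ in the worst case.

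The irreducible part of the loss comes from the last step, whose target is $\theta^\star$ rather than $\theta_T$. At $(P,Q)=(-\eta I,I)$, its residual is $(I-\eta S)^{T}\theta^\star$. The spectrum of $S$ concentrates in $[1-o(1),1+o(1)]$, so this has norm $\lesssim(1-\eta)^{T}\sqrt d = d^{-c\log\frac1{1-\eta}+\frac12}$. This gives the floor $\Theta(\log d\cdot d^{-c\log\frac{1}{1-\eta}+2})$, with the extra factors coming from summing over steps and from $\mathbb E\|\theta^\star\|^2=d$. It also shows that the stationary point is shifted from $(-\eta,1)$ by $O(d^{-\frac12 c\log\frac1{1-\eta}+\frac12})$, which is exactly the size of $A$ and $B$.

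\textbf{Main obstacle.} The hardest part is Step 2's claim that the flow never approaches the degenerate region $\lambda^{KQ}\approx 0$ or $\lambda^{PV}\approx 0$, where the gradient is weak. There is a saddle at the origin, and the final-step term pulls in a different direction. I will try first to show that on the set $\lambda^{KQ}\ge\sigma_0$, the time derivative of $\lambda^{KQ}$ is nonnegative whenever $\lambda^{KQ}$ is small. I expect this to reduce to a comparison between the aggregated pull of the $T-1$ CoT terms, which scales like $T$ times a factor $(2-\eta)$ coming from $\sum_i\mathbb E\|(I-(I-\eta S)^i)\|^2$, and the at most $3(1-\eta)$ contribution of the final step. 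That comparison is precisely the hypothesis $\sigma_0>\frac{3(1-\eta)}{(2-\eta)T}$. Once this invariant region is secured, the rest is a perturbation of the $S=I$ scalar analysis.
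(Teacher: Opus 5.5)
The paper does not prove this theorem. It is imported from \cite{huang2025transformers}, and only the block invariance (Lemma~\ref{lem_block_invariance}) and the reduced model (Definition~\ref{def_reformulation}) are recorded, so your outline has to stand on its own. There is a genuine gap: the global phase, which you rightly call the crux, rests on three false claims, and Step~3's time bound fails on the plateau described below.

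The skeleton you propose is reasonable. It uses reflection symmetry to keep $W^{PV}_{31},W^{KQ}_{13}$ simultaneously diagonal, concentration of $S$, a scalar reduction, a global phase, and then local convergence. The near-optimum perturbation is benign, because the CoT residual $(AS-\eta SB+ASB)\theta_i-AS\theta^\star$ vanishes identically at $(-\eta I,I)$ for every $X$.

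For the global phase, write $p=\lambda^{PV}$, $q=\lambda^{KQ}$, $\gamma_i=1-(1-\eta)^i$, $u=p+\eta$, $v=pq+\eta$. With $S=I$, the per-coordinate loss is
\[
\ell=\tfrac12\sum_{i=0}^{T-1}(\gamma_i v-u-\delta_i)^2,\qquad \delta_{T-1}=(1-\eta)^T,\quad \delta_i=0 \text{ otherwise}.
\]
The three errors are:
\begin{itemize}
\item \emph{No conservation law.} $p^2-q^2$ is not approximately conserved. $P$ enters the loss not only through $PSQ$ but also through $-PS\theta^\star$, which comes from the frozen $W^{KQ}_{24}=-1$. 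One finds $\tfrac{d}{dt}(p^2-q^2)=-2p\,\partial_u\ell$, which is $\approx-2Tp^2(1-q)$ whenever $T|p|\gg1$.
\item \emph{Wrong saddle.} The origin is not stationary: there $\dot p\approx-1$. Since $\ell$ is strongly convex in $(u,v)$ and $\det\partial(u,v)/\partial(p,q)=p$, the only spurious critical point in this reduction is the saddle at $p=0$, $q=q^\ast:=\sum_i(1-\gamma_i)/\sum_i\gamma_i(1-\gamma_i)\approx\frac{2-\eta}{1-\eta}$. The real danger is $\lambda^{PV}$ reaching $0$, not $\lambda^{KQ}\to0$.
\item \emph{Wrong source of the hypothesis.} The final step cannot supply the $3(1-\eta)$ in the hypothesis. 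It differs from the other steps only by the offset $\delta_{T-1}\le d^{-c\log\frac1{1-\eta}}$. Also, $\sum_i\gamma_i^2=T-\tfrac2\eta+\tfrac1{\eta(2-\eta)}+o(1)$ is not ``$T$ times $(2-\eta)$''.
\end{itemize}

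The $\sigma_0$ condition instead comes from a competition inside the CoT terms. For $p<0$, to leading order,
\[
\dot q=|p|\Big[\,|p|\sum_i\gamma_i(1-\gamma_i q)-\eta\sum_i\gamma_i(1-\gamma_i)\Big],\qquad \eta\sum_i\gamma_i(1-\gamma_i)\le\tfrac{1-\eta}{2-\eta},
\]
where the $(2-\eta)$ comes from $\sum_i(1-\eta)^{2i}$. For $q\le\frac12$ the first sum is at least $\frac12(T-\frac1\eta)$. So $\dot q>0$ whenever $|p|>\frac{3(1-\eta)}{(2-\eta)T}$ and $T\ge 3/\eta$, which is what the hypothesis on $\sigma_0$ guarantees at $t=0$.

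What you then have to prove is a coupled invariant-region statement. The companion equation is
\[
\tfrac{d}{dt}|p|=\eta\sum_i(1-\gamma_i)(1-\gamma_iq)-|p|\sum_i(1-\gamma_iq)^2 .
\]
This keeps $|p|$ away from $0$ while $q<q^\ast$, and relaxes it quickly to a level $\Theta(1/T)$ while $q\le\frac12$. You must show that $|p|$ stays above the threshold for $\dot q>0$, and that $q$ climbs to $1$ without reaching $q^\ast$.

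This also breaks your Step~3. During the climb, $\|\nabla\ell\|=|p\,\partial_v\ell|=\Theta(1/T)$ while $\ell-\ell^\ast=\Theta(1)$, so no PL inequality with a $d$-independent constant holds there. The $O(\log d)$ in the time bound comes from the $\Theta(T)$ duration of this drift, not from the logarithm of the initial loss. PL only applies near $(-\eta,1)$.

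Finally, your loss-floor bookkeeping does not follow from your computation. At $(-\eta I,I)$ the loss is $\tfrac12\mathbb E\|(I-\eta S)^T\theta^\star\|^2=\Theta(d^{1-2c\log\frac1{1-\eta}})$, and the minimizer moves only $O((1-\eta)^T)$. The stated $\epsilon$-threshold and $\|A\|,\|B\|$ bounds are looser than this. That is harmless for upper bounds, but they are not ``exactly'' what the final step produces.
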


We state Theorem \ref{theorem_noise_stability_linear_self_attention_formal}, which is the main result for the linear case. 

\begin{theorem}[Slightly adapted from~\citep{huang2025transformers}] \label{theorem_noise_stability_linear_self_attention_formal}
Suppose $n=\Theta(d\log^5 d)$, $\eta\in(0.1,0.9)$, and $T=\lceil c\log d\rceil$, with $c\log \big(\frac1{1-\eta}\big)>2$. Under the initialization in Assumption \ref{ass_init} with $\sigma_0 > \frac{3(1-\eta)}{(2-\eta)T}$, if we run gradient flow on the population loss $L^{\mathrm{CoT}}(f_W)$, after time $t = O(\log d + \log \frac{1}{\epsilon_0})$, for any $\epsilon_0 = \Theta(\frac{\log d}{d^{c \log(\frac{1}{1-\eta}) - 2}})$, we have $L(f_W; \cD) \leq \epsilon_1$, for any $\epsilon_1 \in \Theta(({d^{- c\log(\frac{1}{1-\eta}) +2}} \log d), 1 \big)$. %
\end{theorem}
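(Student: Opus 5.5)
We combine the global convergence result of Theorem~\ref{theorem_global_convergence} with the affine structure of the reduced transformer (Definition~\ref{def_reformulation}). Because $\tilde f_W$ is affine in $\theta$, the inference recursion can be solved in closed form. No Taylor remainder appears, so the second term of Lemma~\ref{thm_test_error_bound} vanishes ($C_2 = 0$).

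\textbf{Step 1: structure of the converged weights.} Run gradient flow for time $t = O(\log d + \log\frac{1}{\epsilon_0})$. By Theorem~\ref{theorem_global_convergence}, $L^{\mathrm{CoT}}(f_W)\le\epsilon_0$, and
\[
W^{PV}_{31} = -\eta I + A, \qquad W^{KQ}_{13} = I + B, \qquad \norm{A},\norm{B}\le \delta := \Theta\bigl(d^{-\frac12 c\log\frac1{1-\eta}+\frac12}\bigr).
\]
Substituting into \eqref{eq_linear_jacobian} gives $J = I - \eta S + E$, where $\norm{E}\lesssim \delta\norm{S}$.

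Standard Gaussian concentration with $n=\Theta(d\log^5 d)$ gives $\norm{S - I}\le o(1)$ with high probability. Hence $\norm{I-\eta S}\le 1-\eta+o(1)$, and therefore $\norm{J}\le \gamma<1$ for a constant $\gamma$ close to $1-\eta$. The low-probability event where concentration fails is handled separately: all quantities there have bounded Gaussian moments, so its contribution is negligible after Cauchy--Schwarz.

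\textbf{Step 2: exact error recursion.} Let $\widehat\theta_i$ be the generated iterates. Since $\tilde f_W$ is affine, $\widehat\theta_{i+1}-\theta_{i+1} = J(\widehat\theta_i-\theta_i) + e_{i+1}$, where $e_{i+1} := \tilde f_W(\theta_i)-\theta_{i+1}$ is the teacher-forced one-step error. Unrolling,
\[
\widehat\theta_T - \theta^\star = \sum_{i=1}^{T} J^{T-i} e_i,
\]
with the last step's label equal to $\theta^\star$. Only $J^{(t)}_{t-1}$ is nonzero here, so $\rho^{(T)}_i\le\gamma^{T-i}$, and the amplification factor $\sum_i\rho^{(T)}_i\le 1/(1-\gamma)=O(1)$.

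Taking expectations and applying Cauchy--Schwarz gives
\[
L(f_W;\cD)\lesssim \Bigl(\sum_i \gamma^{T-i}\Bigr)\sum_i \gamma^{T-i}\,\mathbb{E}\norm{e_i}^2 \lesssim \frac{1}{(1-\gamma)^2}\, L^{\mathrm{CoT}}(f_W)\lesssim \epsilon_0 .
\]
This is the squared-loss version of Lemma~\ref{thm_test_error_bound}. Choosing $\epsilon_0$ as a constant multiple of any target $\epsilon_1$ in the stated range $\bigl(\Theta(d^{-c\log\frac1{1-\eta}+2}\log d),1\bigr)$ then yields $L(f_W;\cD)\le\epsilon_1$.

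\textbf{Main obstacle.} The delicate point is the bound $\norm{J}<1$ uniformly enough to control expectations. $S$ is random and its spectrum is only near $1$ with high probability. Moreover, the perturbations $A$ and $B$ may be random-matrix dependent in the population sense; they are deterministic weights but multiply the random $S$.

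I would first bound $\norm{J}\le \norm{I-\eta S}+\delta\norm{S}(1+\eta+\delta)$ on the good event $\{\norm{S-I}\le \epsilon\}$, which has probability $1-d^{-\omega(1)}$. Off this event, I would bound $\norm{\widehat\theta_T-\theta^\star}^2$ crudely, using $\norm{J}^{T}\le(1+C\norm{S})^{T}$ together with moments of $\norm{S}$. Since $T=O(\log d)$, this gives only a $\mathrm{poly}(d)$ factor, which the superpolynomially small failure probability absorbs.
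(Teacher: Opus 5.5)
Your proof is correct, but it controls the one-step residuals differently from the paper.

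\textbf{The paper's route.} It unrolls around the gradient-descent iterates, $\widehat\theta_T-\theta^\star=(\theta_T-\theta^\star)+\sum_{i=1}^{T}J^{T-i}\Delta_i$ with $\Delta_T=\tilde f_W(\theta_{T-1})-\theta_T$, and applies an unweighted Cauchy--Schwarz inequality, which costs a factor $T+1$. It then bounds each residual by hand from the perturbations $A,B$: $\mathbb{E}[\norm{\Delta_i}^2\mathbf{1}_{\mathcal G}]=O(d\beta^2)=O(d^{2-a})$, where $a=c\log\frac{1}{1-\eta}$ and $\beta$ is your $\delta$. It adds the exact gradient-descent error $\mathbb{E}[\norm{\theta_T-\theta^\star}^2\mathbf{1}_{\mathcal G}]\le 2d^{1-2a}$ separately. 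The result is $O(d^{2-a}\log d)$, with the $\log d$ coming from $T+1$.

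\textbf{Your route.} You fold $\theta_T-\theta^\star$ into the last residual, which is legitimate because the paper's $L^{\mathrm{CoT}}$ uses the final label $\bar\theta_T=\theta^\star$. Geometrically weighted Cauchy--Schwarz then gives only a factor $1/(1-\gamma)=O(1)$. You bound $\sum_i\mathbb{E}\norm{e_i}^2=2L^{\mathrm{CoT}}\le 2\epsilon_0$ directly from the convergence theorem. The bounds on $\norm{A},\norm{B}$ are used only to certify $\norm{J}\le\gamma<1$, and any $o(1)$ bound would suffice. What you get is that the inference loss is at most a constant multiple of the training loss. That is exactly the stability message of Lemma~\ref{thm_test_error_bound}, and it gives the $\epsilon_0$ in the statement a genuine role. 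The rate matches the paper's, with the $\log d$ now inherited from $\epsilon_0$.

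\textbf{What the paper's route buys.} It never uses the value of $L^{\mathrm{CoT}}$, and it separates the algorithmic error of exact gradient descent from the error of the learned weights. It would also go through unchanged if the last CoT label were the iterate $\theta_T$ rather than $\theta^\star$; yours would then need that extra term back.

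Your handling of the failure event is more explicit than the paper's bare $e^{-\Omega(d)}$ term: you use $\mathrm{poly}(d)$ moments of $(1+C\norm{S})^{T}$, valid since $T=O(\log d)$, against a superpolynomially small probability.
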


\subsubsection{Proof of Theorem \ref{theorem_noise_stability_linear_self_attention_formal}}
\label{sec_linear_main_proof}

Next, we prove the bounds on inference loss and the noise stability measure of the linear self-attention model on the weight prediction task. Recall that, to evaluate the inference loss, we apply the model $f_W$ to generate $T$ steps before it outputs the final prediction. 
Starting from the input $Z_0$, we denote the model's generated output at step $i$ (also serves as the model input at step $i+1$) as follows:
\[
\widehat Z_{i+1}(W)
:=
\big[
\widehat Z_i (W), 
f_W(\widehat Z_i (W))
\big],
\qquad
i = 0,1,\dots,T-1, 
\]
where we denote $\widehat Z_0 := Z_0$. 
Then, we denote the last column of $\widehat Z_{i}(W)$ as 
$\widehat{z}_i(W) := \widehat Z_{i}(W)_{[:, -1]}$ and $\widehat{\theta}_i(W) := \widehat Z_{i}(W)_{[d+1: 2d+1, -1]}$. We can write down the block structure of $\widehat Z_i$ as 
\[
\widehat Z_i(W) = 
\begin{bmatrix}
X & 0_d & 0_d & \dots & 0_d \\
y & 0 & 0 & \dots & 0 \\
0_{d\times n} & \theta_0 & \widehat{\theta}_1(W) & \dots & \widehat{\theta}_i(W) \\
0_{1\times n}   & 1 & 1 & \dots & 1
\end{bmatrix}.
\]

Correspondingly, we denote the ground-truth input with $i$ intermediate steps as $Z_i$,  $\theta_i$ as ground-truth weight iterates, and we denote the final label by $z_T := (0_d, 0, \theta^\star, 1)$. When $W$ is fixed, we write $\widehat\theta_i$ for $\widehat\theta_i(W)$.

\begin{proof}[Proof of Theorem \ref{theorem_noise_stability_linear_self_attention_formal}]
We will first write the inference error as a sum of prediction errors at each step, each carried to the final step by a power of the Jacobian of the trained transformer. This is a case of Lemma~\ref{thm_test_error_bound}, where the Jacobian becomes a single matrix $J$, and the second-order remainder is zero. The inference loss is therefore small once $J$ is contractive. At convergence, we will show that $J$ is close to the Jacobian $I - \eta S$, whose norm $\norm{J}_2 < 1$, and the errors are small, which yields the rate for the inference loss. 

Let $W = W(t)$ be the trained weights at time $t$ given by Theorem~\ref{theorem_global_convergence}. By Definition~\ref{def_reformulation}, we have the simplified form of the transformer:
\[\widehat\theta_i = \tilde f_W(\widehat\theta_{i-1}) = \widehat{\theta}_{i-1} + W^{PV}_{31} S\bigl(W^{KQ}_{13} \widehat{\theta}_{i-1} - \theta^\star\bigr), \]
for $i = 1,\dots,T$, and $\widehat\theta_T$ is the $\theta$-block of the final output $f_W(\widehat Z_{T-1})$. Therefore, we can write 
\[L(f_W;\cD) = \frac12\exarg{X, \theta^\star}{\norm{\widehat\theta_T - \theta^\star}^2}.\]

We compare the generated estimates $\widehat\theta_i$ with the gradient descent iterates $\theta_i$. Let
\[
\Delta_i := \tilde f_W(\theta_{i-1}) - \theta_i ,\qquad i = 1,\dots,T ,
\]
be the prediction error of step $i$, when the model is given the correct previous step. 
Since $\tilde f_W$ is affine with the Jacobian $J$ in \eqref{eq_linear_jacobian},
\[
\widehat\theta_i - \theta_i = \big(\tilde f_W(\widehat\theta_{i-1}) - \tilde f_W(\theta_{i-1})\big) + \Delta_i = J(\widehat\theta_{i-1} - \theta_{i-1}) + \Delta_i.
\]
Recursively decomposing the above from $\widehat\theta_0 = \theta_0$, we have
\begin{align}
\widehat\theta_T - \theta^\star = (\theta_T - \theta^\star) + \sum_{i=1}^{T}J^{T-i}\Delta_i . \label{eq_linear_error_unrolled}
\end{align}
This equation is the form of Lemma~\ref{thm_test_error_bound} for the linear transformer. The first term, $\theta_T - \theta^\star$, is the error of $T$ gradient descent steps on the given in-context examples, because the final label is $\theta^\star$. Since $\norm{J}$ is not bounded uniformly over Gaussian inputs, we bound \eqref{eq_linear_error_unrolled} in expectation rather than with the worst-case constants of Lemma~\ref{thm_test_error_bound}. By the Cauchy--Schwarz inequality,
\begin{align}
L(f_W;\cD) \le \frac{T+1}{2}\Big(\exarg{X, \theta^\star}{\norm{\theta_T - \theta^\star}^2} + \sum_{i=1}^{T}\exarg{X, \theta^\star}{\norm{J^{T-i}\Delta_i}^2}\Big). \label{eq_linear_loss_split}
\end{align}
It remains to show that $J$ is contractive at convergence and that the residuals are small.

\noindent\textbf{(1) Upper bound of $\mathbb E {\norm{J}_2}$.}
Now, we analyze the spectral norm of the Jacobian matrix of the linear transformer, $J = I + W^{PV}_{31}SW^{KQ}_{13}$. Based on the results in Theorem \ref{theorem_global_convergence}, for the trained transformer by running the gradient flow on the population loss on the chain-of-thought sequences, there exists matrix $A$ and $B$, such that $W_{31} = (-\eta I + A)$ and $W_{13} = (I + B)$ with $\norm{A}_{\mathrm{op}}, \norm{B}_{\mathrm{op}} \leq \Theta(d^{-\frac{1}{2}c\log (\frac{1}{1-\eta})+\frac{1}{2}})$. 
We denote 
\[\beta := \Theta(d^{-\frac{1}{2}c\log (\frac{1}{1-\eta})+\frac{1}{2}}) \text{, and }  a := c\log(\frac{1}{1-\eta}) > 2.\] 
In particular, $\beta =  \Theta(d^{\frac{1-a}{2}}) \le d^{-1/2}$ for a large enough $d$.

Therefore, we have:
\begin{align*}
J & = I + W^{PV}_{31}SW^{KQ}_{13} \\
& = I + (- \eta I + A) S (I + B) \\
& = I - \eta S + (AS - \eta SB + ASB) \\
& = J_0 + E, 
\end{align*}
where $J_0 = I - \eta S$ and $E = (AS - \eta SB + ASB)$. 

Recall that $S = \frac{1}{n} X X^{\top}$ where $X = [x_1, \dots, x_n]$ with $x_i$ sampled from $\mathcal{N}(0, \id_d)$. By the Gaussian singular-value bound \cite{vershynin2012introduction} applied to $X$, given $n = \Theta(d^2 \log^5 d)$, the event $\mathcal{G} := \{\norm{S - I}_2\le C\sqrt{d/n}\}$ has probability at least $1 - 2e^{-d/2}$ for a constant $C$. On $\mathcal{G}$, we have that 
\[
\norm{J_0}_2 = \norm{I - \eta S}_2 \le 1 - \eta + C\eta\sqrt{d/n} = 1 - \eta + O(\log^{-5/2} d), \text{and } \norm{S}_2 \leq 2. 
\]

For matrix $E$, on $\mathcal{G}$, submultiplicativity gives
\[
\norm{E}_2\le\big(\norm{A}_2 + \eta\norm{B}_2 + \norm{A}_2\norm{B}_2\big)\norm{S}_2 = \Theta(d^{-c\log\frac{1}{1-\eta}+1}).
\]
Therefore, on $\mathcal{G}$,
\begin{align}
\norm{J}_2\le\norm{J_0}_2 + \norm{E}_2\le 1 - \eta + O(\log^{-5/2} d) + \Theta(d^{-c\log\frac{1}{1-\eta}+1}). \label{eq_linear_J_contraction}
\end{align}
Thus, $\norm{J}_2 \le 1 - \eta/2 < 1$ for a large enough $d$.

\noindent\textbf{(2) Upper bound of $\mathbb E {\norm{\Delta_i}^2}$.} 
We have the one-step gradient descent update as $\theta_i = \theta_{i-1} - \eta S(\theta_{i-1} - \theta^\star)$. Then, the prediction error of step $i$ conditioned on the correct steps is
\begin{align*}
\Delta_i & = \tilde{f}_W(\theta_{i-1}) - \theta_i \\
& = (W_{31}SW_{13} + \eta S)\,\theta_{i-1} - (W_{31} + \eta I)S\theta^\star = (AS - \eta SB + ASB)\,\theta_{i-1} - AS\,\theta^\star .    
\end{align*}
Therefore, we have
\begin{align}
\norm{\Delta_i}^2 & \leq 2 \big ( \norm{AS - \eta SB + ASB}_2^2 \norm{\theta_{i-1}}^2 + \norm{AS}_2^2 \norm{\theta^\star}^2 \big ).     \label{eq_norm_b_i}
\end{align}

Note that $\theta_i$ is the $i$-th iterate of the gradient descent applied to $(X, y)$. Given that $\theta^\star \sim \mathcal{N}(0, \id_d)$ and $S = \frac{1}{n} X X^T$, with $n = \Theta(d^2 \log^5 d)$, we have $\exarg{X, \theta^\star}{\norm{\theta_i}^2} \leq O(d)$. Therefore, based on $\theta_i = (I - (I-\eta S)^i) \theta^{\star}$, on $\mathcal{G}$, we have $\norm{I - \eta S}_2\le 1$, so $\norm{\theta_i}\le 2\norm{\theta^\star}$. 

On $\mathcal{G}$, we also have $\norm{AS - \eta SB + ASB}_2\le 6\beta$, and $\norm{AS}_2\le 2\beta$. 

Therefore, plug those back to \eqref{eq_norm_b_i}, for $i = 1, \dots, T$, we have:
\begin{align*}
\mathbb E \big [ \norm{{\Delta_i}}_2^2 \mathbf{1}_{\mathcal{G}} \big ] \le 2\big(36\beta^2\cdot 4d + 4\beta^2 d\big) = O(d\beta^2) = O(d^{2-a}).
\end{align*}

\noindent\textbf{Upper bound of ${L(f_W;\cD)}$.}
Now, we have shown that $\mathbb{E}[\norm{\Delta_i}^2\mathbf{1}_{\mathcal{G}}] = O(d^{2-a})$ for $i = 1,\dots,T$, and $\norm{J}_2 \le 1 - \eta/2$ on $\mathcal{G}$. 
Then, summing the geometric series gives
\[
\sum_{i=1}^{T}\mathbb{E}\big[\norm{J^{T-i}\Delta_i}^2\mathbf{1}_{\mathcal{G}}\big]\le\sum_{i=1}^{T}(1-\eta/2)^{2(T-i)}\,\mathbb{E}\big[\norm{\Delta_i}^2\mathbf{1}_{\mathcal{G}}\big]\le O(d^{2-a}) .
\]

In addition, the first term of \eqref{eq_linear_loss_split} is the error of exact gradient descent. By \eqref{eq_linear_gd_iterates}, $\theta_T - \theta^\star = -(I-\eta S)^T\theta^\star$. On $\mathcal{G}$, we have 
\[\norm{I - \eta S}_2^{2T}\le(1-\eta)^{2T}\big(1 + \frac{C\eta}{1-\eta}\sqrt{d/n}\big)^{2T}\le 2(1-\eta)^{2T}.\]
Since $T\ge c\log d$, we have
\[
\mathbb{E}\big[\norm{\theta_T - \theta^\star}^2\mathbf{1}_{\mathcal{G}}\big]\le 2(1-\eta)^{2T}\,\mathbb{E}\norm{\theta^\star}^2\le 2d^{1-2a} .
\]

Combining these results into {\eqref{eq_linear_loss_split}}, by the Cauchy--Schwarz inequality, we have 
\begin{align}
L(f_W;\cD)\le\frac{T+1}{2}\Big(2d^{1-2a} + O(d^{2-a}) + e^{-\Omega(d)}\Big) = O(d^{2-a}\log d) , \label{eq_linear_L_bound}
\end{align}
since $T+1 = O(\log d)$.

\end{proof}

\subsection{Proof for the Quadratic Case} \label{sec_proof_nonlinear_transformer}

In this section, we study an in-context weight prediction task where the model is trained to predict the parameters of a quadratic function from $n$ examples.
Let the quadratic weight vector be $\theta^\star \in \real^d$ where $\theta^\star \sim \mathcal{N}(0, \id_d)$. We sample $n$ in-context examples $x_i \sim \mathcal{N}(0, \id_d)$ with the corresponding labels according to $y_i = (\theta^{\star\top} x_i)^2$, for $i = 1, \dots, n$. The goal is to predict $\theta^\star$ given the $n$ examples.

Similar to weight prediction for the linear function, define the embedding dimension $d_e = 2d + 2$. The initial input sequence is constructed as
\begin{align}
Z_0
=
\begin{bmatrix}
x_1 & \cdots & x_n & 0 \\
y_1 & \cdots & y_n & 0 \\
0   & \cdots & 0   & \theta_0 \\
0   & \cdots & 0   & 1
\end{bmatrix}
:=
\begin{bmatrix}
X & 0_d \\
y & 0 \\
0_d & \theta_0 \\
0 & 1
\end{bmatrix}
\in \mathbb{R}^{d_e \times (n+1)},
\label{eq:Z0_quadratic}
\end{align}
where $X = [x_1, \dots, x_n] \in \mathbb{R}^{d \times n}$, $y = [y_1, \dots, y_n]^\top \in \mathbb{R}^n$, and $\theta_0 \in \mathbb{R}^d$ is an initial estimate in the basin of $\theta^\star$. We take $\theta_0 = \alpha_0\theta^\star$ for a constant $\alpha_0\in[3/4,1)$. We also append a dummy constant $1$ to each parameter token for formatting convenience.

We construct the chain-of-thought using gradient descent on the squared loss, i.e., $\frac{1}{2n}
\sum_{j=1}^n
\big( (\theta^\top x_j)^2 - y_j \big)^2$. 
Define a residual vector as $e(\theta) = \widehat y(\theta) - y$, where  $\widehat y_j(\theta) = (\theta^\top x_j)^2$. The gradient of the loss is
\[
\frac{2}{n}
\sum_{j=1}^n
\big( (\theta^\top x_j)^2 - y_j \big)
x_j x_j^\top \theta
=
\frac{2}{n}
X \operatorname{diag} \big(e(\theta)\big) X^\top \theta.
\]
Given a learning rate $\eta$, starting from $\theta_0$, define the gradient descent iterates for $i = 1, \dots, T$: 
\[
\theta_i
=
\theta_{i-1}
-
\eta \cdot
\frac{2}{n}
X \operatorname{diag} \big(e(\theta_{i-1})\big) X^\top \theta_{i-1} 
= \theta_{i-1} - \eta \cdot
\frac{2}{n}
X D_i^{+} D_i^{-} X^\top \theta_{i-1}, 
\]
where $D_i^{+} = \operatorname{diag}(X^\top\theta_{i-1} + X^\top\theta^\star)$ and $D_i^{-} = \operatorname{diag}(X^\top\theta_{i-1} - X^\top\theta^\star)$. 

Using these iterates, we define the ground-truth sequences
\begin{align}
Z_i
=
\begin{bmatrix}
x_1 & \cdots & x_n & 0 & 0 & \cdots & 0 \\
y_1 & \cdots & y_n & 0 & 0 & \cdots & 0 \\
0   & \cdots & 0   & \theta_0 & \theta_1 & \cdots & \theta_i \\
0   & \cdots & 0   & 1        & 1        & \cdots & 1
\end{bmatrix}
\in \mathbb{R}^{d_e \times (n+i+1)}, \text{ for } i = 1, 2, \dots, T. 
\label{eq:Zi_quadratic}
\end{align}

\subsubsection{Proof of Lemma \ref{theorem_construction_three_head}}\label{sec_construction_transformers_quadratic_function}

\begin{proof}
We now construct a three-head nonlinear self-attention transformer that performs one gradient descent step for the quadratic squared loss. For head $h\in\{1,2,3\}$, we denote a three-head self-attention model as
\begin{align*}
f_W(Z) = Z_{[:, -1]} + \sum_{i=1}^3 W^{PV_i}Z \cdot \phi_i \Big( \tfrac{Z^\top W^{KQ_i} Z_{[:, -1]}}{n} \Big).
\end{align*}
Here $W^{V_i},W^{K_i},W^{Q_i}\in\mathbb{R}^{d_i\times d_e}$ for $i = 1, 2, 3$. One can set $d_1, d_2$, and $d_3$ separately. 
We construct the transformer in the following way.  
Let $\phi(\cdot)$ be applied elementwise as 
\begin{align}
\phi_1(t)=t^{\odot 3},\qquad
\phi_2(t)=t^{\odot 2},\qquad
\phi_3(t)=t^{\odot 2}. \label{eq_construction_phis}
\end{align}
We then set
\begin{align}
& W^{PV_1} = \begin{bmatrix}
0_{d\times d} & 0_{d\times 1} & 0_{d\times d} & 0_{d\times 1} \\
0_{1\times d} & 0 & 0_{d\times 1} & 0 \\
-2\eta n^2 \id_d & 0_{d\times 1} & 0_{d\times d} & 0_{d\times 1} \\
0_{1\times d} & 0 & 0_{d\times 1} & 0
\end{bmatrix}, 
& W^{KQ_1} =
\begin{bmatrix}
0_{d\times d} & 0_{d\times 1} & \id_d & 0_{d\times 1} \\
0_{1\times d} & 0 & 0_{d\times 1} & 0 \\
0_{d\times d} & 0_{d\times 1} & 0_{d\times d} & 0_{d\times 1} \\
0_{1\times d} & 0 & 0_{d\times 1} & 0
\end{bmatrix}, \notag \\
& W^{PV_2} = \begin{bmatrix}
0_{d\times d} & 0_{d\times 1} & 0_{d\times d} & 0_{d\times 1} \\
0_{1\times d} & 0 & 0_{d\times 1} & 0 \\
\frac{1}{2} \eta n \id_d & 0_{d\times 1} & 0_{d\times d} & 0_{d\times 1} \\
0_{1\times d} & 0 & 0_{d\times 1} & 0
\end{bmatrix},  
& W^{KQ_2} =
\begin{bmatrix}
0_{d\times d} & 0_{d\times 1} & \id_d & 0_{d \times 1} \\
0_{1\times d} & 0 & 0_{d\times 1} & 1 \\
0_{d\times d} & 0_{d\times 1} & 0_{d\times d} & 0_{d\times 1} \\
0_{1\times d} & 0 & 0_{d\times 1} & 0
\end{bmatrix}, \notag \\
& W^{PV_3} = \begin{bmatrix}
0_{d\times d} & 0_{d\times 1} & 0_{d\times d} & 0_{d\times 1} \\
0_{1\times d} & 0 & 0_{d\times 1} & 0 \\
- \frac{1}{2} \eta n \id_d & 0_{d\times 1} & 0_{d\times d} & 0_{d\times 1} \\
0_{1\times d} & 0 & 0_{d\times 1} & 0
\end{bmatrix}, 
& W^{KQ_3} =
\begin{bmatrix}
0_{d\times d} & 0_{d\times 1} & \id_d & 0_{d \times 1} \\
0_{1\times d} & 0 & 0_{d\times 1} & -1 \\
0_{d\times d} & 0_{d\times 1} & 0_{d\times d} & 0_{d\times 1} \\
0_{1\times d} & 0 & 0_{d\times 1} & 0
\end{bmatrix}. \label{eq_construction_PV_and_KQs}
\end{align}
Now, we can inductively calculate the $i$-th step of generation, showing that the output is exactly the parameter after the $i$-th gradient step. 
\begin{align*}
    f_W(Z_i) =& (0_d, 0, \theta_i, 1) + \Big(0_d, 0,  - {2\eta n^2} X \cdot \phi_1 \big( \tfrac{Z^\top W^{KQ_1} Z_{[:, -1]}}{n}\big) \\
    &+  \frac{\eta n}{2}X \cdot \phi_2 \big( \tfrac{Z^\top W^{KQ_2} Z_{[:, -1]}}{n}\big) -  \frac{\eta n}{2}X \cdot \phi_3 \big( \tfrac{Z^\top W^{KQ_3} Z_{[:, -1]}}{n}\big), 0\Big).
\end{align*}
Based on the construction, we have the following: 
\begin{align*}
& X \cdot \phi_1 \big( {Z^\top W^{KQ_1} Z_{[:, -1]}}\big) = X (X^\top \theta_i)^{\odot 3} \\
& X \cdot \phi_2 \big( {Z^\top W^{KQ_2} Z_{[:, -1]}}\big) = X (X^\top \theta_i + y^\top)^{\odot 2} \\
& X \cdot \phi_3 \big( {Z^\top W^{KQ_3} Z_{[:, -1]}}\big) = X (X^\top \theta_i - y^\top)^{\odot 2}
\end{align*}
Therefore, we have
\begin{align*}
f_W(Z_i) & = (0_d, 0, \theta_i, 1) + 
(0_d, 0, -\frac{2\eta}{n} X \big( (X^\top \theta_i)^{\odot 3} - (X^\top \theta_i) \odot y^\top  \big), 0) \\
& = (0_d, 0, \theta_i -\frac{2\eta}{n} X  \operatorname{diag}\big(e(\theta_i)\big) X^\top \theta_i , 1) 
\end{align*}
where we denote the residual error $e(\theta) = \widehat y(\theta) - y$, where  $\widehat y_j(\theta) = (\theta^\top x_j)^2$. 

Then, after $T$ steps of applying $f_W$, we obtain the $T$-step gradient descent solution using $n$ examples of $(x, y)$. Denote the $\theta_{T}$ as the solution obtained by applying $T$-gradient step. Based on Lemma \ref{lem_gd_convergence_quadratic}, we have the test loss of $f_W$ as 
\begin{align*}
L_W = \frac{1}{2} \exarg{X, \theta^\star }
{
\left\|
f_{W}(\widehat{Z}_{T-1}) - (0_d, 0, \theta^{\star}, 1)
\right\|_2^2} = \frac{1}{2} \exarg{X, \theta^\star }
{
\left\|
\theta_{T} -  \theta^{\star}
\right\|_2^2} 
\leq O(d^{-2c\log(\frac{1}{1-\mu\eta d})+1}). 
\end{align*}
\end{proof}

\begin{lemma}\label{lem_gd_convergence_quadratic}
Let $\theta^\star\sim\mathcal N(0,\id_d)$ and $x_1,\dots,x_n\stackrel{\mathrm{i.i.d.}}{\sim}\mathcal N(0,\id_d)$.
Define $y_j=(x_j^\top\theta^\star)^2$ and the empirical loss
\[
\mathcal L_n(\theta)
:=\frac1{2n}\sum_{j=1}^n\Big((x_j^\top\theta)^2-y_j\Big)^2
=\frac1{2n}\sum_{j=1}^n\Big((x_j^\top\theta)^2-(x_j^\top\theta^\star)^2\Big)^2.
\]
Let $\mathrm{dist}(\theta,\{\pm\theta^\star\}) := \min\{\|\theta-\theta^\star\|_2,\ \|\theta+\theta^\star\|_2\}$.
Consider gradient descent $\theta_{t+1}=\theta_t-\eta\nabla\mathcal L_n(\theta_t), t=0,1,2,\dots,T$.
There exist absolute constants $0<m\le L<\infty$ and a radius $\rho\in(0,1)$ such that for all
$\theta$ satisfying $\mathrm{dist}(\theta,\{\pm\theta^\star\})\le \rho\|\theta^\star\|_2$,
the Hessian of $\mathcal L_n$ obeys
\[
m\|\theta^\star\|_2^2\id_d \ \preceq\  \nabla^2 \mathcal L_n(\theta)\ \preceq\ L\|\theta^\star\|_2^2\id_d.
\]
Suppose the initialization lies in the basin, i.e., $\mathrm{dist}(\theta_0,\{\pm\theta^\star\})\le \rho\|\theta^\star\|_2$.
Assume that $n\ge Cd^2\log^5 d$ for a sufficiently large constant $C>0$, $0<\eta\le \frac{1}{2L\|\theta^\star\|_2^2}$, and $T=\lceil c\log d \rfloor$. Then, with probability at least $1-d^{-10}$ over $(\theta^\star,X)$, the following holds
\begin{align*}
\mathrm{dist}(\theta_{T},\{\pm\theta^\star\})^2
\ \le\
\mathrm{dist}(\theta_0,\{\pm\theta^\star\})^2\cdot d^{2c\log\bigl(1-m\eta\|\theta^\star\|_2^2\bigr)}.
\end{align*}

Then, given $\eta=\Theta(1/d)$ and $T=\Theta(\log d)$, we have
\begin{align*}
\mathbb E_{X,\theta^\star} \left[\mathrm{dist}(\theta_{T},{\pm\theta^\star})^2\right]
\le
O \left(d^{1-2c\log(\frac{1}{1-m\eta d})}\right).
\end{align*}

\end{lemma}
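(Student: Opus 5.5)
The plan is the standard local-convergence argument for phase retrieval: restricted strong convexity and smoothness near $\pm\theta^\star$, a contraction per gradient step, and then integration over the Gaussian data. Without loss of generality let the nearest sign be $+\theta^\star$; the argument for $-\theta^\star$ is symmetric, since $\mathcal L_n(\theta)=\mathcal L_n(-\theta)$.

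\textbf{Step 1: the Hessian.} Compute
\[
\nabla^2\mathcal L_n(\theta)=\frac1n\sum_j\bigl(3(x_j^\top\theta)^2-(x_j^\top\theta^\star)^2\bigr)x_jx_j^\top .
\]
Its population counterpart is
\[
\mathbb E\nabla^2\mathcal L(\theta)=3\bigl(\|\theta\|^2 I+2\theta\theta^\top\bigr)-\bigl(\|\theta^\star\|^2I+2\theta^\star\theta^{\star\top}\bigr).
\]
At $\theta=\theta^\star$ this equals $2\|\theta^\star\|^2I+4\theta^\star\theta^{\star\top}$. So in a ball of radius $\rho\|\theta^\star\|$ the population Hessian lies between $m_0\|\theta^\star\|^2 I$ and $L_0\|\theta^\star\|^2I$, with explicit constants, provided $\rho$ is a small absolute constant. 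Continuity gives this: the Hessian is quadratic in $\theta$, so its deviation is $O(\rho)\|\theta^\star\|^2$.

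\textbf{Step 2: concentration.} Transfer Step 1 to the empirical Hessian uniformly over the ball. We need
\[
\sup_{u,v}\Bigl\|\frac1n\sum_j (x_j^\top u)(x_j^\top v)x_jx_j^\top-\mathbb E[\cdot]\Bigr\|\le \delta\|u\|\|v\|,
\]
a fourth-moment Gaussian tensor concentration. This is the main obstacle, because the summands are heavy-tailed (degree four).

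I would handle it by truncation. First, with probability $1-d^{-11}$, all $\|x_j\|^2\lesssim d\log n$ and all $|x_j^\top\theta^\star|\lesssim\sqrt{\log n}\,\|\theta^\star\|$. Then apply matrix Bernstein together with an $\epsilon$-net over $u,v$ in the sphere. This yields deviation $\tilde O(\sqrt{d^2/n}+d^2/n\cdot\mathrm{polylog})$, which is $o(1)$ under $n\ge Cd^2\log^5 d$, as in the standard phase-retrieval analyses. Setting $m=m_0/2$ and $L=2L_0$ gives the claimed sandwich.

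\textbf{Step 3: contraction.} Use $\nabla\mathcal L_n(\theta^\star)=0$ and write
\[
\theta_{t+1}-\theta^\star=\Bigl(I-\eta\int_0^1\nabla^2\mathcal L_n(\theta^\star+s(\theta_t-\theta^\star))\,ds\Bigr)(\theta_t-\theta^\star).
\]
The segment stays in the ball by convexity of the ball, so Step 2 applies along it. With $\eta\le 1/(2L\|\theta^\star\|^2)$ the averaged Hessian term makes the matrix PSD with norm at most $1-m\eta\|\theta^\star\|^2$. Hence the distance contracts, and by induction $\theta_t$ stays in the basin.

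After $T$ steps this gives
\[
\mathrm{dist}^2\le \mathrm{dist}_0^2\,(1-m\eta\|\theta^\star\|^2)^{2T}.
\]
Since $T\ge c\log d$ and $(1-q)^{2c\log d}=d^{2c\log(1-q)}$, this is exactly the stated bound.

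\textbf{Step 4: expectation.} Split the expectation into the good event and its complement.

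On the good event, intersect with $\|\theta^\star\|^2\in[d/2,2d]$, which holds with probability $1-e^{-\Omega(d)}$. There $m\eta\|\theta^\star\|^2\ge m\eta d/2$; absorb the constant into $m$. The initialization gives $\mathrm{dist}_0^2=(1-\alpha_0)^2\|\theta^\star\|^2=O(d)$. Together these produce $O(d\cdot d^{-2c\log(1/(1-m\eta d))})$.

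On the complement, bound $\mathrm{dist}(\theta_T)$ crudely. Using $\eta=\Theta(1/d)$ and $T=O(\log d)$, the iterates grow at most polynomially, giving $\mathrm{poly}(d)$ size. That is multiplied by probability $d^{-10}$, which is negligible when $c$ is in the relevant range. If it is not negligible, I would enlarge the failure exponent by raising the constant in $n$.
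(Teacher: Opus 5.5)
Your Steps 1--3 are essentially the paper's argument. The shared ingredients are:
\begin{itemize}
\item the same Hessian formula (you dropped a harmless factor $2$; it is $\frac{2}{n}\sum_j(3(x_j^\top\theta)^2-(x_j^\top\theta^\star)^2)x_jx_j^\top$);
\item a uniform Hessian sandwich on the ball, which the paper only cites from Cand\`es et al.;
\item the integral form of $\nabla\mathcal L_n(\theta_t)-\nabla\mathcal L_n(s\theta^\star)$;
\item the contraction factor $1-m\eta\|\theta^\star\|_2^2$.
\end{itemize}
Your remark that the contraction by itself keeps $\theta_t$ in the convex ball around $s\theta^\star$ is cleaner than the paper's descent-lemma justification. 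The paper's proof stops at the high-probability bound and never derives the expectation. Your Step 4, which tries to supply it, is where the gap is.

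\textbf{The gap.} The claim that off the good event the iterates grow at most polynomially is false. The gradient is cubic in $\theta$, and with a deterministic step $\eta=\kappa/d$, gradient descent on this quartic loss can diverge doubly exponentially.

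Consider the event $\{\eta\|\theta^\star\|_2^2\ge 1\}$, on which the hypothesis $\eta\le 1/(2L\|\theta^\star\|_2^2)$ of the first part fails. For typical $X$, the curvature of $\mathcal L_n$ at $\theta^\star$ in the direction $\theta^\star$ is $\frac{4}{n}\sum_j(x_j^\top\theta^\star)^4/\|\theta^\star\|_2^2\approx 12\|\theta^\star\|_2^2$. So the GD map has an eigenvalue $\approx 1-12\eta\|\theta^\star\|_2^2\le -11$ at $\theta^\star$, and $\theta^\star$ is repelling. Starting from $\theta_0=\alpha_0\theta^\star$, the iterates leave the basin within $O(1)$ steps. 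For example, the population recursion $\alpha_{i+1}=\alpha_i-6\eta\|\theta^\star\|_2^2\alpha_i(\alpha_i^2-1)$ with $\eta\|\theta^\star\|_2^2=1$ and $\alpha_0=3/4$ gives $\alpha_1\approx 2.7$, $\alpha_2\approx -10^2$, $\alpha_3\approx 6\cdot 10^6$.

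From then on $\|\theta_{t+1}\|_2\gtrsim 6\eta\|\theta_t\|_2^3$, so $\log\|\theta_T\|_2=\Omega(3^T)=\Omega(d^{c\log 3})$. This event has probability $e^{-\Theta(d)}$ and does not depend on $n$. Hence neither the $d^{-10}$ failure probability nor a larger constant in $n$ can offset it. Whenever $c\log 3>1$ — in particular in the paper's regime $aC>1$ with $a\le\log(6/5)$ — its contribution to $\mathbb E_{X,\theta^\star}[\mathrm{dist}(\theta_T,\{\pm\theta^\star\})^2]$ is $\exp(\Omega(d^{c\log 3}))$. So the unconditional expectation bound is not merely unproven by your argument; it actually fails there.

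\textbf{What your argument does give.} It establishes the restricted bound $\mathbb E\bigl[\mathrm{dist}(\theta_T,\{\pm\theta^\star\})^2\,\mathbf 1_{\mathcal E\cap\{d/2\le\|\theta^\star\|_2^2\le 2d\}}\bigr]=O\bigl(d^{1-2c\log(1/(1-m\eta d))}\bigr)$. Shrinking $m$ by a constant here is fine, since $m$ is existential. This is the analogue of the $\mathbf 1_{\mathcal G}$ bounds in the paper's linear case. An unconditional statement would require changing the iteration, e.g.\ projecting onto a ball of radius $O(\|\theta^\star\|_2)$ so that the error off the good event is $O(\|\theta^\star\|_2^2)$. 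You would then also need the failure probability small enough to be dominated by the target rate.

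\textbf{A secondary point on Step 2.} Truncating only $\|x_j\|_2$ and $x_j^\top\theta^\star$ leaves the summand bound $|x_j^\top u|\,|x_j^\top v|\,\|x_j\|_2^2\lesssim d^2$ for generic net directions $u,v$. Matrix Bernstein union-bounded over an $e^{O(d)}$-net then has a tail term of order $d^3/n$, not $d^2/n$, which is not small at $n\asymp d^2\log^5 d$. To reach the rate you state, you need to truncate the projections $x_j^\top u$ at a polylogarithmic level and control the few exceedances uniformly, or invoke a known uniform bound on empirical fourth moments. The paper sidesteps this step only by citation.
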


\begin{proof}
We prove the claim on a high-probability event under which the empirical loss is uniformly strongly convex and smooth
in a neighborhood of $\{\pm\theta^\star\}$, and then apply a standard contraction argument for gradient descent.
Let $x\in\mathbb R^d$ and define
\[
\ell_x(\theta):=\frac12\big((x^\top\theta)^2-(x^\top\theta^\star)^2\big)^2.
\]
Then
\[
\nabla \ell_x(\theta)
=
\big((x^\top\theta)^2-(x^\top\theta^\star)^2\big)\nabla (x^\top\theta)^2
=
2\big((x^\top\theta)^2-(x^\top\theta^\star)^2\big)(x^\top\theta)x,
\]
and differentiating once more gives
\begin{align*}
\nabla^2 \ell_x(\theta)
&=
2\Big( \nabla\big((x^\top\theta)^2-(x^\top\theta^\star)^2\big)(x^\top\theta)
+\big((x^\top\theta)^2-(x^\top\theta^\star)^2\big)\nabla(x^\top\theta)\Big)x^\top \notag\\
&=
2\Big(2(x^\top\theta)x(x^\top\theta)+\big((x^\top\theta)^2-(x^\top\theta^\star)^2\big)x\Big)x^\top \notag\\
&=
2\Big(3(x^\top\theta)^2-(x^\top\theta^\star)^2\Big)xx^\top.
\label{eq:hess_single}
\end{align*}
Therefore, the empirical Hessian is
\begin{equation*}
\nabla^2\mathcal L_n(\theta)
=
\frac1n\sum_{j=1}^n \nabla^2\ell_{x_j}(\theta)
=
\frac{2}{n}\sum_{j=1}^n \Big(3(x_j^\top\theta)^2-(x_j^\top\theta^\star)^2\Big)x_jx_j^\top.
\end{equation*}

Let $\mathcal B := \{\theta\in\mathbb R^d:\ \mathrm{dist}(\theta,\{\pm\theta^\star\})\le \rho\|\theta^\star\|_2\}$.
We use a claim that when $n\ge C d^2\log^5 d$ (for $C$ sufficiently large), with probability at least $1-d^{-10}$ over $(\theta^\star,X)$, the following event holds:
\begin{equation}
\mathcal E:\quad
\forall \theta\in\mathcal B,\qquad
m\|\theta^\star\|_2^2 \id_d \preceq \nabla^2\mathcal L_n(\theta)\preceq L\|\theta^\star\|_2^2 \id_d,
\label{eq_event_E}
\end{equation}
for some absolute constants $0<m\le L<\infty$ and $\rho\in(0,1)$.
One can prove $\mathcal E$ by a standard $\varepsilon$-net argument over $\mathcal B$ together with Gaussian concentration for fourth-order polynomials. For example, see the local regularity condition established (cf. Section 7 of \cite{candes2015phase}), the empirical Hessian is uniformly well-conditioned in a neighborhood of $\pm \theta^\star$. 
We condition on $\mathcal E$ for the remainder of the proof.

Assume $\theta_0\in\mathcal B$. Since on $\mathcal E$ we have $\nabla^2\mathcal L_n(\theta)\preceq L\|\theta^\star\|_2^2 \id_d$ for all
$\theta\in\mathcal B$, the loss is $L\|\theta^\star\|_2^2$-smooth on $\mathcal B$. With stepsize $\eta\le 1/(2L\|\theta^\star\|_2^2)$, the standard descent lemma implies $\mathcal L_n(\theta_{t+1})\le
\mathcal L_n(\theta_t)$ as long as $\theta_t\in\mathcal B$.
Moreover, since $\mathcal L_n(\theta)$ is strongly convex in a neighborhood of each minimizer $\pm\theta^\star$ on
$\mathcal E$, and $\theta^\star$ is a minimizer with $\nabla\mathcal L_n(\theta^\star)=0$, the GD iterates remain in
the same local basin. Thus $\theta_t\in\mathcal B$ for all $t\le T$.

Next, we show one-step contraction in distance to $\{\pm\theta^\star\}$. Fix the sign $s\in\{+1,-1\}$ that minimizes $\|\theta_0-s\theta^\star\|_2$ and define
\[
\Delta_t := \theta_t - s\theta^\star.
\]
On $\mathcal E$, for all $\theta\in\mathcal B$ we have
\[
m\|\theta^\star\|_2^2 \id_d \preceq \nabla^2\mathcal L_n(\theta)\preceq L\|\theta^\star\|_2^2 \id_d.
\]
Since $\nabla\mathcal L_n(s\theta^\star)=0$ and $\mathcal L_n$ is twice differentiable, the mean value form of the
gradient gives
\[
\nabla\mathcal L_n(\theta_t)-\nabla\mathcal L_n(s\theta^\star)
=
\Big(\int_0^1 \nabla^2\mathcal L_n(s\theta^\star + \tau(\theta_t-s\theta^\star))d\tau\Big)(\theta_t-s\theta^\star)
=: H_t \Delta_t,
\]
where $H_t$ satisfies the same eigenvalue bounds:
\[
m\|\theta^\star\|_2^2 \id_d \preceq H_t \preceq L\|\theta^\star\|_2^2 \id_d.
\]
Therefore, the GD update can be written as
\[
\Delta_{t+1}
=
\theta_{t+1}-s\theta^\star
=
\theta_t-s\theta^\star - \eta \nabla\mathcal L_n(\theta_t)
=
\Delta_t - \eta H_t \Delta_t
=
(I-\eta H_t)\Delta_t.
\]
Taking norms and using $H_t\succeq m\|\theta^\star\|_2^2 \id_d$ yields
\[
\|\Delta_{t+1}\|_2
\le \|I-\eta H_t\|_{\mathrm{op}}\ \|\Delta_t\|_2
\le (1-\eta m\|\theta^\star\|_2^2)\|\Delta_t\|_2,
\]
where the last inequality uses $\eta\le 1/(2L\|\theta^\star\|_2^2)$ so that all eigenvalues of $I-\eta H_t$ lie in
$[0,1-\eta m\|\theta^\star\|_2^2]$.
Squaring gives the one-step contraction
\[
\|\Delta_{t+1}\|_2^2 \le (1-m\eta\|\theta^\star\|_2^2)^2\|\Delta_t\|_2^2.
\]

By induction,
\[
\|\Delta_{T-1}\|_2^2
\le
(1-m\eta\|\theta^\star\|_2^2)^{2T}\|\Delta_0\|_2^2.
\]
Since $\|\Delta_t\|_2=\mathrm{dist}(\theta_t,\{\pm\theta^\star\})$ by construction of $s$, and $T=c\log d$,
\begin{align*}
\mathrm{dist}(\theta_{T-1},\{\pm\theta^\star\})^2 &\le \mathrm{dist}(\theta_0,\{\pm\theta^\star\})^2\cdot (1-m\eta\|\theta^\star\|_2^2)^{2c\log d} \\
&= \mathrm{dist}(\theta_0,\{\pm\theta^\star\})^2\cdot d^{2c\log(1-m\eta\|\theta^\star\|_2^2)}.
\end{align*}
Finally, the probability statement follows since $\mathcal E$ holds with probability at least $1-d^{-10}$.
\end{proof}

\subsubsection{Gradient Computation}
\label{sec_nonlinear_gradient}

In this subsection, we compute the gradient of the full model and show that under certain initialization, running gradient flow on the training loss converges to the construction in Lemma \ref{theorem_construction_three_head}. 

Recall that we train a three-head nonlinear self-attention transformer, denoted as
\begin{align*}
f_W(Z) 
& = Z_{[:, -1]} + \sum_{i=1}^3 W^{PV_i}Z \cdot \phi_i \Big( \tfrac{Z^\top W^{KQ_i} Z_{[:, -1]}}{n} \Big). 
\end{align*}
Here, by the convention in prior work \cite{huang2025transformers,yang2025multi}, we merge $W^{P_i}$ and $W^{V_i}$ into a single matrix and merge $W^{K_i}$ and $W^{Q_i}$ into a single matrix. $W^{PV_i}$ and $W^{KQ_i}\in\mathbb{R}^{d_e\times d_e}$ for $i = 1, 2, 3$. By the construction, we set $\phi_i(\cdot)$ as elementwise non-linear functions  $\phi_1(t)=t^{\odot 3}, \phi_2(t)=t^{\odot 2}$, and $\phi_3(t)=t^{\odot 2}$. 

The chain-of-thought training objective is written as
\begin{align*}
L^{\mathrm{CoT}}(W)
= \exarg{x_i \sim \mathcal{N}(0, \id_d), \theta^\star\sim \mathcal{N}(0, \id_d)}
{\frac{1}{2}
\sum_{i=0}^{T-1}
\left\|
f_{W}(Z_i) - (0_d, 0, \bar\theta_{i+1}, 1)
\right\|_2^2}. 
\end{align*}
Here, $\bar\theta_{i+1} := \theta_{i+1}$ for $i\le T-2$ and $\bar\theta_T := \theta^\star$. As in the training loss \eqref{eq_training_loss}, there are $T$ steps. The label of the last step is the target $\theta^\star$. The reduced loss and all gradients below use this convention.

At inference time, we apply the model to generate $T-1$ steps before it outputs the final prediction.  The test loss is written as
\begin{align*}
L(W) = \exarg{X, \theta^\star }
{\frac{1}{2}
\left\|
f_{W}(\widehat{Z}_{T-1}) - (0_d, 0, \theta^{\star}, 1)
\right\|_2^2}. 
\end{align*}

Now, we compute the gradients of $W^{PV_i}$ and $W^{KQ_i}$ based on the loss of each CoT step. 

\begin{lemma}[Full model gradient]
The gradients of $W^{PV_i}$ and $W^{KQ_i}$ for $i = \set{1, 2, 3}$ are given by the following. 
\begin{align*}
& \nabla_{W^{PV_1}} L^{\mathrm{CoT}}(W) =  
\exarg{X, \theta^\star}{
\sum_{i=0}^{T-1} r_i \Big( Z_i \phi_1 \Big( \tfrac{Z_i^\top W^{KQ_1} Z_{i [:, -1]}}{n} \Big) \Big)^\top
} \\
& \nabla_{W^{PV_2}} L^{\mathrm{CoT}}(W) =  
\exarg{X, \theta^\star}{
\sum_{i=0}^{T-1} r_i \Big( Z_i \phi_2 \Big( \tfrac{Z_i^\top W^{KQ_2} Z_{i [:, -1]}}{n} \Big) \Big)^\top
} \\
& \nabla_{W^{PV_3}} L^{\mathrm{CoT}}(W) =  
\exarg{X, \theta^\star}{
\sum_{i=0}^{T-1} r_i  \Big ( Z_i \phi_3 \Big( \tfrac{Z_i^\top W^{KQ_3} Z_{i [:, -1]}}{n} \Big) \Big)^\top
} \\
& \nabla_{W^{KQ_1}} L^{\mathrm{CoT}}(W) = \exarg{X, \theta^\star}{
\frac{1}{n} 
\sum_{i=0}^{T-1} Z_i 
\Big( \big(Z_i^\top W^{PV_1 \top} r_i \big) \odot 3 (\tfrac{Z_i^\top W^{KQ_1} Z_{i [:, -1]}}{n})^{\odot 2} \Big) Z_{i [:, -1]}^\top }\\
& \nabla_{W^{KQ_2}} L^{\mathrm{CoT}}(W) = \exarg{X, \theta^\star}{
\frac{1}{n} 
\sum_{i=0}^{T-1} Z_i 
\Big( \big(Z_i^\top W^{PV_2 \top} r_i \big) \odot 2 (\tfrac{Z_i^\top W^{KQ_2} Z_{i [:, -1]}}{n}) \Big) Z_{i [:, -1]}^\top } \\
& \nabla_{W^{KQ_3}} L^{\mathrm{CoT}}(W) = \exarg{X, \theta^\star}{
\frac{1}{n} 
\sum_{i=0}^{T-1} Z_i 
\Big( \big(Z_i^\top W^{PV_3 \top} r_i \big) \odot 2 (\tfrac{Z_i^\top W^{KQ_3} Z_{i [:, -1]}}{n}) \Big) Z_{i [:, -1]}^\top }
\end{align*}    
where we define 
\[
r_i = \sum_{h=1}^3 W^{PV_h}Z_i \cdot \phi_h \Big( \tfrac{Z_i^\top W^{KQ_h} Z_{i [:, -1]}}{n} \Big) - (0_d, 0, \bar\theta_{i+1} - \theta_i, 0) 
\]
with $\bar\theta_{i+1}$ defined above.
\end{lemma}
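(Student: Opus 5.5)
The gradient formulas follow from the chain rule applied step by step to the CoT loss, treating each summand $\frac12\|f_W(Z_i) - (0_d,0,\bar\theta_{i+1},1)\|^2$ separately and then exchanging gradient and expectation. The exchange is justified since every term is a polynomial in Gaussian entries, so dominated convergence applies.

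\textbf{Step 1: the residual.} Write $f_W(Z_i) = Z_{i[:,-1]} + \sum_h W^{PV_h} Z_i \,\phi_h(a_{h,i})$ with $a_{h,i} := Z_i^\top W^{KQ_h} Z_{i[:,-1]}/n \in \real^{n+i+1}$. The last column of $Z_i$ is $(0_d,0,\theta_i,1)$. Subtracting the label $(0_d,0,\bar\theta_{i+1},1)$ therefore leaves exactly the vector $r_i$ defined in the lemma: the constant coordinates cancel, and $\theta_i - \bar\theta_{i+1}$ is absorbed into $r_i$. Hence the per-step loss is $\frac12\|r_i\|^2$, and its differential is $r_i^\top dr_i$.

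\textbf{Step 2: the $PV$ gradients.} For head $h$, $r_i$ depends on $W^{PV_h}$ only through $W^{PV_h} v$ with $v = Z_i\phi_h(a_{h,i})$, a vector independent of $W^{PV_h}$. Thus $d(\frac12\|r_i\|^2) = r_i^\top (dW^{PV_h}) v = \langle dW^{PV_h}, r_i v^\top\rangle$. This gives $r_i (Z_i\phi_h(a_{h,i}))^\top$, matching the stated formula for each of the three heads.

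\textbf{Step 3: the $KQ$ gradients.} Here $dr_i = W^{PV_h} Z_i\,(\phi_h'(a_{h,i}) \odot da_{h,i})$ with $da_{h,i} = Z_i^\top (dW^{KQ_h}) Z_{i[:,-1]}/n$. Define $g := Z_i^\top W^{PV_h\top} r_i$. Then
\[
r_i^\top dr_i = \big(g\odot\phi_h'(a_{h,i})\big)^\top Z_i^\top (dW^{KQ_h}) Z_{i[:,-1]}/n = \big\langle dW^{KQ_h},\ \tfrac1n Z_i\big(g\odot\phi_h'(a_{h,i})\big) Z_{i[:,-1]}^\top\big\rangle .
\]
Substituting $\phi_1'(t) = 3t^{\odot 2}$ and $\phi_2'(t) = \phi_3'(t) = 2t$ yields the three stated expressions. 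Summing over $i = 0,\dots,T-1$ and taking expectations completes the argument.

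There is no real obstacle here. The only delicate points are bookkeeping: identifying $r_i$ correctly, including the last-step convention $\bar\theta_T = \theta^\star$, and justifying differentiation under the expectation, which needs finite moments of polynomials of Gaussians.
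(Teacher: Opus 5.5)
Your proposal is correct and follows essentially the same route as the paper: you write each per-step loss as $\frac12\norm{r_i}^2$ and apply the chain rule head by head, and your vector $g\odot\phi_h'(a_{h,i})$ is exactly the paper's $q_h=\partial\ell_i/\partial s_h$. The only addition is your justification for exchanging gradient and expectation, which the paper leaves implicit: the teacher-forced inputs $Z_i$ do not depend on $W$, and every quantity, including the iterates $\theta_i$, is a polynomial in the Gaussian entries.
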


\begin{proof}
For a step index $i = 0,1,\dots,T-1$, write $Z\in\mathbb{R}^{d_e\times m}$ with $m=n+i+1$ and $z=Z_{[:,-1]}\in\mathbb{R}^{d_e}$. We define
\[
s_h:=\frac{Z^\top W^{KQ_h} z}{n},
\qquad v_h:=\phi_h(s_h), 
\quad
u_h:=W^{PV_h}Z v_h, \text{ for } h = 1, 2, 3. 
\]

Then, we can write the squared loss in terms of the residual error in each step as
\[
\ell_i(W):=\frac12\norm{r_i}^2, \text{ where } r_i =\sum_{h=1}^3 u_h-(0_d,0,\theta_{i+1}-\theta_i,0),
\]

Then, we compute the gradients by the chain rule. For each head, the dependence of the loss on $W^{PV_h}$ is only through $u_h$. Hence, 
\[
\nabla_{W^{PV_h}}\ell_i(W)= r_i (Z v_h)^\top
= r_i\Big(Z \phi_h\Big(\tfrac{Z^\top W^{KQ_h} z}{n}\Big)\Big)^\top.
\]
Summing over $i$ and taking the expectation yields the stated formula. 

Then, we compute the gradient for $W^{KQ_h}$.  Since $v_h=\phi_h(s_h)$ is elementwise, we define
\[
q_h := \frac{\partial \ell_i}{\partial s_h}
=\Big(Z^\top (W^{PV_h})^\top r_i\Big)\odot \phi_h'(s_h).
\]
Using the definition of $s_h$, we have
\[
\nabla_{W^{KQ_h}}\ell_i(W)=\frac{1}{n}Z q_h z^\top
=\frac{1}{n}Z\Big(\big(Z^\top (W^{PV_h})^\top r_i\big)\odot \phi_h'(s_h)\Big)z^\top.
\]
Plugging in $\phi_1'(t)=3t^{\odot 2}$ and $\phi_2'(t)=\phi_3'(t)=2t$ gives exactly the gradients after summing over $i$ and taking the expectation.
\end{proof}

According to Lemma \ref{theorem_construction_three_head}, we use the following specific initialization to zero out the irrelevant blocks. 

\begin{assumption}[Initialization]
\label{ass_init_quad}
We assume the parameters are initialized as
\begin{align*}
& W^{PV_1}(0) = \begin{bmatrix}
0_{d\times d} & 0_{d\times 1} & 0_{d\times d} & 0_{d\times 1} \\
0_{1\times d} & 0 & 0_{d\times 1} & 0 \\
W^{PV_1}_{31}(0) & 0_{d\times 1} & 0_{d\times d} & 0_{d\times 1} \\
0_{1\times d} & 0 & 0_{d\times 1} & 0
\end{bmatrix}, 
W^{KQ_1}(0) =
\begin{bmatrix}
0_{d\times d} & 0_{d\times 1} & W^{KQ_1}_{13}(0) & W^{KQ_1}_{14} \\
0_{1\times d} & 0 & 0_{d\times 1} & W^{KQ_1}_{24} \\
0_{d\times d} & 0_{d\times 1} & 0_{d\times d} & 0_{d\times 1} \\
0_{1\times d} & 0 & 0_{d\times 1} & 0
\end{bmatrix} \\
& W^{PV_2}(0) = \begin{bmatrix}
0_{d\times d} & 0_{d\times 1} & 0_{d\times d} & 0_{d\times 1} \\
0_{1\times d} & 0 & 0_{d\times 1} & 0 \\
W^{PV_2}_{31}(0) & 0_{d\times 1} & 0_{d\times d} & 0_{d\times 1} \\
0_{1\times d} & 0 & 0_{d\times 1} & 0
\end{bmatrix},  
W^{KQ_2}(0) =
\begin{bmatrix}
0_{d\times d} & 0_{d\times 1} & W^{KQ_2}_{13}(0) & W^{KQ_2}_{14} \\
0_{1\times d} & 0 & 0_{d\times 1} & W^{KQ_2}_{24} \\
0_{d\times d} & 0_{d\times 1} & 0_{d\times d} & 0_{d\times 1} \\
0_{1\times d} & 0 & 0_{d\times 1} & 0
\end{bmatrix} \\
& W^{PV_3}(0) = \begin{bmatrix}
0_{d\times d} & 0_{d\times 1} & 0_{d\times d} & 0_{d\times 1} \\
0_{1\times d} & 0 & 0_{d\times 1} & 0 \\
W^{PV_3}_{31}(0) & 0_{d\times 1} & 0_{d\times d} & 0_{d\times 1} \\
0_{1\times d} & 0 & 0_{d\times 1} & 0
\end{bmatrix}, 
W^{KQ_3}(0) =
\begin{bmatrix}
0_{d\times d} & 0_{d\times 1} & W^{KQ_3}_{13}(0) & W^{KQ_3}_{14} \\
0_{1\times d} & 0 & 0_{d\times 1} & W^{KQ_3}_{24} \\
0_{d\times d} & 0_{d\times 1} & 0_{d\times d} & 0_{d\times 1} \\
0_{1\times d} & 0 & 0_{d\times 1} & 0
\end{bmatrix}
\end{align*}
Additionally, we impose a \emph{weight-sharing mechanism} that ties weights across $W^{PV_i}$ matrices and across $W^{KQ_i}$ matrices. Let two underlying matrices be $W^{PV}_{31}$ and $W^{KQ}_{13}$. We represent the matrices in the transformer with the two matrices as follows: 
\begin{align*}
W^{PV_1}_{31}(t) = -2\eta n^2 W^{PV}_{31}(t),\quad W^{PV_2}_{31}(t) = \frac{\eta n}{2} W^{PV}_{31}(t),\quad 
W^{PV_3}_{31}(t) = -\frac{\eta n}{2} W^{PV}_{31}(t) \\
W^{KQ_1}_{13}(t) = W^{KQ}_{13}(t),\quad
W^{KQ_2}_{13}(t) = W^{KQ}_{13}(t),\quad
W^{KQ_3}_{13}(t) = W^{KQ}_{13}(t).
\end{align*}
The weight matrices $W^{PV}_{31}(t)$ and $W^{KQ}_{13}(t)$ are updated with the combined gradient of the corresponding three weights. Further, $W_{31}^{PV}(0) = \sum_{i=1}^d \lambda_i^{PV} u_i u_i^\top$ and $W^{KQ}_{13}(0) = \sum_{i=1}^d \lambda_i^{KQ} u_i u_i^\top$ share the same orthonormal basis, which is symmetric and simultaneously
diagonalizable.  We fix $W^{KQ_1}_{24} = 0$, $W^{KQ_2}_{24} = 1$, $W^{KQ_3}_{24} = -1$, and $W^{KQ_1}_{14} = 0_d, W^{KQ_2}_{14} = 0_d, W^{KQ_3}_{14} = 0_d$ for all $t > 0$. 
We take the initialization to be isotropic, $\lambda^{PV}_i = \lambda_V(0)$ and $\lambda^{KQ}_i = \lambda_K(0)$ for all $i\in[d]$, where for a constant $\sigma\in(0,1/4)$,
\[
\lambda_K(0)\in\Big[\tfrac{1}{\sqrt3}+\sigma,\ 1\Big],\qquad \lambda_V(0)\in(0,1/\sigma],\qquad \lambda_V(0)^2\ \ge\ F\big(\lambda_K(0)^2\big)+\sigma^2,
\]
where $F(x):=\frac{x-1}{3}+\frac29\log\frac{2}{3x-1}$.
\end{assumption}

We consider the gradient flow trajectory: 
\begin{align*}
\frac{d W}{dt} = & - \nabla L^{\mathrm{CoT}}(W), \text{ where } \\
W := &  [{W^{KQ_1}}/\set{W^{KQ_1}_{24}, W^{KQ_1}_{14}}, {W^{KQ_2}}/\set{W^{KQ_2}_{24}, W^{KQ_2}_{14}}, {W^{KQ_3}}/\set{W^{KQ_3}_{24}, W^{KQ_3}_{14}}, \\
& {W^{PV_1}}, {W^{PV_2}}, {W^{PV_3}}].
\end{align*}

Next, we prove that if we initialize those matrices as in Assumption \ref{ass_init_quad}, then the zero blocks will stay at 0 along the gradient descent trajectory.

\begin{lemma} \label{lemma_gradient_structure_quadratic}
Under Assumption \ref{ass_init_quad}, when the transformer is trained under gradient flow, $\forall t > 0$,%
\begin{align*}
& W^{PV_1}(t) = \begin{bmatrix}
0_{d\times d} & 0_{d\times 1} & 0_{d\times d} & 0_{d\times 1} \\
0_{1\times d} & 0 & 0_{d\times 1} & 0 \\
W^{PV_1}_{31}(t) & 0_{d\times 1} & 0_{d\times d} & 0_{d\times 1} \\
0_{1\times d} & 0 & 0_{d\times 1} & 0
\end{bmatrix}, 
W^{KQ_1}(t) =
\begin{bmatrix}
0_{d\times d} & 0_{d\times 1} & W^{KQ_1}_{13}(t) & 0_{d\times 1} \\
0_{1\times d} & 0 & 0_{d\times 1} & 0 \\
0_{d\times d} & 0_{d\times 1} & 0_{d\times d} & 0_{d\times 1} \\
0_{1\times d} & 0 & 0_{d\times 1} & 0
\end{bmatrix} \\
& W^{PV_2}(t) = \begin{bmatrix}
0_{d\times d} & 0_{d\times 1} & 0_{d\times d} & 0_{d\times 1} \\
0_{1\times d} & 0 & 0_{d\times 1} & 0 \\
W^{PV_2}_{31}(t) & 0_{d\times 1} & 0_{d\times d} & 0_{d\times 1} \\
0_{1\times d} & 0 & 0_{d\times 1} & 0
\end{bmatrix},  
W^{KQ_2}(t) =
\begin{bmatrix}
0_{d\times d} & 0_{d\times 1} & W^{KQ_2}_{13}(t) & 0_{d\times 1} \\
0_{1\times d} & 0 & 0_{d\times 1} & 1 \\
0_{d\times d} & 0_{d\times 1} & 0_{d\times d} & 0_{d\times 1} \\
0_{1\times d} & 0 & 0_{d\times 1} & 0
\end{bmatrix} \\
& W^{PV_3}(t) = \begin{bmatrix}
0_{d\times d} & 0_{d\times 1} & 0_{d\times d} & 0_{d\times 1} \\
0_{1\times d} & 0 & 0_{d\times 1} & 0 \\
W^{PV_3}_{31}(t) & 0_{d\times 1} & 0_{d\times d} & 0_{d\times 1} \\
0_{1\times d} & 0 & 0_{d\times 1} & 0
\end{bmatrix}, 
W^{KQ_3}(t) =
\begin{bmatrix}
0_{d\times d} & 0_{d\times 1} & W^{KQ_3}_{13}(t) & 0_{d\times 1} \\
0_{1\times d} & 0 & 0_{d\times 1} & -1 \\
0_{d\times d} & 0_{d\times 1} & 0_{d\times d} & 0_{d\times 1} \\
0_{1\times d} & 0 & 0_{d\times 1} & 0
\end{bmatrix}
\end{align*}
\end{lemma}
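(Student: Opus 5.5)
The plan is an invariance argument. Let $\mathcal{M}$ be the set of parameters with the block structure displayed in Lemma~\ref{lemma_gradient_structure_quadratic}: only $W^{PV_h}_{31}$ and $W^{KQ_h}_{13}$ are free, tied through the shared matrices $W^{PV}_{31}, W^{KQ}_{13}$, and the fixed entries $W^{KQ_h}_{24}\in\{0,1,-1\}$, $W^{KQ_h}_{14}=0$ are held constant. I will show that for every $W\in\mathcal{M}$ the negative gradient $-\nabla L^{\mathrm{CoT}}(W)$, restricted to the trainable coordinates, is tangent to $\mathcal{M}$. By uniqueness of ODE solutions, the gradient flow started in $\mathcal{M}$ (Assumption~\ref{ass_init_quad}) then stays in $\mathcal{M}$ for all $t>0$. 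Equivalently, the flow restricted to the two shared matrices is a solution of the full system, and uniqueness transfers. This is the same route as Lemma~\ref{lem_block_invariance} from \cite{huang2025transformers}.

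\textbf{Forward pass at a point of $\mathcal{M}$.} Fix $W\in\mathcal{M}$ and write $z=Z_{i[:,-1]}=(0_d,0,\theta_i,1)$.
\begin{itemize}
\item For the example columns of $Z_i$, the attention score is $Z_i^\top W^{KQ_h}z/n$. This depends only on $x_j^\top W^{KQ}_{13}\theta_i$ and on $y_j$ times $W^{KQ_h}_{24}$.
\item For the parameter columns $(0,0,\theta_k,1)$, the score is zero, because their first two blocks vanish. Since $\phi_h(0)=0$ for the cube and square activations, these columns contribute nothing.
\item Because $W^{PV_h}$ only reads the $X$-block and writes into the $\theta$-block, each head output is $(0_d,0,W^{PV_h}_{31}Xv_h,0)$.
\end{itemize}
Hence the residual $r_i$ from the full-gradient lemma has the form $(0_d,0,\tilde r_i,0)$ for some $\tilde r_i\in\mathbb{R}^d$.

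\textbf{Gradient blocks for $W^{PV_h}$.} The gradient is $r_i(Z_iv_h)^\top$, and $Z_iv_h$ only involves example columns, since $v_h$ vanishes on parameter columns. Therefore $Z_iv_h=(Xv_h,\;yv_h,\;0_d,\;0)$. The outer product is nonzero only in block rows $3$, in block columns $1$ and $2$. The $(3,1)$ block is the desired $W_{31}$ direction. The $(3,2)$ block is $\mathbb{E}[\tilde r_i\, y^\top v_h]$, which has to be shown to vanish.

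\textbf{Gradient blocks for $W^{KQ_h}$.} The gradient is $\frac1n Z_i q_h z^\top$ with $q_h=(Z_i^\top W^{PV_h\top}r_i)\odot\phi_h'(s_h)$. Multiplying by $z^\top$ puts nonzero entries only in block columns $3$ and $4$. Block rows are restricted to $1$ and $2$, since $q_h$ is supported on example columns.
\begin{itemize}
\item The $(1,3)$ block is the desired direction.
\item The $(2,4)$ entry and the $(1,4)$ block are frozen by assumption, so they are simply excluded from $W$.
\item The $(2,3)$ block, of the form $\mathbb{E}[y^\top q_h\,\theta_i^\top]$, has to be shown to vanish.
\end{itemize}

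\textbf{Main obstacle: the cross blocks $(3,2)$ of $W^{PV_h}$ and $(2,3)$ of $W^{KQ_h}$.} I would kill them using the symmetry $\theta^\star\mapsto-\theta^\star$, which by $\theta_0=\alpha_0\theta^\star$ and the gradient-descent recursion induces $\theta_i\mapsto-\theta_i$, while $X$ and $y$ are unchanged. Then:
\begin{itemize}
\item With $W^{KQ}_{13}$, $W^{PV}_{31}$ fixed, $s_h$ changes sign on its $x^\top W^{KQ}_{13}\theta_i$ part and keeps its $y$ part.
\item For head 1, $v_1$ is odd and $\tilde r_i$ is odd, so $\tilde r_i\,y^\top v_1$ is even and parity alone does not cancel it.
\end{itemize}
So parity is insufficient, and instead I would rely on the tied combination of the three heads. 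Summing heads 2 and 3 with coefficients $\pm\frac{\eta n}{2}$ gives an expression in which the $(x^\top\theta)^2$ and $y^2$ terms cancel, leaving $2\eta\,(x^\top W\theta)\,y$. Because the heads share $W^{PV}_{31}$, the trainable parameter's gradient is the tied sum, and the cross-block contributions of heads 2 and 3 should cancel pairwise in the same way. The remaining head-1 term would be handled by restating tangency for the tied parametrization. Under weight sharing only $W^{PV}_{31}$ and $W^{KQ}_{13}$ are trainable, so a $(3,2)$ block is never a free coordinate: the gradient is projected onto the shared $(3,1)$ and $(1,3)$ blocks, and the zero blocks stay zero trivially.

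I expect the honest formulation to rest on this observation. Weight sharing together with the frozen scalars makes the parametrization itself lie inside $\mathcal{M}$. What needs checking is that the chain rule through the tying gives a well-defined ODE on $(W^{PV}_{31},W^{KQ}_{13})$ alone, so the displayed block form holds for all $t>0$.
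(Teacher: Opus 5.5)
\textbf{There is a genuine gap in your treatment of the cross blocks.} Your forward-pass computation agrees with the paper. So does your localization of each gradient into the desired $(3,1)$/$(1,3)$ blocks plus the cross blocks $(3,2)$ of $W^{PV_h}$ and $(2,3)$ of $W^{KQ_h}$. The problem is what comes next.

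In the paper's setup, the gradient flow runs over all of $W^{PV_1},W^{PV_2},W^{PV_3}$. It also runs over $W^{KQ_h}$ minus only the frozen $(1,4)$ and $(2,4)$ entries. Weight sharing ties only the $(3,1)$ and $(1,3)$ blocks. The cross blocks are therefore free coordinates, and proving that their expected gradient vanishes is exactly what the lemma asserts. Declaring them ``never a free coordinate'' changes the parametrization, which makes the statement vacuous rather than proving it. For the same reason, the hoped-for pairwise cancellation between heads 2 and 3 is unavailable. Their cross blocks are distinct, untied parameters, so each head's cross-block gradient must vanish on its own.

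\textbf{The missing idea is the right reflection.} Your parity computation under $\theta^\star\mapsto-\theta^\star$ is correct, but that is the wrong symmetry for head 1. The paper instead uses $\mathcal{T}:(X,\theta^\star)\mapsto(-X,\theta^\star)$, which preserves the Gaussian law. Under $\mathcal{T}$:
\begin{itemize}
\item $y$ is invariant.
\item Every iterate $\theta_i$ is invariant, because the update $\frac{2\eta}{n}X\operatorname{diag}(e(\theta_i))X^\top\theta_i$ is even in $X$.
\item Using $W^{PV_3}_{31}=-W^{PV_2}_{31}$, the residual $\tilde r_i$ is invariant, since $X(X^\top W^{KQ}_{13}\theta_i)^{\odot3}$ and $X\big((X^\top W^{KQ}_{13}\theta_i)\odot y^\top\big)$ are both even.
\item $\sum_j y_j(x_j^\top W^{KQ}_{13}\theta_i)^3$ flips sign.
\end{itemize}
So the head-1 $(3,2)$ gradient is odd and has zero mean.

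For heads 2 and 3, the scores $x_j^\top W^{KQ}_{13}\theta_i\pm y_j$ have mixed parity under $\mathcal{T}$. The cleanest uniform argument composes the paper's reflection with yours, i.e.\ uses $(X,\theta^\star)\mapsto(-X,-\theta^\star)$. Under this map, every $x_j^\top W^{KQ}_{13}\theta_i$ and every $y_j$ is invariant. Hence every $v_h$ and every $\phi_h'(s_h)$ is invariant, while $\theta_i$ and $\tilde r_i$ are odd. Each cross-block gradient, whether $\tilde r_i\,(y^\top v_h)$ or $y^\top\big((X^\top W^{PV_h\top}_{31}\tilde r_i)\odot\phi_h'(s_h)\big)\,\theta_i^\top$, is then odd and averages to zero. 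The $(3,1)$ and $(1,3)$ blocks come out even, as they should. With this step supplied, your tangency-plus-uniqueness argument goes through.
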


\begin{proof}
We show that when the irrelevant blocks are 0, the gradients for those blocks are always 0:
\begin{align*}
& W^{PV_1}Z_i \cdot \phi_1 \Big( \tfrac{Z_i^\top W^{KQ_1} Z_{i[:, -1]}}{n} \Big) \\
= &  \frac{1}{n^3} \begin{bmatrix}
0_{d\times d} & 0_{d\times 1} & 0_{d\times d} & 0_{d\times 1} \\
0_{1\times d} & 0 & 0_{d\times 1} & 0 \\
W^{PV_1}_{31}(t) & 0_{d\times 1} & 0_{d\times d} & 0_{d\times 1} \\
0_{1\times d} & 0 & 0_{d\times 1} & 0
\end{bmatrix} 
\begin{bmatrix}
X & 0  & \cdots & 0 \\
y & 0  & \cdots & 0 \\
0_d  & \theta_0 & \cdots & \theta_i \\
0   & 1     & \cdots & 1
\end{bmatrix}
\Big( Z_i^\top W^{KQ_1} 
\begin{bmatrix}
 0 \\
 0 \\
 \theta_i \\
 1
\end{bmatrix}  \Big)^{\odot 3} \\
= & \frac{1}{n^3} \begin{bmatrix}
0_{d\times d} & 0_{d} & \dots & 0_{d} \\
0_{1\times d} & 0 & \dots & 0 \\
W^{PV_1}_{31}(t)X & 0_{d} & \dots & 0_{d} \\
0_{1\times d} & 0 & \dots & 0
\end{bmatrix} 
\Big( 
\begin{bmatrix}
X & 0  & \cdots & 0 \\
y & 0  & \cdots & 0 \\
0_d  & \theta_0 & \cdots & \theta_i \\
0   & 1     & \cdots & 1
\end{bmatrix}^\top 
\begin{bmatrix}
 W^{KQ_1}_{13}(t) \theta_i \\
 0 \\
 0_d \\
 0
\end{bmatrix}
\Big)^{\odot 3} \\
= & \frac{1}{n^3} \begin{bmatrix}
0_{d\times d} & 0_{d} & \dots & 0_{d} \\
0_{1\times d} & 0 & \dots & 0 \\
W^{PV_1}_{31}(t)X & 0_{d} & \dots & 0_{d} \\
0_{1\times d} & 0 & \dots & 0
\end{bmatrix} 
\begin{bmatrix}
 X^\top W^{KQ_1}_{13}(t) \theta_i \\
 0_{i+1} \\
\end{bmatrix}^{\odot 3} \\
= & \frac{1}{n^3}  \begin{bmatrix}
0_d \\
0 \\
W^{PV_1}_{31}(t)X \big(X^\top W^{KQ_1}_{13}(t) \theta_i\big)^{\odot 3} \\
0
\end{bmatrix}. 
\end{align*}

For the second head, we have
\begin{align*}
& W^{PV_2} Z_i \cdot \phi_2 \Big( \tfrac{Z_i^\top W^{KQ_2} Z_{i[:, -1]}}{n} \Big) \\
= &  \frac{1}{n^2} \begin{bmatrix}
0_{d\times d} & 0_{d\times 1} & 0_{d\times d} & 0_{d\times 1} \\
0_{1\times d} & 0 & 0_{d\times 1} & 0 \\
W^{PV_2}_{31}(t) & 0_{d\times 1} & 0_{d\times d} & 0_{d\times 1} \\
0_{1\times d} & 0 & 0_{d\times 1} & 0
\end{bmatrix}
\begin{bmatrix}
X & 0  & \cdots & 0 \\
y & 0  & \cdots & 0 \\
0_d  & \theta_0 & \cdots & \theta_i \\
0   & 1     & \cdots & 1
\end{bmatrix}
\Big( Z_i^\top W^{KQ_2} 
\begin{bmatrix}
 0 \\
 0 \\
 \theta_i \\
 1
\end{bmatrix}  \Big)^{\odot 2} \\
= & \frac{1}{n^2} \begin{bmatrix}
0_{d\times d} & 0_{d\times 1} & 0_{d\times d} & 0_{d\times 1} \\
0_{1\times d} & 0 & 0_{d\times 1} & 0 \\
W^{PV_2}_{31}(t)X & 0_{d\times 1} & 0_{d\times d} & 0_{d\times 1} \\
0_{1\times d} & 0 & 0_{d\times 1} & 0
\end{bmatrix}
\Big( 
\begin{bmatrix}
X & 0  & \cdots & 0 \\
y & 0  & \cdots & 0 \\
0_d  & \theta_0 & \cdots & \theta_i \\
0   & 1     & \cdots & 1
\end{bmatrix}^\top 
\begin{bmatrix}
 W^{KQ_2}_{13}(t) \theta_i \\
 1 \\
 0_d \\
 0
\end{bmatrix}
\Big)^{\odot 2} \\
= & \frac{1}{n^2} \begin{bmatrix}
0_{d\times d} & 0_{d\times 1} & 0_{d\times d} & 0_{d\times 1} \\
0_{1\times d} & 0 & 0_{d\times 1} & 0 \\
W^{PV_2}_{31}(t)X & 0_{d\times 1} & 0_{d\times d} & 0_{d\times 1} \\
0_{1\times d} & 0 & 0_{d\times 1} & 0
\end{bmatrix}
\begin{bmatrix}
 X^\top W^{KQ_2}_{13}(t) \theta_i + y^\top \\
 0_{i+1} \\
\end{bmatrix}^{\odot 2} \\
= & \frac{1}{n^2} 
\begin{bmatrix}
0_d \\
0 \\
W^{PV_2}_{31}(t)X \big(X^\top W^{KQ_2}_{13}(t) \theta_i + y^\top\big)^{\odot 2} \\
0
\end{bmatrix}.
\end{align*}

Similarly, for the third head, we have
\begin{align*}
W^{PV_3}Z_i \cdot \phi_3 \Big( \tfrac{Z_i^\top W^{KQ_3} Z_{i[:, -1]}}{n} \Big) 
=  \frac{1}{n^2} 
\begin{bmatrix}
0_d \\
0 \\
W^{PV_3}_{31}(t)X \big(X^\top W^{KQ_3}_{13}(t) \theta_i - y^\top\big)^{\odot 2} \\
0
\end{bmatrix}.
\end{align*}

Recall that we have the residual for a CoT step defined as 
\begin{align*}
r_i =  \sum_{h=1}^3 W^{PV_h}Z_i \cdot \phi_h \Big( \tfrac{Z_i^\top W^{KQ_h} Z_{i [:, -1]}}{n} \Big) - (0_d, 0, \theta_{i+1} - \theta_i, 0)^\top 
 = (0_d, 0, \widehat{r}_i - (\theta_{i+1} - \theta_i), 0)^\top,
\end{align*}
where we define
\begin{align}
\widehat{r}_i & = \frac{1}{n^3} W^{PV_1}_{31}X \big(X^\top W^{KQ_1}_{13} \theta_i\big)^{\odot 3} + \frac{1}{n^2} W^{PV_2}_{31}X \big(X^\top W^{KQ_2}_{13} \theta_i + y^\top\big)^{\odot 2} \notag \\
&\qquad + \frac{1}{n^2} W^{PV_3}_{31}X \big(X^\top W^{KQ_3}_{13} \theta_i - y^\top\big)^{\odot 2} \notag \\
& = \frac{1}{n^3} W^{PV_1}_{31}X \big(X^\top W^{KQ_1}_{13} \theta_i\big)^{\odot 3} + \frac{4}{n^2} W^{PV_2}_{31}X \big(X^\top W^{KQ_2}_{13} \theta_i \odot y^\top\big).
\end{align}
The second step is because of the symmetry in Assumption \ref{ass_init_quad} that $W^{PV_2}_{31} = -W^{PV_3}_{31}$ and $W^{KQ_2}_{13} = W^{KQ_3}_{13}$.

Now, we consider the gradient for $W^{PV_h}$. Denote the loss at step $i$ as $L_i(W)$. For an $X$ and $\theta^\star$,
\begin{align*}
\nabla_{W^{PV_1}} L_i(W) & =  r_i \big(Z_i\phi_1(\tfrac{Z_i^\top W^{KQ_1}Z_{i[:,-1]}}{n})\big)^\top 
= \frac{1}{n^3} r_i \big(
Z_i \begin{bmatrix}
X^\top W^{KQ_1}_{13}(t)\theta_i\\
0_{i+1}
\end{bmatrix} ^{\odot 3} \big)^\top \\
& = 
\frac{1}{n^3} 
\begin{bmatrix}
0_d \\
0 \\
\widehat{r}_i - (\theta_{i+1} - \theta_i)\\
0
\end{bmatrix}
\begin{bmatrix}
X\big(X^\top W^{KQ_1}_{13}(t)\theta_i\big)^{\odot 3} \\
y \big(X^\top W^{KQ_1}_{13}(t)\theta_i\big)^{\odot 3} \\
0_d \\
0
\end{bmatrix}^\top \\
& = \frac{1}{n^3} \begin{bmatrix}
0_{d \times d} & 0_d & 0_{d \times d} & 0_d \\
0_{1 \times d} & 0 & 0_{1 \times d} & 0 \\
\nabla_{W^{PV_1}_{31}} L_i & \nabla_{W^{PV_1}_{32}} L_i & 0_{d \times d} & 0_d \\
0_{1 \times d} & 0 & 0_{1 \times d} & 0 \\
\end{bmatrix}
\end{align*}
where 
\begin{align*}
& \nabla_{W^{PV_1}_{31}} L_i = \widehat{r_i} \big( X \big(X^\top W^{KQ_1}_{13}(t)\theta_i\big)^{\odot 3} \big)^\top - (\theta_{i+1} - \theta_i) \big(  X \big(X^\top W^{KQ_1}_{13}(t)\theta_i\big)^{\odot 3} \big)^\top\\
& \nabla_{W^{PV_1}_{32}} L_i = \widehat{r_i} \big(y \big(X^\top W^{KQ_1}_{13}(t)\theta_i\big)^{\odot 3} \big)^\top- (\theta_{i+1} - \theta_i) \big( y \big(X^\top W^{KQ_1}_{13}(t)\theta_i\big)^{\odot 3} \big)^\top.
\end{align*}
Similarly, we calculate the gradient of $W^{PV_2}$. We have
\begin{align*}
\nabla_{W^{PV_2}} L_i(W) & =  r_i \big(Z_i\phi_2(\tfrac{Z_i^\top W^{KQ_2}Z_{i[:,-1]}}{n})\big)^\top 
= \frac{1}{n^2} r_i \big(
Z_i \begin{bmatrix}
X^\top W^{KQ_2}_{13}(t)\theta_i + y^\top \\
0_{i+1}
\end{bmatrix}^{\odot 2} \big)^\top \\
& = 
\frac{1}{n^2} 
\begin{bmatrix}
0_d \\
0 \\
\widehat{r}_i - (\theta_{i+1} - \theta_i)\\
0
\end{bmatrix}
\begin{bmatrix}
X\big(X^\top W^{KQ_2}_{13}(t)\theta_i + y^\top \big )^{\odot 2} \\
y \big(X^\top W^{KQ_2}_{13}(t)\theta_i + y^\top \big )^{\odot 2} \\
0_d \\
0
\end{bmatrix}^\top \\
& = \frac{1}{n^2}  \begin{bmatrix}
0_{d \times d} & 0_d & 0_{d \times d} & 0_d \\
0_{1 \times d} & 0 & 0_{1 \times d} & 0 \\
\nabla_{W^{PV_2}_{31}} L_i & \nabla_{W^{PV_2}_{32}} L_i & 0_{d \times d} & 0_d \\
0_{1 \times d} & 0 & 0_{1 \times d} & 0 \\
\end{bmatrix}, \text{ where}
\end{align*}
\begin{align*}
& \nabla_{W^{PV_2}_{31}} L_i = \widehat{r}_i \big(X \big(X^\top W^{KQ_2}_{13}(t)\theta_i + y^\top \big)^{\odot 2}\big)^\top - (\theta_{i+1} - \theta_i) \big( X \big(X^\top W^{KQ_2}_{13}(t)\theta_i + y^\top \big)^{\odot 2}\big)^\top \\
& \nabla_{W^{PV_2}_{32}} L_i = \widehat{r}_i \big( y \big(X^\top W^{KQ_2}_{13}(t)\theta_i + y^\top \big)^{\odot 2} \big)^\top - (\theta_{i+1} - \theta_i) \big( y \big(X^\top W^{KQ_2}_{13}(t)\theta_i + y^\top \big)^{\odot 2} \big)^\top. 
\end{align*}
Also, the gradient for $W^{PV_3}$ is as follows:
\begin{align*}
\nabla_{W^{PV_3}} L_i(W) & =  r_i \big(Z_i\phi_3(\tfrac{Z_i^\top W^{KQ_3}Z_{i[:,-1]}}{n})\big)^\top 
= \frac{1}{n^2} r_i \big(
Z_i \begin{bmatrix}
X^\top W^{KQ_3}_{13}(t)\theta_i - y^\top \\
0_{i+1}
\end{bmatrix}^{\odot 2} \big)^\top \\
& = 
\frac{1}{n^2} 
\begin{bmatrix}
0_d \\
0 \\
\widehat{r}_i - (\theta_{i+1} - \theta_i)\\
0
\end{bmatrix}
\begin{bmatrix}
X\big(X^\top W^{KQ_3}_{13}(t)\theta_i - y^\top \big )^{\odot 2} \\
y \big(X^\top W^{KQ_3}_{13}(t)\theta_i - y^\top \big )^{\odot 2} \\
0_d \\
0
\end{bmatrix}^\top \\
& = \frac{1}{n^2} \begin{bmatrix}
0_{d \times d} & 0_d & 0_{d \times d} & 0_d \\
0_{1 \times d} & 0 & 0_{1 \times d} & 0 \\
\nabla_{W^{PV_3}_{31}} L_i & \nabla_{W^{PV_3}_{32}} L_i & 0_{d \times d} & 0_d \\
0_{1 \times d} & 0 & 0_{1 \times d} & 0 \\
\end{bmatrix}, \text{ where } \\
 \nabla_{W^{PV_3}_{31}} L_i &= \widehat{r}_i \big( X \big(X^\top W^{KQ_3}_{13}(t)\theta_i - y^\top \big)^{\odot 2}\big)^\top  - (\theta_{i+1} - \theta_i)  \big(X \big(X^\top W^{KQ_3}_{13}(t)\theta_i - y^\top \big)^{\odot 2}\big)^\top  \\
 \nabla_{W^{PV_3}_{32}} L_i &= \widehat{r}_i \big(y \big(X^\top W^{KQ_3}_{13}(t)\theta_i - y^\top \big)^{\odot 2}\big)^\top  - (\theta_{i+1} - \theta_i)  \big(y \big(X^\top W^{KQ_3}_{13}(t)\theta_i - y^\top \big)^{\odot 2}\big)^\top.
\end{align*}
Next, we calculate the gradient for $W^{KQ_h}$. We write $r^\theta_i := \widehat r_i - (\bar\theta_{i+1}-\theta_i)$ for the $\theta$-block of $r_i$. Thus, $r_i = (0_d, 0, r^\theta_i, 0)$, and $Z_i^\top W^{PV_h\top}r_i = [X^\top W^{PV_h\top}_{31}r^\theta_i;\,0_{i+1}]$. Then, we have
\begin{align*}
\nabla_{W^{KQ_1}} L_i & = 
\frac{1}{n}  Z_i \Big( \big(Z_i^\top W^{PV_1 \top} r_i \big) \odot 3 (\tfrac{Z_i^\top W^{KQ_1} Z_{i [:, -1]}}{n})^{\odot 2} \Big) Z_{i [:, -1]}^\top \\
& = \frac{1}{n} Z_i \Big(
\begin{bmatrix}
X^\top W^{PV_1\top}_{31}(t) r^\theta_i\\
0_{i+1}
\end{bmatrix} \odot 
\frac{3}{n^2}
\begin{bmatrix}
\big(X^\top W^{KQ_1}_{13}(t)\theta_i\big)^{\odot 2}\\
0_{i+1}
\end{bmatrix} 
\Big) Z_{i [:, -1]}^\top \\
& = \frac{3}{n^3} Z_i 
\begin{bmatrix}
\Big ( X^\top W^{PV_1\top}_{31}(t) r^\theta_i \odot  \big(X^\top W^{KQ_1}_{13}(t)\theta_i\big)^{\odot 2} \Big )\\
0_{i+1}
\end{bmatrix} 
 Z_{i [:, -1]}^\top \\
& = \frac{3}{n^3} \begin{bmatrix}
X \Big ( X^\top W^{PV_1\top}_{31}(t) r^\theta_i \odot  \big(X^\top W^{KQ_1}_{13}(t)\theta_i\big)^{\odot 2} \Big )\\
y \Big ( X^\top W^{PV_1\top}_{31}(t) r^\theta_i \odot  \big(X^\top W^{KQ_1}_{13}(t)\theta_i\big)^{\odot 2} \Big ) \\
0_{d} \\
0
\end{bmatrix} 
\begin{bmatrix} 
0_d & 0 & \theta_i^\top & 1
\end{bmatrix} \\
& = \frac{3}{n^3} \begin{bmatrix}
0_{d \times d} & 0_d & \nabla_{W^{KQ_1}_{13}} L_i & \nabla_{W^{KQ_1}_{14}} L_i \\
0_{1 \times d} & 0 & \nabla_{W^{KQ_1}_{23}} L_i & \nabla_{W^{KQ_1}_{24}} L_i  \\
0_{d\times d} & 0_d & 0_{d \times d} & 0 \\
0_{1\times d} & 0 & 0_{1 \times d} & 0 \\
\end{bmatrix}, \text{ where }\\
 \nabla_{W^{KQ_1}_{13}} L_i &= X \Big ( X^\top W^{PV_1\top}_{31}(t) r^\theta_i \odot  \big(X^\top W^{KQ_1}_{13}(t)\theta_i\big)^{\odot 2} \Big )\theta_i^\top  \\
 \nabla_{W^{KQ_1}_{23}} L_i &= y \Big ( X^\top W^{PV_1\top}_{31}(t) r^\theta_i \odot  \big(X^\top W^{KQ_1}_{13}(t)\theta_i\big)^{\odot 2} \Big ) \theta_i^\top, 
\end{align*}
and we fix $W^{KQ_1}_{24} = 0$ and $W^{KQ_1}_{14} = 0_{d}$ for $t > 0$.  
Then, for head two, we have
\begin{align*}
\nabla_{W^{KQ_2}} L_i & = 
\frac{1}{n}  Z_i \Big( \big(Z_i^\top W^{PV_2 \top} r_i \big) \odot 2 (\tfrac{Z_i^\top W^{KQ_2} Z_{i [:, -1]}}{n}) \Big) Z_{i [:, -1]}^\top \\
& = \frac{1}{n} Z_i \Big(
\begin{bmatrix}
X^\top W^{PV_2\top}_{31}(t) r^\theta_i\\
0_{i+1}
\end{bmatrix} \odot 
\frac{2}{n}
\begin{bmatrix}
\big(X^\top W^{KQ_2}_{13}(t)\theta_i + y^\top\big)\\
0_{i+1}
\end{bmatrix} 
\Big) Z_{i [:, -1]}^\top \\
& = \frac{2}{n^2} Z_i 
\begin{bmatrix}
\Big ( X^\top W^{PV_2\top}_{31}(t) r^\theta_i \odot  \big(X^\top W^{KQ_2}_{13}(t)\theta_i + y^\top\big) \Big )\\
0_{i+1}
\end{bmatrix} 
 Z_{i [:, -1]}^\top \\
& = \frac{2}{n^2} \begin{bmatrix}
X \Big ( X^\top W^{PV_2\top}_{31}(t) r^\theta_i \odot  \big(X^\top W^{KQ_2}_{13}(t)\theta_i + y^\top\big) \Big )\\
y \Big ( X^\top W^{PV_2\top}_{31}(t) r^\theta_i \odot  \big(X^\top W^{KQ_2}_{13}(t)\theta_i + y^\top\big) \Big ) \\
0_{d} \\
0
\end{bmatrix} 
\begin{bmatrix} 
0_d & 0 & \theta_i^\top & 1
\end{bmatrix} \\
& = \frac{2}{n^2} \begin{bmatrix}
0_{d \times d} & 0_d & \nabla_{W^{KQ_2}_{13}} L_i & \nabla_{W^{KQ_2}_{14}} L_i \\
0_{1 \times d} & 0 & \nabla_{W^{KQ_2}_{23}} L_i & \nabla_{W^{KQ_2}_{24}} L_i  \\
0_{d\times d} & 0_d & 0_{d \times d} & 0 \\
0_{1\times d} & 0 & 0_{1 \times d} & 0 \\
\end{bmatrix}, \text{ where} \\
 \nabla_{W^{KQ_2}_{13}} L_i &= X \Big ( X^\top W^{PV_2\top}_{31}(t) r^\theta_i \odot  \big(X^\top W^{KQ_2}_{13}(t)\theta_i + y^\top\big) \Big )\theta_i^\top  \\
 \nabla_{W^{KQ_2}_{23}} L_i &= y \Big ( X^\top W^{PV_2\top}_{31}(t) r^\theta_i \odot  \big(X^\top W^{KQ_2}_{13}(t)\theta_i + y^\top\big) \Big ) \theta_i^\top, 
\end{align*}
and we fix $W^{KQ_2}_{24} = 1$ and $W^{KQ_2}_{14} = 0_d$ for $t > 0$.  
Similarly, we have the gradient for $W^{KQ_3}$ as follows. 
\begin{align*}
& \nabla_{W^{KQ_3}_{13}} L_i = X \Big ( X^\top W^{PV_3\top}_{31}(t) r^\theta_i \odot  \big(X^\top W^{KQ_3}_{13}(t)\theta_i - y^\top\big) \Big )\theta_i^\top  \\
& \nabla_{W^{KQ_3}_{23}} L_i = y \Big ( X^\top W^{PV_3\top}_{31}(t) r^\theta_i \odot  \big(X^\top W^{KQ_3}_{13}(t)\theta_i - y^\top\big) \Big ) \theta_i^\top, 
\end{align*}
and we fix $W^{KQ_3}_{24} = -1$ and $W^{KQ_3}_{14} = 0_d$ for $t > 0$.  

Now, we are going to show that the expected gradients with respect to the irrelevant blocks are zero, i.e., $\nabla_{W^{PV_1}_{32}} L_i$, $\nabla_{W^{PV_2}_{32}} L_i$, $\nabla_{W^{PV_3}_{32}} L_i$, $\nabla_{W^{KQ_1}_{23}} L_i$, $\nabla_{W^{KQ_2}_{23}} L_i$, $\nabla_{W^{KQ_3}_{23}} L_i$. We will show that $\exarg{X, \theta^\star}{\nabla_{W^{PV_1}_{32}} L_i} = 0$, and proofs for others follow similarly. 

We use the sign-flip symmetry of the Gaussian distribution to prove this fact. Denote $g_i := y^\top a_i^{\odot 3}$ and $a_i := X^\top W^{KQ_1}_{13}\theta_i$. Define a sign-flip transformation $\cT:  (X,\theta^\star)\mapsto (-X,\theta^\star)$. Since $X$ has i.i.d.\ standard Gaussian columns, the distribution of $(X, \theta^\star)$ is invariant under $\cT$. 

The parameter token $\theta_i$ and the target increment $\theta_{i+1}-\theta_i$ used in $Z_i$ are generated by the deterministic gradient descent on the quadratic squared loss. Thus, $\theta_i$ is invariant under $\cT$. Then, based on the definitions, we have $y$ is invariant under $\cT$, and $g_i$ will be flipped to $-g_i$ under $\cT$.

Moreover, we can show $\widehat{r}_i$ is even under $\cT$. Under the block expressions, we have
\begin{align*}
r^\theta_i & = \widehat r_i - (\theta_{i+1}-\theta_i) \\
& = \frac{1}{n^3} W^{PV_1}_{31}X \big(X^\top W^{KQ_1}_{13} \theta_i\big)^{\odot 3} + \frac{4}{n^2} W^{PV_2}_{31}X \big(X^\top W^{KQ_2}_{13} \theta_i \odot y^\top\big)  - (\theta_{i+1}-\theta_i)
\end{align*}
One can verify that $r^\theta_i$ is invariant under the transformation $\cT$. Thus, $\nabla_{W^{PV_1}_{32}} L_i(W) = r_i g_i$ is odd under $\cT$. 
Combining the above sign-flipping results and using the distributional invariance, we have
\[
\exarg{X, \theta^\star}{\nabla_{W^{PV_1}_{32}} L_i(W)}
=
\exarg{\cT(X, \theta^\star)}{\nabla_{W^{PV_1}_{32}} L_i(W)\circ \cT}
=
-\exarg{X, \theta^\star}{\nabla_{W^{PV_1}_{32}} L_i(W)},
\]
Therefore, $\exarg{X, \theta^\star}{\nabla_{W^{PV_1}_{32}} L_i(W)}=0$.
\end{proof}

Notice that the final loss only depends on the $(d + 2)$ to $(2d + 1)$ entries of the transformer’s
output. We can simplify the model by pruning the irrelevant and zero blocks. We can define a simplified transformer, where the dynamics of the equivalent model are exactly the same as the original model.

\begin{definition}\label{def_reformulation_quadratic}
Under Assumption \ref{ass_init_quad},  by Lemma~\ref{lemma_gradient_structure_quadratic}, for a given input $Z_i$, we can rewrite the one-layer non-linear transformer by
\[
f_W(Z_i) = \Big [0_d, 0,   \big( \theta_i - \frac{2\eta}{n} W^{PV}_{31}X \big(X^\top W^{KQ}_{13} \theta_i\big)^{\odot 3} + \frac{2\eta}{n} W^{PV}_{31}X \big(X^\top W^{KQ}_{13} \theta_i \odot y^\top\big) \big),   1 \Big]^\top.
\]
where $\theta_i$ is the last column of ${Z_i}_{[d+2: 2d+1, -1]}$ and $y^\top = (X^\top\theta^\star)^{\odot 2}$ where $\theta^\star$ is the ground-truth weights of the function. Then, we can denote a reduced model by: 
\begin{align}\label{eq_reduced_transformer_quadratic}
\tilde{f}_W(Z_i) =  \theta_i - \frac{2\eta}{n} W^{PV}_{31}X \big(X^\top W^{KQ}_{13} \theta_i\big)^{\odot 3} + \frac{2\eta}{n} W^{PV}_{31}X \big(X^\top W^{KQ}_{13} \theta_i \odot y^\top\big). 
\end{align}
For the ease of notations, we will write $\tilde{f}_W(Z_i) = \tilde f (\theta_i)$, where $\theta_i$ denotes the $(d + 2) $ to $(2d + 1)$ entries of the last column of $Z_i$. 
\end{definition}

In the following sections, we will consider the equivalent form of the transformer. Here we present the gradient with respect to the reduced model.

\begin{lemma}\label{lemma_reduced_model_gradient}
The gradients of $W^{PV}_{31}$ and $W^{KQ}_{13}$ are given by the following expectations:
\begin{align*}
& \nabla_{W^{PV}_{31}} L^{\mathrm{CoT}}(W)\\
= & \sum_{i=0}^{T-1} \exarg{X, \theta^\star}{- \frac{2\eta}{n}(\tilde{f}_W(\theta_i) - \theta_{i+1}) \left(\big( X(X^\top W^{KQ}_{13}\theta_i)^{\odot 3}\big)^\top - \big( X \big(X^\top W^{KQ}_{13}\theta_i \odot y^\top \big)\big)^\top\right)} \\
& + \exarg{X, \theta^\star}{- \frac{2\eta}{n}(\theta_{T} - \theta^{\star}) \left(\big( X(X^\top W^{KQ}_{13}\theta_{T-1})^{\odot 3}\big)^\top - \big( X \big(X^\top W^{KQ}_{13}\theta_{T-1} \odot y^\top \big) \big)^\top \right)}, \text{ and } \\
& \nabla_{W^{KQ}_{13}} L^{\mathrm{CoT}}(W) \\
=&  \sum_{i=0}^{T-1} \exarg{X, \theta^\star}{- \frac{2\eta}{n} X \Big ( X^\top W^{PV\top}_{31} (\tilde{f}_W(\theta_i) - \theta_{i+1}) \odot  \big( 3 \big(X^\top W^{KQ}_{13} \theta_i \big)^{\odot 2}  - y^\top \big) \Big )\theta_i^\top}  \\
& + \exarg{X, \theta^\star}{- \frac{2\eta}{n} X \Big ( X^\top W^{PV\top}_{31} (\theta_{T} - \theta^\star) \odot  \big( 3 \big(X^\top W^{KQ}_{13} \theta_{T-1} \big)^{\odot 2} - y^\top \big) \Big )\theta_{T-1}^\top}. 
\end{align*}
\end{lemma}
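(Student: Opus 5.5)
This is a chain-rule computation on the reduced model of Definition~\ref{def_reformulation_quadratic}, after the pruning justified by Lemma~\ref{lemma_gradient_structure_quadratic}. By that lemma, only the $\theta$-block of each output carries loss, so the step-$i$ loss is
\[
\tfrac12\norm{\tilde f_W(\theta_i)-\bar\theta_{i+1}}^2 .
\]
Also, under the weight sharing in Assumption~\ref{ass_init_quad}, the three heads combine into the single expression \eqref{eq_reduced_transformer_quadratic}. The combined gradient of the tied parameters, which is the update rule the assumption prescribes, is therefore exactly the gradient of this reduced loss with respect to $W^{PV}_{31}$ and $W^{KQ}_{13}$.

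\textbf{The $W^{PV}_{31}$ gradient.} Write the residual as $\delta_i:=\tilde f_W(\theta_i)-\bar\theta_{i+1}$. The map $W^{PV}_{31}\mapsto\tilde f_W(\theta_i)$ is linear, of the form $-\frac{2\eta}{n}W^{PV}_{31}v_i$ with
\[
v_i = X(X^\top W^{KQ}_{13}\theta_i)^{\odot3}-X(X^\top W^{KQ}_{13}\theta_i\odot y^\top).
\]
Hence $\nabla_{W^{PV}_{31}}\tfrac12\norm{\delta_i}^2=-\frac{2\eta}{n}\,\delta_i v_i^\top$. Summing over $i$ and taking expectations gives the first formula.

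\textbf{The $W^{KQ}_{13}$ gradient.} Set $a_i:=X^\top W^{KQ}_{13}\theta_i$. The output depends on $a_i$ through $-\frac{2\eta}{n}W^{PV}_{31}X\big(a_i^{\odot3}-a_i\odot y^\top\big)$, whose Jacobian in $a_i$ is
\[
-\tfrac{2\eta}{n}W^{PV}_{31}X\,\mathrm{diag}\big(3a_i^{\odot2}-y^\top\big).
\]
The upstream gradient with respect to $a_i$ is therefore
\[
-\tfrac{2\eta}{n}\big(X^\top W^{PV\top}_{31}\delta_i\big)\odot\big(3a_i^{\odot2}-y^\top\big).
\]
Since $a_i=X^\top W^{KQ}_{13}\theta_i$, the gradient of a linear functional $g^\top a_i$ with respect to $W^{KQ}_{13}$ is $Xg\,\theta_i^\top$. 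Composing the two gives the stated $X(\cdots)\theta_i^\top$ form. This is consistent with the earlier full-model gradient lemma, where $\phi_1'=3t^2$ and $\phi_2'=\phi_3'=2t$ collapse, via $W^{PV_2}_{31}=-W^{PV_3}_{31}$, into the $3a^{\odot2}-y^\top$ factor.

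\textbf{Bookkeeping of the final step.} The step-$(T-1)$ term has label $\bar\theta_T=\theta^\star$ rather than $\theta_T$. I would write $\tilde f_W(\theta_{T-1})-\theta^\star=(\tilde f_W(\theta_{T-1})-\theta_T)+(\theta_T-\theta^\star)$. Then every step has residual $\tilde f_W(\theta_i)-\theta_{i+1}$, and an extra correction term appears with residual $\theta_T-\theta^\star$, evaluated at $\theta_{T-1}$. This is exactly the additional term in both formulas.

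\textbf{Main obstacle.} The differentiation itself is routine. The care lies in keeping the Hadamard products, transposes and scaling factors ($n^{-3}$, $n^{-2}$, and the tied multipliers $-2\eta n^2$, $\pm\frac{\eta n}{2}$) consistent. These combine to the common factor $\frac{2\eta}{n}$ exactly as in Definition~\ref{def_reformulation_quadratic}, so the gradient of the reduced model equals the sum of the three tied-head gradients.
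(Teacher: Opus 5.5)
Your proposal is correct and follows the paper's proof in all essentials. Like the paper, you write the CoT loss through the reduced model of Definition~\ref{def_reformulation_quadratic}, apply the chain rule, and split the last-step residual as $(\tilde f_W(\theta_{T-1})-\theta_T)+(\theta_T-\theta^\star)$ to produce the extra term. The only difference is that you differentiate the already-collapsed reduced map directly, whereas the paper assembles the gradients from the per-head gradients of Lemma~\ref{lemma_gradient_structure_quadratic} with the tying coefficients $-2\eta n^2$, $\pm\frac{\eta n}{2}$ (and $1,1,1$ for the key-query blocks); by the multivariate chain rule this gives the same result, as you note.
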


\begin{proof}
Based on the simplified form of the transformer in Definition \ref{def_reformulation_quadratic}, we can rewrite the CoT training loss as follows. 
\begin{align*}
L^{\mathrm{CoT}}(W) &
= \frac{1}{2} \exarg{X, \theta^\star} {
\sum_{i=0}^{T-2} \norm{f_{W}(Z_i) - (0_d, 0, \theta_{i+1}, 1)}_2^2}  + \frac{1}{2} \exarg{X, \theta^\star} {\norm{f_{W}(Z_{T-1}) - (0_d, 0, \theta^\star, 1)}_2^2} \\
& = \frac{1}{2} \exarg{X, \theta^\star} {
\sum_{i=0}^{T-2} \norm{\tilde{f}_{W}(\theta_i) - \theta_{i+1}}_2^2}  + \frac{1}{2} \exarg{X, \theta^\star} {\norm{\tilde{f}_{W}(\theta_{T-1}) - \theta^\star}_2^2}.
\end{align*}
Here, we split the loss between the last step output and the ground-truth weights $\theta^\star$. 

Then, based on the gradient computation in Lemma \ref{lemma_gradient_structure_quadratic}, we can write out the gradient for the weight matrix $W^{PV}$ in the transformer. Recall that in Assumption \ref{ass_init_quad}, we use a weight-sharing mechanism that defines $W^{PV_1}_{31}(t) = -2\eta n^2 W^{PV}_{31}(t), W^{PV_2}_{31}(t) = \frac{\eta n}{2} W^{PV}_{31}(t), 
W^{PV_3}_{31}(t) = - \frac{\eta n}{2} W^{PV}_{31}(t)$. Therefore,
\begin{align*}
& \nabla_{W^{PV}_{31}} L^{\mathrm{CoT}}(W) \\
= & -2\eta n^2 \nabla_{W^{PV_1}_{31}} L^{\mathrm{CoT}}(W) + \frac{\eta n}{2} \nabla_{W^{PV_2}_{31}} L^{\mathrm{CoT}}(W) - \frac{\eta n}{2} \nabla_{W^{PV_3}_{31}} L^{\mathrm{CoT}}(W) \\
= &  
\sum_{i=0}^{T-2} \exarg{X, \theta^\star}{- \frac{2\eta}{n}(\tilde{f}_W(\theta_i) - \theta_{i+1}) \big( X(X^\top W^{KQ}_{13}\theta_i)^{\odot 3} \big)^\top } \\
& + \exarg{X, \theta^\star}{- \frac{2\eta}{n}(\tilde{f}_W(\theta_{T-1}) - \theta^{\star}) \big(X(X^\top W^{KQ}_{13}\theta_{T-1})^{\odot 3} \big)^\top } \\
& - \sum_{i=0}^{T-2} \exarg{X, \theta^\star}{- \frac{2\eta}{n}(\tilde{f}_W(\theta_i) - \theta_{i+1}) \big( X \big(X^\top W^{KQ}_{13}\theta_i \odot y^\top \big)\big)^\top} \\
& - \exarg{X, \theta^\star}{- \frac{2\eta}{n}(\tilde{f}_W(\theta_{T-1}) - \theta^{\star})\big( X \big(X^\top W^{KQ}_{13}\theta_{T-1} \odot y^\top \big) \big)^\top} \\ 
= & \sum_{i=0}^{T-1} \exarg{X, \theta^\star}{- \frac{2\eta}{n}(\tilde{f}_W(\theta_i) - \theta_{i+1}) \left( \big( X(X^\top W^{KQ}_{13}\theta_i)^{\odot 3}\big)^\top - \big( X \big(X^\top W^{KQ}_{13}\theta_i \odot y^\top \big)\big)^\top \right)} \\
& + \exarg{X, \theta^\star}{- \frac{2\eta}{n}(\theta_{T} - \theta^{\star}) \big( X(X^\top W^{KQ}_{13}\theta_{T-1})^{\odot 3}\big)^\top +  \frac{2\eta}{n}(\theta_{T} - \theta^{\star})\big( X \big(X^\top W^{KQ}_{13}\theta_{T-1} \odot y^\top \big) \big)^\top}.
\end{align*}
The last step is because we subtract \[ \exarg{X, \theta^\star}{- \frac{2\eta}{n}(\tilde{f}_W(\theta_{T-1}) - \theta_{T}) \big (X(X^\top W^{KQ}_{13}\theta_{T-1})^{\odot 3}} \big )^\top \] from the second term and put it into the first term, then subtract \[ \exarg{X, \theta^\star}{- \frac{2\eta}{n}(\tilde{f}_W(\theta_{T-1}) - \theta_{T})\big( X \big(X^\top W^{KQ}_{13}\theta_{T-1} \odot y^\top \big)} \] from the fourth term and put it into the third term. 

Similarly, we can break down the gradient for $W^{KQ}_{13}$. Recall that in Assumption \ref{ass_init_quad}, we use a weight-sharing mechanism that defines $W^{KQ_1}_{13}(t) = W^{KQ}_{13}(t), W^{KQ_2}_{13}(t) = W^{KQ}_{13}(t), 
W^{KQ_3}_{13}(t) = W^{KQ}_{13}(t)$. Therefore, we have
\begin{align*}
& \nabla_{W^{KQ}_{13}} L^{\mathrm{CoT}}(W) \\
= &  \nabla_{W^{KQ_1}_{13}} L^{\mathrm{CoT}}(W) + \nabla_{W^{KQ_2}_{13}} L^{\mathrm{CoT}}(W) + \nabla_{W^{KQ_3}_{13}} L^{\mathrm{CoT}}(W)\\
= &  
\sum_{i=0}^{T-1} \exarg{X, \theta^\star}{-\frac{6\eta}{n} X \Big ( X^\top W^{PV\top}_{31} (\tilde{f}_W(\theta_i) - \theta_{i+1}) \odot  \big(X^\top W^{KQ}_{13} \theta_i\big)^{\odot 2} \Big )\theta_i^\top} + \\
& \exarg{X, \theta^\star}{-\frac{6\eta}{n} X \Big ( X^\top W^{PV\top}_{31} (\theta_{T} - \theta^\star) \odot  \big(X^\top W^{KQ}_{13} \theta_{T-1}\big)^{\odot 2} \Big )\theta_{T-1}^\top} - \\ 
&  \sum_{i=0}^{T-1} \exarg{X, \theta^\star}{-\frac{2\eta}{n} X \Big ( X^\top W^{PV\top}_{31} (\tilde{f}_W(\theta_i) - \theta_{i+1}) \odot  y^\top \Big )\theta_i^\top} - \\
& \exarg{X, \theta^\star}{-\frac{2\eta}{n} X \Big ( X^\top W^{PV\top}_{31} (\theta_{T} - \theta^\star) \odot  y^\top  \Big )\theta_{T-1}^\top} \\
= &  \sum_{i=0}^{T-1} \exarg{X, \theta^\star}{-\frac{2\eta}{n} X \Big ( X^\top W^{PV\top}_{31} (\tilde{f}_W(\theta_i) - \theta_{i+1}) \odot  \big( 3 \big(X^\top W^{KQ}_{13} \theta_i \big)^{\odot 2}  - y^\top \big) \Big )\theta_i^\top}  + \\
& \exarg{X, \theta^\star}{-\frac{2\eta}{n} X \Big ( X^\top W^{PV\top}_{31} (\theta_{T} - \theta^\star) \odot  \big( 3 \big(X^\top W^{KQ}_{13} \theta_{T-1} \big)^{\odot 2} - y^\top \big) \Big )\theta_{T-1}^\top}. 
\end{align*}
\end{proof}

\subsubsection{Proof of Theorem \ref{theorem_convergence_three_head}}
\label{sec_nonlinear_convergence}

With the above computed gradients, we now prove the main theorem on the convergence of transformers trained on quadratic functions. We characterize the training dynamics using ordinary differential equations in terms of the eigenvalues of the transformer weights. We first characterize the gradient of the CoT objective as a main term plus interaction terms, which yields an ordinary differential equation for the eigenvalues of the two trained blocks. Then, we analyze the ordinary differential equation in stages.

\begin{proof}
We first use the gradient computation in Lemma \ref{lemma_reduced_model_gradient} and derive ordinary differential equations for the eigenvalues of the weight matrices $W^{PV}_{31}$ and $W^{KQ}_{13}$. By Assumption~\ref{ass_init_quad}, we have weight matrices in the form for all $t\ge0$:
\[
W^{PV}_{31}(t)=\lambda_V(t)\id_d,\qquad W^{KQ}_{13}(t)=\lambda_K(t)\id_d,
\]
where the dynamics reduce to the two eigenvalues $\lambda_V(t)$ and $\lambda_K(t)$. Consider that the gradient descent initialization satisfies $\theta_0=\alpha_0\theta^\star$ with a constant $\alpha_0\in[3/4,1)$, and take $n = \Theta( d^2 \log^5 d)$.
We will show that the gradients in the eigenbasis admit the following decomposition
\begin{align*}
\nabla_{W^{PV}_{31}} L^{\mathrm{CoT}}(W) = \big(g_V(\lambda_V,\lambda_K)+\Delta^V\big)\id_d,\\
\nabla_{W^{KQ}_{13}} L^{\mathrm{CoT}}(W) = \big(g_K(\lambda_V,\lambda_K)+\Delta^K\big)\id_d,
\end{align*}
where $g_V,g_K$ are explicit main terms, and $\Delta^V,\Delta^K$ are interaction terms. The induced eigenvalue ODEs can be represented by
\[
\dot\lambda_V(t)= -g_V\big(\lambda_V(t),\lambda_K(t)\big) - \Delta^V(t),\qquad \dot\lambda_K(t)= -g_K\big(\lambda_V(t),\lambda_K(t)\big) - \Delta^K(t),
\]
In particular, by expanding the main terms around the construction $\lambda_V=\lambda_K=1$, we will show that
\begin{align*}
g_V(\lambda_V,\lambda_K) &= 36\kappa^2\sum_{i=0}^{T-1}\beta_i(1-\beta_i)\big[(1-\beta_i)(\lambda_V-1)+(1-3\beta_i)(\lambda_K-1)\big]\\
&\qquad + O\big(|\lambda_V-1|^2+|\lambda_K-1|^2\big),\\
g_K(\lambda_V,\lambda_K) &= 36\kappa^2\sum_{i=0}^{T-1}\beta_i(1-3\beta_i)\big[(1-\beta_i)(\lambda_V-1)+(1-3\beta_i)(\lambda_K-1)\big]\\
&\qquad + O\big(|\lambda_V-1|^2+|\lambda_K-1|^2\big),
\end{align*}
where the coefficients depend only on $(T,\kappa,\alpha_0)$, and the weights $\beta_i\in[\alpha_0^2,1)$.

Now, we derive the explicit forms of the ODE. Based on Lemma~\ref{lemma_gradient_structure_quadratic}, if the parameters are initialized as in Assumption~\ref{ass_init_quad}, then under gradient flow all zero blocks remain zero for all $t>0$. Hence, the model dynamics are fully characterized by the two matrices $W^{PV}_{31}(t)$ and $W^{KQ}_{13}(t)$. Then, we write the reduced form of the transformer as in Equation \ref{eq_reduced_transformer_quadratic}. 

The chain-of-thought training loss can be written as
\[
L^{\mathrm{CoT}}(W)
=
\frac12
\mathbb E_{X,\theta^\star}
\Big[
\sum_{i=0}^{T-2}
\|\tilde f_W(\theta_i)-\theta_{i+1}\|_2^2
+
\|\tilde f_W(\theta_{T-1})-\theta^\star\|_2^2
\Big].
\]
with the residuals defined as 
\[
e_i :=
\begin{cases}
\tilde f_W(\theta_i)-\theta_{i+1}, & i=0,\dots,T-2,\\
\tilde f_W(\theta_{T-1})-\theta^\star, & i=T-1.
\end{cases}
\]

Let
\[
\Psi(\theta)
:=
X\Big(
(X^\top W^{KQ}_{13}\theta)^{\odot 3}
-
(X^\top W^{KQ}_{13}\theta)\odot y^\top
\Big).
\]
Then the gradients of the reduced parameters are
\begin{align}
\nabla_{W^{PV}_{31}} L^{\mathrm{CoT}}(W)
&=
\mathbb E\Big[
\sum_{i=0}^{T-1}
-\frac{2\eta}{n}
e_i\Psi(\theta_i)^\top
\Big],
\label{eq:grad_pv}
\\
\nabla_{W^{KQ}_{13}} L^{\mathrm{CoT}}(W)
&=
\mathbb E\Big[
\sum_{i=0}^{T-1}
-\frac{2\eta}{n}
X\Big(
(X^\top W^{PV\top}_{31}e_i)
\odot
\big(3(X^\top W^{KQ}_{13}\theta_i)^{\odot 2}-y^\top\big)
\Big)\theta_i^\top
\Big].
\label{eq:grad_kq}
\end{align}

As in Assumption \ref{ass_init_quad}, assume $W^{PV}_{31}(t)$ and $W^{KQ}_{13}(t)$ are symmetric and share an orthonormal eigenbasis $U$.
Rotating all variables by $U^\top$ leaves the data distribution invariant, so we can work in the basis where $U=\id_d$ and both matrices are diagonal. Thus, the isotropy is preserved. The dynamics reduce to ODEs for the eigenvalues $\lambda_V(t)$ and $\lambda_K(t)$. 

Then, we further write the ODE as the expectation over $\theta^\star$ and $X$. Let $\theta^\star\sim\mathcal N(0,\id_d)$ and initialize $\theta_0=\alpha_0\theta^\star$ with {$\alpha_0\in[3/4,1)$}. Then, gradient descent on the quadratic functions preserves collinearity, so
$\theta_i = \alpha_i\theta^\star, \
\alpha_{i+1}
= \alpha_i -6\eta\|\theta^\star\|_2^2\alpha_i(\alpha_i^2-1)$. Here, $\alpha_i$ are scalars that only depend on the data $\|\theta^\star\|^2$. Let $\bar\alpha_i$ follow the same recursion with $\|\theta^\star\|^2$ replaced by its mean $d$, and let $\beta_i := \bar\alpha_i^2$. The empirical iterates $\theta_i$ stay close to $\alpha_i\theta^\star$, and $\|\theta^\star\|^2$ is close to $d$.

Given $W^{PV}_{31}=\lambda_V\id_d$ and $W^{KQ}_{13}=\lambda_K\id_d$, let
\[
s := 1-\lambda_V\lambda_K^3,\qquad \tau := 1-\lambda_V\lambda_K,
\]
which both vanish at the construction $\lambda_V=\lambda_K=1$. Then, we have 
\[
\Psi(\theta) = \lambda_K^3X(X^\top\theta)^{\odot3}-\lambda_KX\big((X^\top\theta)\odot y^\top\big),
\]
for $i\le T-2$,
\[
e_i = \frac{2\eta}{n}\Big(s\,X(X^\top\theta_i)^{\odot3}-\tau\,X\big((X^\top\theta_i)\odot y^\top\big)\Big),
\]
and $e_{T-1}$ is the same expression plus $\theta_T-\theta^\star$. As the other terms do not depend on $(\lambda_V,\lambda_K)$, $L^{\mathrm{CoT}}$ is exactly a quadratic function of $(s,\tau)$. 

Then, we apply the expectation to the Gaussian moment. for $x\sim\mathcal N(0,\id_d)$ and fixed vectors $v,w\in\mathbb R^d$, we have
$
\mathbb E[x(x^\top v)^3]=3\|v\|^2v,
\qquad
\mathbb E[x(x^\top v)(x^\top w)^2]=\|w\|^2v+2(v^\top w)w.
$
Applying these identities yields the population expectation
\begin{align}
\mathbb E_X[\Psi(\theta)\mid\theta,\theta^\star]
=
n\Big(3\lambda_K^3\|\theta\|^2\theta-\lambda_K\big(\|\theta^\star\|^2\theta+2(\theta^\top\theta^\star)\theta^\star\big)\Big).
\label{eq_psi_expectation}
\end{align}
In particular, at $\theta=\alpha\theta^\star$ the expectations of $\frac1nX(X^\top\theta)^{\odot3}$ and $\frac1nX\big((X^\top\theta)\odot y^\top\big)$ are $3\alpha^3 \|\theta^\star\|^2\theta^\star$ and $3\alpha \|\theta^\star\|^2\theta^\star$.

We keep only the population parts and replace $\|\theta^\star\|^2$ by $d$ and $\alpha_i$ by $\bar\alpha_i$, which defines the main terms. The remaining terms are the interaction terms. Then, taking the exact expectation over
$\theta^\star$ and $X$, the population gradient-flow dynamics of the eigenvalues are given by
\begin{align}
\dot\lambda_V
&=
36\kappa^2\lambda_K\sum_{i=0}^{T-1}\beta_i(\beta_is-\tau)(\lambda_K^2\beta_i-1) - \Delta^V,
\label{eq_lambda_exact_ode}
\\
\dot\lambda_K
&=
36\kappa^2\lambda_V\sum_{i=0}^{T-1}\beta_i(\beta_is-\tau)(3\lambda_K^2\beta_i-1) - \Delta^K.
\label{eq_mu_exact_ode}
\end{align}
The interaction terms $\Delta^V$ and $\Delta^K$ are the differences between the exact expressions and the main terms in \eqref{eq_lambda_exact_ode} and \eqref{eq_mu_exact_ode}. 
On any bounded set of $(\lambda_V,\lambda_K)$, their orders are as follows
\[
|\Delta^V|+|\Delta^K| = O\Big(\frac{1}{\log d}\Big)\Big(\sum_{i=0}^{T-1}\beta_i|\beta_is-\tau|+|s|+|\tau|\Big)+O\big(d^{-aC}\big).
\]
The first part vanishes at the construction where $s=\tau=0$. The second part is set by the length of training examples.
Consequently, the population dynamics close exactly in terms of $(\lambda_V,\lambda_K)$ and the deterministic weights $\beta_i$, up to the interaction terms, yielding a finite-dimensional ODE.

Now, we prove the convergence of training transformers on quadratic functions based on the above forms of ODEs. By Lemma~\ref{lemma_gradient_structure_quadratic}, the block-sparse structure is preserved under gradient flow, so the dynamics are fully characterized by the two reduced matrices
$W^{PV}_{31}(t)$ and $W^{KQ}_{13}(t)$. 
We further use an isotropic symmetry: $W^{KQ}_{13}(t)=\lambda_K(t)\id_d, \ W^{PV}_{31}(t)=\lambda_V(t)\id_d,  \forall t\ge 0$, in order to identify the fixed point for the ODE. Then, the model reduces to scalars $(\lambda_K(t),\lambda_V(t))$.

We now show the convergence in two stages.
In the first stage, using the structure of the ODE and the condition (here enforced by the colinear initialization $\theta_0=\alpha_0\theta^\star$ with $\alpha_0 \in [3/4, 1)$), we show that whenever $(\lambda_K,\lambda_V)$ is outside a small neighborhood of $(1, 1)$, the gradient norm is bounded below by a positive quantity depending at most polylogarithmically on $d$: 
$\|\nabla L^{\mathrm{CoT}}(\lambda_K,\lambda_V)\|_2^2 \;\ge\; c_1L^{\mathrm{CoT}}(\lambda_K,\lambda_V), 
\text{when } \|(\lambda_K,\lambda_V)-(1, 1)\|\ge \delta_0$, for some constants $c_1>0$ and $\delta_0=O(\frac{1}{\log d})$. 
This yields an exponential decrease of $L^{\mathrm{CoT}}$ until the trajectory enters the $\delta_0$-ball around $(1, 1)$, in time
$t_1=O(\log d)$.

In the second stage, inside the $\delta_0$-ball of $(1, 1)$, $L^{\mathrm{CoT}}$ is smooth, and its Hessian is positive definite (this follows from the identifiability of the construction in the assumption). Hence a local PL-condition inequality holds: $\|\nabla L^{\mathrm{CoT}}(\lambda_K,\lambda_V)\|_2^2 \;\ge\; c_2L^{\mathrm{CoT}}(\lambda_K,\lambda_V), 
\text{when } \|(\lambda_K,\lambda_V)-(1, 1)\|\le \delta_0, $for some $c_2>0$. This implies $L^{\mathrm{CoT}}(t)\le L^{\mathrm{CoT}}(t_1)\exp\big(-c_2(t-t_1)\big)$, so reaching $L^{\mathrm{CoT}}(t)\le \varepsilon$ takes an additional
$t_2=O(\log\frac1\varepsilon)$ time. Thus, the transformer converges to the construction as described in Lemma \ref{theorem_construction_three_head}. Following Lemma \ref{theorem_construction_three_head} and Theorem \ref{theorem_noise_stability_linear_self_attention_formal}, we similarly obtain the rate of the inference loss when training converges. 
\end{proof}

\section{Omitted Experimental Results}
\label{sec_additional_experiments}

\subsection{Evaluations of the Stability Measure}
 
Recall that Lemma \ref{thm_test_error_bound} is a worst-case upper bound, and the figures in the main text compare it to the measured loss gap on separate axes. We report the difference in scale explicitly. For each model and each reasoning length $T$, we report six quantities:
\begin{itemize}
    \item The loss gap $|L(f_W;\cD) - L^{\star}(f_W;\cD)|$ between the inference loss and the reference loss;
    \item The value of our bound, i.e., the right-hand side of \eqref{eq_main_bound};
    \item The bound-to-gap ratio, i.e., the ratio of the previous two quantities;
    \item The first-order term $C_1 \sum_{t=1}^{T-1} \rho^{(T)}_t \varepsilon_t$;
    \item The second-order term $\frac{C_1 C_2}{2} \sum_{t=2}^{T} \rho^{(T)}_t \bigl(\sum_{i=1}^{t-1} r_i\bigr)^2$;
    \item The two constants entering the bound: the Lipschitz constant $C_1$ of the loss function, and the Lipschitz constant $C_2$ of the model's input Jacobian.
\end{itemize}
We estimate $C_1$ as the maximum norm of the loss gradient with respect to the model output over the evaluation set, and $C_2$ as the maximum finite-difference change in the input Jacobian over pairs of nearby inputs from the same set. Table \ref{tab_bound_scale} reports all six quantities for Qwen-1.5B, Llama-1B, and Gemma-2B trained on the Bellman-Ford task, for $T = 1, \ldots, 5$. At $T=1$ there is no accumulated error, and both the gap and the bound are zero.
 
We report the results in Table \ref{tab_bound_scale}. First, the bound is relatively tight on Qwen-1.5B, where the bound-to-gap ratio stays between $6$ and $9$ across all $T$. On Llama-1B and Gemma-2B, the ratio reaches $2.3\times 10^{6}$ and $1.5\times 10^{5}$ at $T=5$. Second, this difference is attributed to the constants. $C_1$ is small and comparable across all three models (between $0.01$ and $0.07$), while $C_2$ differs by roughly three orders of magnitude, from around $5$ on Qwen-1.5B to above $3\times10^3$ on Llama-1B. Since $C_2$ multiplies the squared accumulated deviations in the second-order term, a large $C_2$ increases the bound value. Third, the second-order term grows faster in $T$ than the first-order term. For Llama-1B and Gemma-2B, it dominates the bound from $T=3$ onward. On Qwen-1.5B, the first-order term remains the larger of the two.
 
\begin{table}[t!]
\centering
\caption{Decomposition of the bound in \eqref{eq_main_bound} for three models trained on the Bellman-Ford task. {Loss gap denotes $|L(f_W;\cD) - L^{\star}(f_W;\cD)|$, Bound denotes the right-hand side of \eqref{eq_main_bound}, and Ratio denotes the bound value divided by the loss gap. The first- and second-order terms are $C_1\sum_{t=1}^{T-1}\rho^{(T)}_t\varepsilon_t$ and $\frac{C_1C_2}{2}\sum_{t=2}^{T}\rho^{(T)}_t\bigl(\sum_{i=1}^{t-1}r_i\bigr)^2$.} The absolute scale of the bound varies by orders of magnitude across architectures. At $T=1$ there is no accumulated error, so the loss gap and the bound value are zero.}\label{tab_bound_scale}
\small
\setlength{\tabcolsep}{5pt}
\renewcommand{\arraystretch}{1.1}
\begin{tabular}{@{}lcccccccc@{}}
\toprule
 & & & & & \multicolumn{2}{c}{Terms of \eqref{eq_main_bound}} & \multicolumn{2}{c}{Constants} \\
\cmidrule(lr){6-7} \cmidrule(l){8-9}
Model & $T$ & {Loss gap} & Bound & Ratio & First-order & Second-order & {$C_1$} & {$C_2$} \\
\midrule
 \multirow{5}{*}{Qwen-1.5B} & 1 & 0 & 0 & -- & 0 & 0 & 0.01 & 0 \\
  & 2 & 0.08 & 0.68 & 8.0 & 0.68 & $9.5\times 10^{-4}$ & 0.02 & 5.44 \\
  & 3 & 0.25 & 1.5 & 6.1 & 1.3 & 1.7 $\times 10^{-1}$ & 0.03 & 5.83 \\
  & 4 & 0.47 & 3.4 & 7.2 & 2.5 & 8.4 $\times 10^{-1}$ & 0.05 & 5.83 \\
  & 5 & 0.83 & 6.9 & 8.3 & 4.5 & 2.4 $\times 10^{0}$ & 0.07 & 5.83 \\
\midrule
 \multirow{5}{*}{Llama-1B} & 1 & 0 & 0 & -- & 0 & 0 & 0.04 & 0 \\
  & 2 & 0.05 & $3.3\times 10^{3}$ & $6.2\times 10^{4}$ & $2.3\times 10^{3}$ & $9.7\times 10^{2}$ & 0.03 & $3.03\times 10^{3}$ \\
  & 3 & 0.30 & $5.4\times 10^{4}$ & $1.8\times 10^{5}$ & $6.2\times 10^{3}$ & $4.8\times 10^{4}$ & 0.04 & $3.17\times 10^{3}$ \\
  & 4 & 0.85 & $5.5\times 10^{5}$ & $6.4\times 10^{5}$ & $6.4\times 10^{3}$ & $5.4\times 10^{5}$ & 0.05 & $3.18\times 10^{3}$ \\
  & 5 & 1.84 & $4.3\times 10^{6}$ & $2.3\times 10^{6}$ & $5.0\times 10^{4}$ & $4.2\times 10^{6}$ & 0.05 & $3.21\times 10^{3}$ \\
\midrule
 \multirow{5}{*}{Gemma-2B} & 1 & 0 & 0 & -- & 0 & 0 & 0.00 & 0 \\
  & 2 & 0.09 & $1.1\times 10^{3}$ & $3.6\times 10^{4}$ & $7.3\times 10^{2}$ & $4.3\times 10^{2}$ & 0.01 & $1.05\times 10^{3}$ \\
  & 3 & 0.48 & $3.5\times 10^{4}$ & $7.3\times 10^{4}$ & $1.5\times 10^{3}$ & $3.3\times 10^{4}$ & 0.03 & $1.15\times 10^{3}$ \\
  & 4 & 1.21 & $7.6\times 10^{4}$ & $6.2\times 10^{4}$ & $2.0\times 10^{3}$ & $7.4\times 10^{4}$ & 0.03 & $1.24\times 10^{3}$ \\
  & 5 & 2.11 & $3.2\times 10^{5}$ & $1.5\times 10^{5}$ & $2.7\times 10^{3}$ & $3.2\times 10^{5}$ & 0.06 & $1.37\times 10^{3}$ \\
\bottomrule
\end{tabular}
\end{table}

\begin{table}[t!]
\centering
\caption{Decomposition of \eqref{eq_main_bound} under quantization-aware training, for Qwen-1.5B at $T=5$. Lowering the bit-width monotonically reduces the error amplification factor {$\sum_{t=1}^{T-1}\rho^{(T)}_t$} and increases the {per-step prediction errors, summarized by $\sum_{t=1}^{T-1}\varepsilon_t$}; the bound and the test accuracy are determined by the balance of the two. The best bit-width per task is in bold.}\label{tab_bitwidth}
\small
\setlength{\tabcolsep}{6pt}
\renewcommand{\arraystretch}{1.1}
\begin{tabular}{@{}lcccccc@{}}
\toprule
Task & Bit-width & Test accuracy (\%) & {Loss gap} & Bound & {$\sum_{t} \rho^{(T)}_t$} & {$\sum_{t} \varepsilon_t$} \\
\midrule
 \multirow{5}{*}{Bellman-Ford} & 1 & 69.6 & 2.0 & 21.8 & 143.6 & 3.6 \\
  & 2 & 77.3 & 1.5 & 12.1 & 157.2 & 1.9 \\
  & 3 & 79.5 & 1.4 & 10.8 & 159.6 & 1.9 \\
  & \textbf{4} & \textbf{80.3} & 0.6 & 7.3 & 166.7 & 1.8 \\
  & 32 & 75.3 & 1.8 & 13.4 & 177.0 & 2.3 \\
\midrule
 \multirow{5}{*}{Dijkstra} & \textbf{1} & \textbf{76.2} & 0.9 & 2.1 & 9.3 & 2.9 \\
  & 2 & 75.3 & 0.9 & 2.4 & 9.4 & 2.9 \\
  & 3 & 74.2 & 1.1 & 3.3 & 13.9 & 2.7 \\
  & 4 & 73.8 & 1.6 & 5.2 & 22.6 & 1.9 \\
  & 32 & 73.1 & 2.3 & 14.2 & 34.4 & 1.8 \\
\midrule
 \multirow{5}{*}{Prim} & \textbf{1} & \textbf{66.3} & 1.1 & 1.1 & 11.9 & 3.0 \\
  & 2 & 65.6 & 1.3 & 1.4 & 14.7 & 3.0 \\
  & 3 & 64.7 & 1.7 & 1.5 & 20.1 & 2.8 \\
  & 4 & 63.5 & 2.0 & 2.0 & 30.1 & 2.5 \\
  & 32 & 62.3 & 2.1 & 2.7 & 69.8 & 2.3 \\
\bottomrule
\end{tabular}
\end{table}

\begin{table}[t!]
\centering
\caption{The error amplification factor tracks the {inference loss} across base models. We fine-tune Llama-1B and Gemma-2B on Bellman-Ford ($T=5$) and report both terms of \eqref{eq_main_bound} together with the estimated constants; {columns are as in Table~\ref{tab_bound_scale}}. Within each model, the ranking of methods by our bound matches their ranking by the loss gap, even though the absolute scale of the bound differs by orders of magnitude between the two models.}\label{tab_cross_model}
\small
\setlength{\tabcolsep}{5pt}
\renewcommand{\arraystretch}{1.1}
\begin{tabular}{@{}llcccccc@{}}
\toprule
 & & & & \multicolumn{2}{c}{Terms of \eqref{eq_main_bound}} & \multicolumn{2}{c}{Constants} \\
\cmidrule(lr){5-6} \cmidrule(l){7-8}
Model & Method & {Loss gap} & Bound & First-order & Second-order & {$C_1$} & {$C_2$} \\
\midrule
 \multirow{3}{*}{Llama-1B} & SFT-CoT & 1.84 & $4.3\times 10^{6}$ & $5.0\times 10^{4}$ & $4.2\times 10^{6}$ & 0.050 & $3.21\times 10^{3}$ \\
  & Coconut & 0.699 & $1.1\times 10^{5}$ & $1.3\times 10^{3}$ & $1.1\times 10^{5}$ & 0.077 & $1.63\times 10^{3}$ \\
  & Alg.~\ref{alg_quant_cot} & \textbf{0.491} & $\mathbf{1.3\times 10^{4}}$ & $7.4\times 10^{2}$ & $1.2\times 10^{4}$ & 0.091 & $1.79\times 10^{3}$ \\
\midrule
 \multirow{3}{*}{Gemma-2B} & SFT-CoT & 2.11 & $3.2\times 10^{5}$ & $2.7\times 10^{3}$ & $3.2\times 10^{5}$ & 0.060 & $1.37\times 10^{3}$ \\
  & Coconut & 0.662 & $1.5\times 10^{4}$ & $1.34\times 10^{2}$ & $1.5\times 10^{4}$ & 0.094 & $9.27\times 10^{2}$ \\
  & Alg.~\ref{alg_quant_cot} & \textbf{0.091} & $\mathbf{4.9\times 10^{3}}$ & 26.6 & $4.9\times 10^{3}$ & 0.064 & $4.78\times 10^{2}$ \\
\bottomrule
\end{tabular}
\end{table}

In Figure \ref{fig_scaling_other_models}, we measure the loss gap and the error amplification factor {$\sum_{t=1}^{T-1}\rho^{(T)}_t$} for $T$ ranging from 1 to 5. We report the results evaluated with Gemma-2B and Llama-1B models on the Bellman-Ford and Prim's algorithms. The {inference loss increases significantly with the number of steps}. The error amplification factor scales exponentially with $T$, closely tracking the growth in the inference loss. 

\begin{figure*}[h!]
\centering
    \begin{minipage}[t!]{0.3\textwidth}
        \centering
        \includegraphics[width=\linewidth]{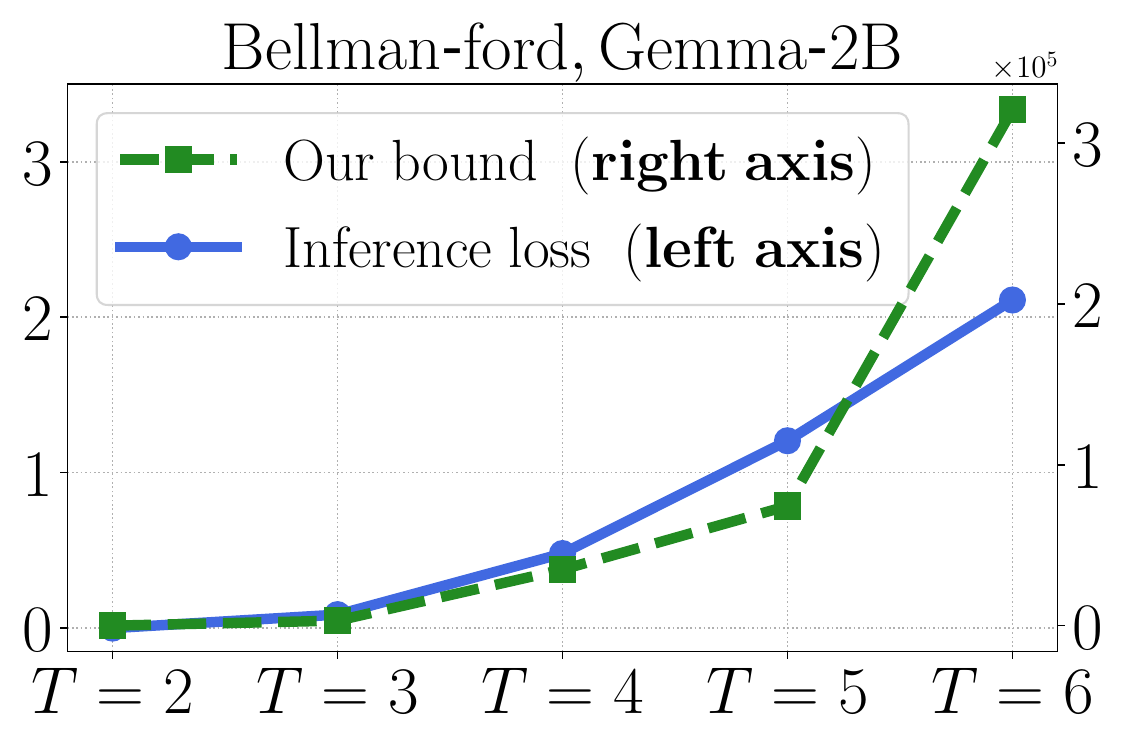}
    \end{minipage}\hspace{0.15\textwidth}
    \begin{minipage}[t!]{0.3\textwidth}
        \centering
        \includegraphics[width=\linewidth]{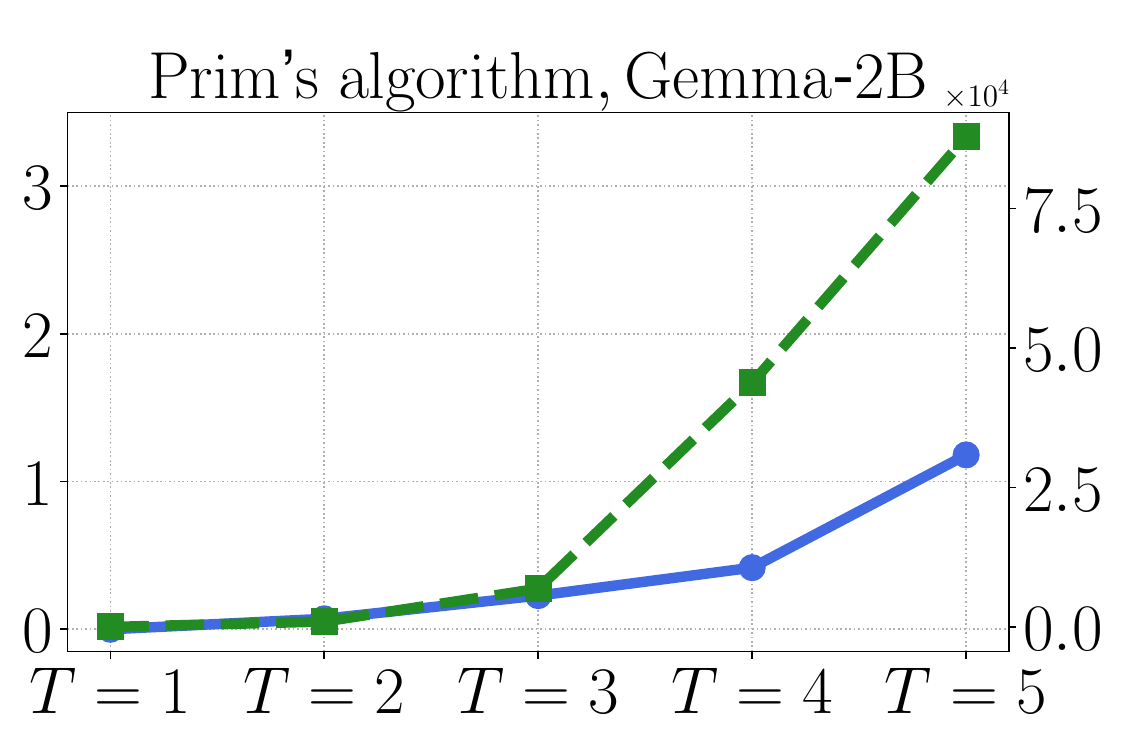}
    \end{minipage}\vfill
    \begin{minipage}[t!]{0.3\textwidth}
        \centering
        \includegraphics[width=\linewidth]{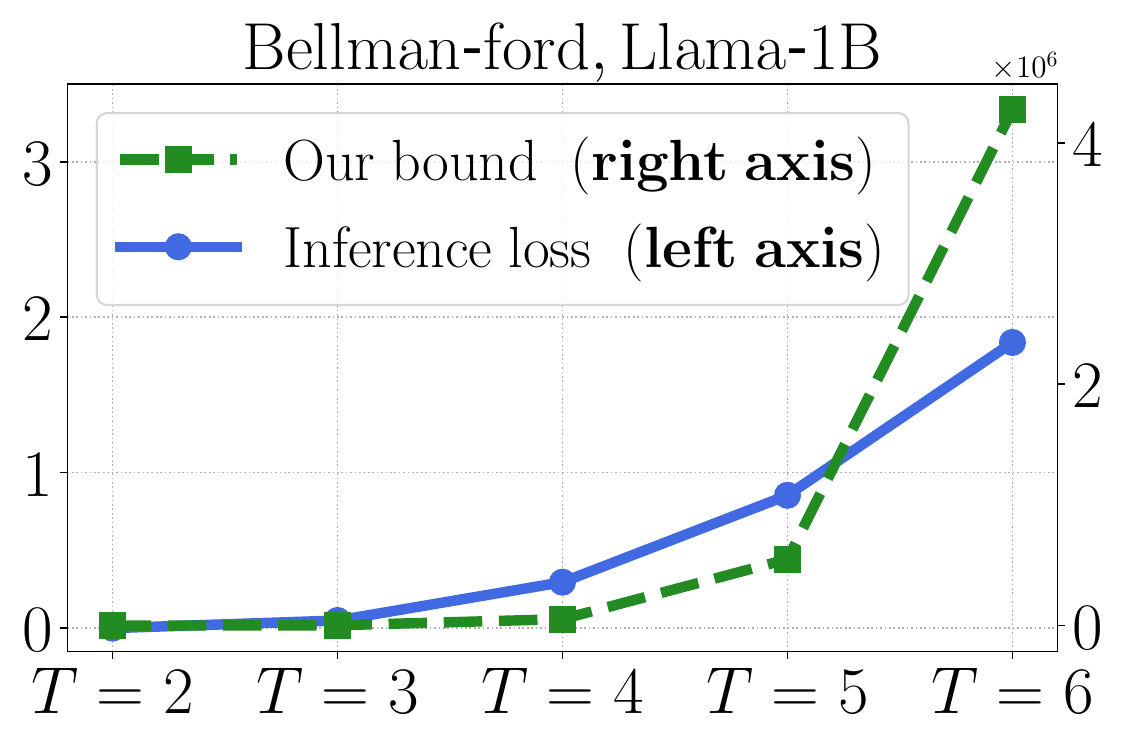}
    \end{minipage}\hspace{0.15\textwidth}
    \begin{minipage}[t!]{0.3\textwidth}
        \centering
        \includegraphics[width=\linewidth]{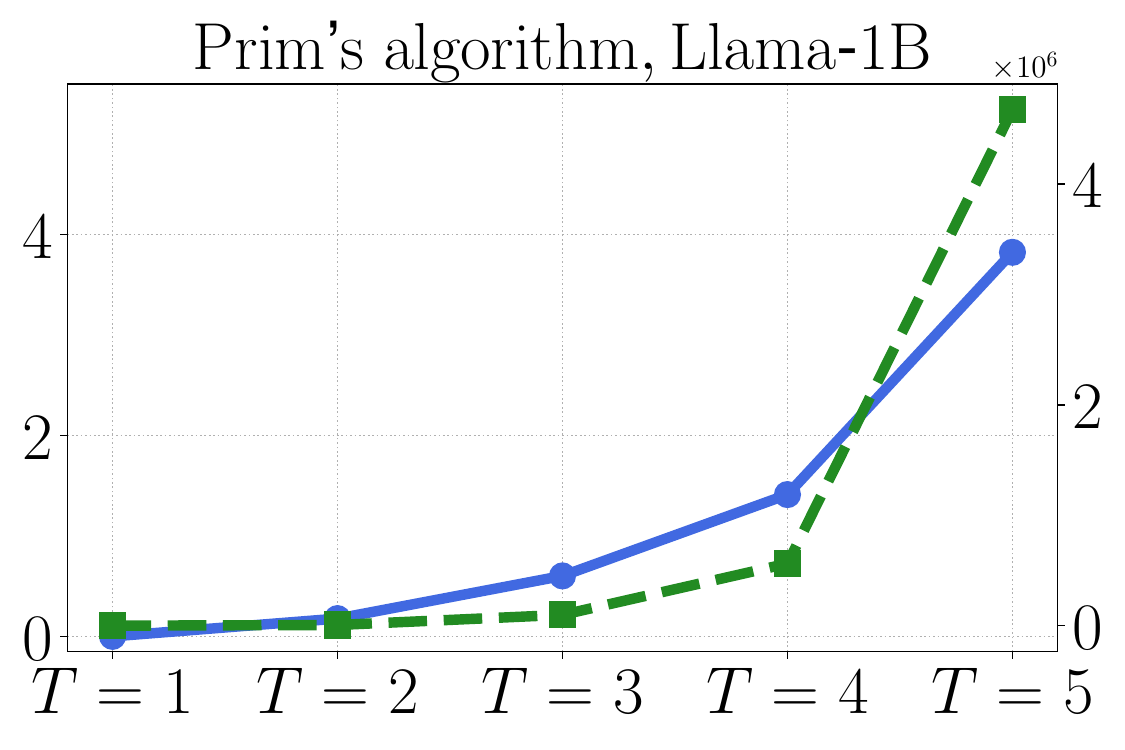}
    \end{minipage}
    \caption{We illustrate the loss gap and our bound in \eqref{eq_main_bound}, which can scale exponentially across generation steps, closely tracking the growth of the inference loss. Results are evaluated on Bellman-Ford and  Prim's algorithm, using Gemma-2B and Llama-1B models. The two quantities are plotted on separate axes and differ in absolute scale; see Table \ref{tab_bound_scale} for the bound-to-gap ratios.}\label{fig_scaling_other_models}
\end{figure*}

\paragraph{Evaluations of the bound under quantization.} 
Table~\ref{tab_bitwidth} decomposes \eqref{eq_main_bound} across bit-widths for Qwen-1.5B on three tasks. Lowering the bit-width has two opposing effects on the bound. 
First, it reduces the error amplification factor: from full precision to $1$ bit, $\sum_t \rho^{(T)}_t$ decreases by $3.7\times$ on the Dijkstra task, $5.9\times$ on the Prim task, and $1.2\times$ on the Bellman-Ford task. 
Second, the quantized model fits the training data less well, so the per-step prediction errors tend to grow. 

In practice, we observe that QAT reduces the error amplification factor more significantly than it increases prediction errors. As a result, lower bit-width quantization typically achieves a reduction in the bound, leading to improved test accuracy. On Dijkstra and Prim, the prediction errors increase by only $1.6\times$ and $1.3\times$. The bound decreases monotonically as the bit-width is lowered, by $6.8\times$ and $2.5\times$ at $1$ bit. The test accuracy improves by $3.1\%$ and $4.0\%$ over full precision. 
On Bellman-Ford, the reduction of $\sum_t \rho^{(T)}_t$ is small, while the prediction errors double from  $4$ bits to $1$ bit. Therefore, the bound and the test accuracy are both best at $4$ bits. The bit-width at which the growth of the error levels begins to dominate depends on the task, and we select it on the validation set.

\paragraph{Evaluations of our bound across algorithms.}
To evaluate whether the relationship between our bound and the inference loss extends beyond Qwen-1.5B, we apply Algorithm~\ref{alg_quant_cot} to Llama-1B and Gemma-2B on Bellman-Ford with $T=5$, and compare it with standard chain-of-thought fine-tuning (SFT-CoT) and continuous chain-of-thought (Coconut)~\citep{hao2025training}. As shown in Table~\ref{tab_cross_model}, our algorithm attains the lowest loss gap and the lowest bound on both models. Compared with Coconut, it reduces the loss gap by $58\%$ on average and the bound by $78\%$ on average. Compared with SFT-CoT, it lowers the bound by $330\times$ and $65\times$. This confirms that the reduction in test loss correlates with the reduction of our bound components. 

\paragraph{Numerical evaluation of Jacobian norms.}
We further analyze the training dynamics by examining the Hessian trace of models fine-tuned with chain-of-thought on linear and quadratic functions. Specifically, we evaluate the norm of the Jacobian matrix product as a function of the reasoning length $T$ and report results for both settings. As shown in Figure~\ref{fig_hessian_trace}, as the reasoning length $T$ increases, the norm of the Jacobian matrix product converges to a value. This behavior is consistent and aligned with our analysis of the gradient descent update rule for linear and quadratic functions.

\begin{figure*}[t!]
\centering

\begin{minipage}[t]{0.485\textwidth}
  \centering
  \includegraphics[width=0.5\linewidth]{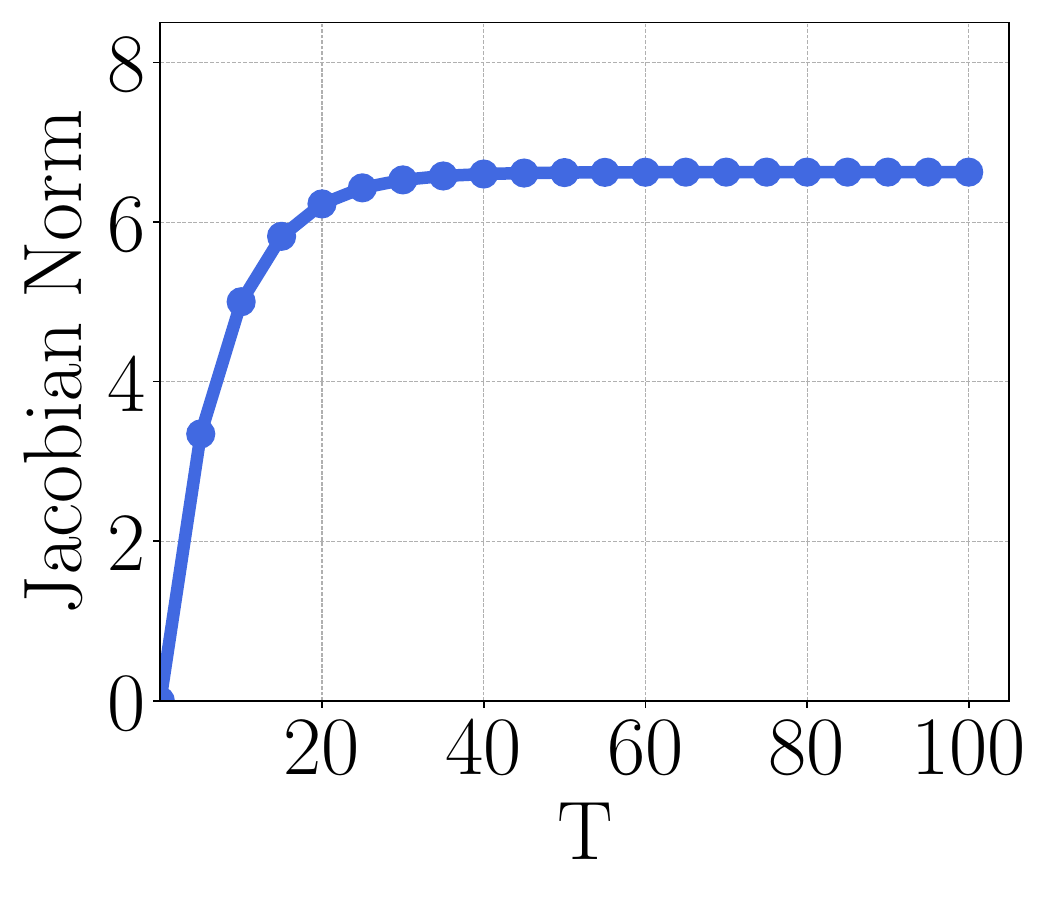}
  \subcaption{Jacobian matrix norm on linear functions}
  \label{fig_jacobian_linear}
\end{minipage}
\hfill
\begin{minipage}[t]{0.485\textwidth}
  \centering
  \includegraphics[width=0.5\linewidth]{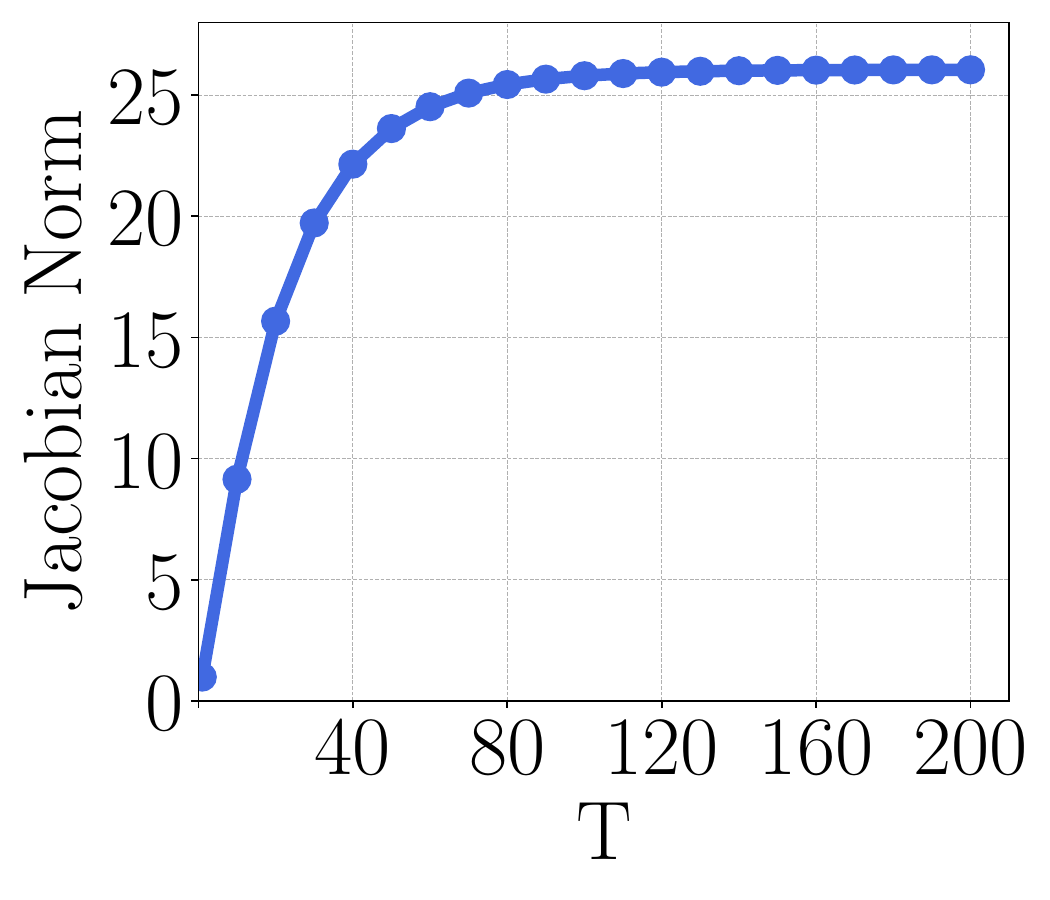}
  \subcaption{Jacobian matrix norm in quadratic functions}
  \label{fig_jacobian_quadratic}
\end{minipage}
\caption{We illustrate how transformers realize multi-step gradient descent behavior when trained on linear and quadratic functions. We verify that the Jacobian matrix norm is a converging series on linear functions and quadratic functions. The Jacobian norm of linear functions is $\sum_{i=1}^T \norm{(I - \frac{\eta}{n}X X^\top)^i}_2^2$ and the Jacobian norm of quadratic functions is $\sum_{i=1}^T \norm{(I - \frac{2\eta}{n}X D(\theta_i) X^\top)^i}_2^2$.}
\label{fig_hessian_trace}
\end{figure*}

\subsection{Sample Complexity Comparison}

We compare models fine-tuned with a varied number of training examples between $1000, 2000$, and $5000$. We also evaluate a model fine-tuned without intermediate steps. After training, we evaluate the noise stability using the fine-tuned model. We inject isotropic Gaussian noise with standard deviation $\sigma=0.005$ into the model weights and measure the increase in the inference loss. We compute the average loss increase with $10$ random seeds. We evaluate the noise stability of models fine-tuned with chain-of-thought on a range of algorithmic reasoning tasks, including DFS, Dijkstra, Prim, and DAG shortest path. 

First, we find that increasing the number of training samples leads to improved noise stability across all tasks. This suggests a strong correlation between generalization performance and robustness to weight noise.
For the algorithms of the same task, we noted a difference in noise stability. Specifically, models fine-tuned with the Dijkstra algorithm have more than $2\times$ lower Hessian traces than the Bellman-Ford algorithm. Additionally, models fine-tuned by Kruskal's algorithm yield $15\times$ lower Hessian trace values than Prim's. 

Second, we study the effect of intermediate steps on the sample complexity. We randomly subsample the intermediate steps $T$ to $20$, $10$, $5$, $2$, and $0$ where $T=0$ indicates training without intermediate steps. We find that a larger $T$ substantially reduces sample complexity. Using intermediate steps consistently achieves better sample complexity compared to training without intermediate steps.

\paragraph{Hyperparameters.}
We report the hyperparameters used in our experiments in Table \ref{tab_hyperparams}. 

\begin{table}[h!]
  \centering
  \caption{Summary of the hyperparameters used for each reasoning task. The exact configurations are selected from the defined search spaces during supervised fine-tuning.}\label{tab_hyperparams}
  \resizebox{\textwidth}{!}{
  \begin{tabular}{lcccccc}
    \toprule
    Task & Quantization & Optimizer & Learning rate & LoRA Rank & Sampling factor $\lambda$ & Latent steps $k$ \\ \midrule
    Dijkstra     & 1-bit & AdamW & $2\times10^{-5}$ & 16 & 0.1 & 1 \\
    Bellman-Ford & 4-bit & AdamW & $2\times10^{-5}$ & 16 & 0.2 & 1 \\
    Prim         & 4-bit & AdamW & $2\times10^{-5}$ & 16 & 0.1 & 1 \\
    BFS          & 1-bit & AdamW & $2\times10^{-5}$ & 16 & 0.1 & 1 \\
    DFS          & 3-bit & AdamW & $2\times10^{-5}$ & 16 & 0.1 & 1 \\
    Cyclic       & 1-bit & AdamW & $2\times10^{-5}$ & 16 & 1.0 & 1 \\
    Symmetry     & 1-bit & AdamW & $2\times10^{-5}$ & 16 & 1.0 & 1 \\
    \bottomrule
  \end{tabular}
  }
\end{table}

\subsection{Discussions}

Our formulation targets supervised fine-tuning on ground-truth steps. Reinforcement learning methods like Group Relative Policy Optimization or Proximal Policy Optimization optimize over the model's own rollouts. Our analysis extends to reinforcement learning in two key ways. First, because test-time generation is independent of the training objective, our stability analysis applies equally to models trained with reinforcement learning. Second, our work can be applied to study whether policy-gradient training implicitly regularizes input Jacobian norms along high-reward trajectories. We leave the formal extension of our bounds to the reinforcement learning setting for future work.

Lastly, we discuss several limitations of this work. %
First, our method regularizes these Jacobian norms during training. Future work could develop test-time regularization strategies that adapt to each input example during inference. Second, for black-box models, where gradients and Jacobians are not directly accessible, proxy stability measures based on noise perturbations may provide a practical alternative. Finally, extending the Jacobian-based error accumulation analysis to broader language-model tasks, including tool use, generation, and question answering, remains an interesting direction.

{
}

\end{document}